\documentclass[11pt]{article}
\usepackage[T1]{fontenc}
\usepackage[utf8]{inputenc}
\usepackage[english]{babel}
\usepackage{csquotes}
\usepackage[margin=1in]{geometry}

\usepackage{amsmath,amssymb,amsfonts,amsthm,mathtools,amsxtra,bm,mathrsfs}
\usepackage{aliascnt}
\usepackage{xparse}
\usepackage{bbm,dsfont}

\usepackage{array,tabularx,booktabs}
\usepackage{graphicx,subfig}
\usepackage{enumitem}
\usepackage{tikz-cd}

\usepackage{microtype}
\usepackage[normalem]{ulem}
\usepackage[dvipsnames]{xcolor}
\usepackage[color=yellow]{todonotes}
\usepackage{mdframed}

\usepackage{hyperref}
\hypersetup{
  colorlinks=true,
  breaklinks=true,
  linkcolor=blue,
  urlcolor=blue,
  anchorcolor=blue,
  citecolor=blue
}
\usepackage[nameinlink,capitalise,noabbrev]{cleveref}

\usepackage[
  backend=bibtex,
  style=alphabetic,
  maxcitenames=3,
  maxbibnames=99,
  sorting=nyt,
  url=false,
  doi=false,
  backref=false
]{biblatex}

\newcommand{\R}{\mathbb R}

\newcommand{\Law}{\mathrm{Law}}
\newcommand{\KL}{\mathrm{KL}}
\newcommand{\KSD}{\mathrm{KSD}}

\newcommand{\E}{\mathbb E}

\newtheorem{theorem}{Theorem}[section]

\newaliascnt{proposition}{theorem}
\newtheorem{proposition}[proposition]{Proposition}
\aliascntresetthe{proposition}

\newaliascnt{lemma}{theorem}
\newtheorem{lemma}[lemma]{Lemma}
\aliascntresetthe{lemma}

\theoremstyle{definition}
\newaliascnt{definition}{theorem}

\aliascntresetthe{definition}

\newaliascnt{exercise}{theorem}

\aliascntresetthe{exercise}

\newaliascnt{assumption}{theorem}
\newtheorem{assumption}[assumption]{Assumption}
\aliascntresetthe{assumption}

\theoremstyle{remark}
\newaliascnt{remark}{theorem}
\newtheorem{remark}[remark]{Remark}
\aliascntresetthe{remark}

\newaliascnt{example}{theorem}

\aliascntresetthe{example}

\newaliascnt{corollary}{theorem}
\newtheorem{corollary}[corollary]{Corollary}
\aliascntresetthe{corollary}

\numberwithin{equation}{section}

\title{Quantitative Target Convergence and Uniform-in-Time\\
Propagation of Chaos for Langevin-Regularized SVGD}
\author{
Sayan Banerjee%
\thanks{Department of Statistics and Operations Research,
University of North Carolina at Chapel Hill,
318 Hanes Hall, CB\# 3260,
Chapel Hill, NC 27599-3260, USA.
Email: \texttt{sayan@email.unc.edu}.}
\and
Dohyeon Kim%
\thanks{Department of Computing and Mathematical Sciences,
California Institute of Technology,
1200 East California Boulevard, MC 305-16,
Pasadena, CA 91125, USA.
Email: \texttt{dohyeon@caltech.edu}.}
}
\date{}

\begin{document}
\maketitle

\begin{abstract}
We establish quantitative convergence to the target and uniform-in-time
propagation of chaos for Langevin-regularized Stein variational gradient
descent.  The Stein interaction need not be small relative to the confining
Langevin drift and does not generally yield a contractive particle coupling.
At the mean-field level, the
Stein and Langevin components dissipate the same relative entropy in the
kernel-induced Stein and \(2\)-Wasserstein geometries, producing,
respectively, the squared kernel Stein discrepancy and the relative Fisher
information.  Under a log-Sobolev inequality for the target, this gives
exponential last-iterate convergence in time.  We derive a
corresponding finite-particle entropy identity relative to the product target,
which yields exponential-in-time convergence for the empirical measure to the target up to polynomial sampling
errors.

For propagation of chaos, we develop two complementary finite-time
approaches.  A synchronous coupling, combined with exponential moment estimates for the nonlinear mean-field diffusion process, yields explicit single-exponential bounds in Wasserstein distance and kernel Stein discrepancy (KSD).
Moving-product entropy gives joint-law relative entropy control with respect to the evolving mean-field product law and,
through entropy superadditivity and concentration, fixed-marginal
relative entropy and total variation bounds and empirical KSD estimates.
Under an additional \(T_2\) inequality for the initial law, it also provides an
alternative route to Wasserstein bounds.  Combining these finite-time
estimates with target convergence rates at a logarithmic (in sample size) cutoff time gives
polynomial uniform-in-time propagation of chaos rates in expectation for empirical KSD and
\(W_2^2\), and for fixed-marginal total variation and \(W_2^2\).  All bounds
control the last iterate in physical time.  We also compare the two
finite-time mechanisms and exhibit regimes in which each produces the
sharper polynomial exponent.\\

\noindent \textbf{AMS 2020 subject classifications:} Primary 60K35; Secondary 60J60, 35Q84, 65C35.\newline

\noindent \textbf{Keywords:} Stein Variational Gradient Descent; Langevin regularization; propagation of chaos; long-time behavior; kernel Stein discrepancy; Wasserstein distance; mean-field limit; interacting particle systems; entropy dissipation; log-Sobolev inequality; transportation inequality.

\end{abstract}

\section{Introduction}\label{sec:introduction}

\subsection{Background: Deterministic and Langevin-regularized SVGD}

Stein variational gradient descent (SVGD) is an interacting-particle
method for obtaining approximate samples from a target probability law
\(
    \pi(dx)=Z^{-1}e^{-V(x)}\,dx
\)
by evolving particles through a kernelized Stein velocity \cite{liuWang2016}.  Given a positive-definite
kernel \(k\), the particles are transported by a velocity that combines
attraction toward regions of high target probability with a repulsive
kernel interaction.  At the mean-field level, SVGD is a
gradient flow of the relative entropy in a kernel-dependent Stein geometry
\cite{liu2017,duncanNuskenSzpruch2023}.  In particular, if
\[
    v_\rho(x)
    :=\int_{\R^d}
       \{-k(x,y)\nabla V(y)+\nabla_2k(x,y)\}\,\rho(dy),
\]
then the deterministic mean-field equation
\(
    \partial_t\mu_t=-\nabla\cdot(\mu_t v_{\mu_t})
\)
formally satisfies
\[
    \frac{d}{dt}\KL(\mu_t\|\pi)
    =-\KSD^2(\mu_t\|\pi),
\]
where $\KL(\rho\|\pi):=\int_{\R^d} \log\Big(\frac{d\rho}{d\pi}\Big)\,d\rho$ denotes the Kullback-Leibler divergence (relative entropy) and $\KSD(\rho \| \pi)$ denotes the kernel Stein discrepancy~\cite{liu2016kernelized,chwialkowski2016kernel} given by the norm of $v_{\rho}(\cdot)$ in a vector-valued reproducing kernel Hilbert space (RKHS) $\mathcal H^d$ generated by \(k\); see \eqref{eq:one-sample-ksd}.
At the mean-field level, the entropy-dissipation identity relates convergence of $\mu_t$ to $\pi$ to the decay of $\KSD$. Integration by parts represents the projected gradient flow on $\mathcal H^d$ through the velocity $v_\rho$, whose integral representation depends linearly on $\rho$. Replacing $\rho$ by an empirical measure gives an implementable particle approximation of the mean-field flow. The finite-particle continuous-time dynamics takes the following form:
\begin{equation}\label{eq:svgd-continuous}
\dot x_i^N(t)
=
-\frac1N\sum_{j=1}^N
k\bigl(x_i^N(t),x_j^N(t)\bigr)\nabla V\bigl(x_j^N(t)\bigr)
+\frac1N\sum_{j=1}^N
\nabla_2 k\bigl(x_i^N(t),x_j^N(t)\bigr), \ 1\le i\le N, \, t \ge 0.
\end{equation}

This is a \emph{deterministic} evolution equation; randomness enters through the particle initialization. Standard Markov chain Monte Carlo methods typically introduce randomness at successive transitions. SVGD is deterministic conditional on the initial particle configuration: the initial empirical measure is transported by an interacting-particle flow.

The long-time behavior for deterministic SVGD has developed along both mean-field and
finite-particle directions.  At the mean-field level, quantitative
time-averaged KSD convergence has been established under various smoothness
and transport assumptions \cite{lulu2019,korba2020,salimSunRichtarik2022,
sunKaragulyanRichtarik2023}.  More recent work obtains exponential or
last-iterate convergence under Stein log-Sobolev and related coercivity
inequalities for suitably chosen kernels
\cite{carrilloSkrzeczkowskiWarnett2024,
chizatColomboColomboFernandezReal2026}.  At the particle level,
\cite{shiMackey2023} gave the first non-asymptotic finite-particle KSD
guarantee, while \cite{banerjee2024} obtained
near-parametric time-averaged KSD rates, Wasserstein bounds, and long-time
marginal propagation of chaos.  The cutoff approach of \cite{balasubramanian2026}, which is conceptually related to the method of \cite{glasgow2026uniform}, subsequently yielded uniform-in-averaging-time
propagation of chaos in broad discrepancies, together with uniform-in-physical-time rates for special finite-rank target--kernel pairs. Uniform-in-time rates had previously been obtained for Gaussian targets and bilinear kernels by \cite{liu2023towards} using specific properties of such target--kernel pairs. \cite{he2026finite} extended the methods of \cite{banerjee2024} to a resolvent-type regularization of SVGD introduced in \cite{he2025regularized}, and derived finite-particle guarantees in Fisher information.

The purely transport-type dynamics of deterministic SVGD create several
theoretical and practical limitations.  The strongest available
finite-particle bounds, although nearly parametric in \(N\), decay only
algebraically in the time horizon and apply to \emph{time-averaged}
empirical measures \cite{banerjee2024}.  Quantitative last-iterate
convergence to the target in physical time has only recently been established
at the mean-field level, using kernels with strong high-frequency coercivity,
such as weighted Bessel and Riesz--Coulomb-type kernels
\cite{carrilloSkrzeczkowskiWarnett2024,
chizatColomboColomboFernandezReal2026}; no comparable general result is
currently available for the original finite-particle dynamics.  A further
practical concern is \emph{mode collapse}: the kernel-induced repulsion may
become too weak, particularly in high dimensions or for well-separated
multimodal targets, causing the particles to underestimate the target
variance or to concentrate on only a subset of its modes
\cite{zhuo2018message,dangelo2021annealed,ba2022variance}.

Closely related noise-regularized particle optimization dynamics appeared
earlier in stochastic particle-optimization sampling (SPOS)
\cite{zhang2020stochastic}. Here we study the 
\emph{Langevin-regularized (noisy)} SVGD dynamics, which is the continuous-time version of the noisy-SVGD recursion analyzed in \cite{priser2024}. Its
particle dynamics are given by
\begin{equation}\label{eq:intro-particle}
    dX_i^N(t)
    =\left\{
       \frac1N\sum_{j=1}^N B(X_i^N(t),X_j^N(t))
       -\varepsilon\nabla V(X_i^N(t))
     \right\}dt
     +\sqrt{2\varepsilon}\,dW_i(t),
\end{equation}
for $1\le i\le N, \, t \ge 0$, where $\{W_i\}$ are independent standard Brownian motions,
\[
    B(x,y):=-k(x,y)\nabla V(y)+\nabla_2k(x,y),
\]
and \(\varepsilon>0\) is the regularization strength.   
The corresponding
nonlinear law solves
\begin{equation}\label{eq:intro-mean-field}
    \partial_t\mu_t
    =-\nabla\cdot(\mu_t v_{\mu_t})
      +\varepsilon\nabla\cdot\left(
         \mu_t\nabla\log\frac{\mu_t}{\pi}
       \right).
\end{equation}
The two components of the dynamics dissipate the same relative entropy in
different geometries.  In the integrated sense of
Proposition~\ref{prop:mf-entropy}, and equivalently for almost every
\(t>0\),
\begin{equation}\label{eq:intro-entropy-dissipation}
    \frac{d}{dt}\KL(\mu_t\|\pi)
    =-\KSD^2(\mu_t\|\pi)-\varepsilon I(\mu_t\|\pi),
\end{equation}
where $I(\rho \| \pi) := \int_{\R^d}\left\|\nabla \log\Big(\frac{d\rho}{d\pi}\Big)\right\|^2\,d\rho$ denotes the Fisher information. The first term represents dissipation in the kernel-induced Stein geometry, whereas the second retains the usual Wasserstein-2 gradient-flow structure.
Under a log-Sobolev inequality (LSI) for \(\pi\), the second term yields
exponential convergence to the target.  Priser, Bianchi, and Salim
\cite{priser2024} study the corresponding discrete-time noisy-SVGD
recursion.  By comparing its trajectories with the associated
McKean--Vlasov dynamics, they characterize the long-time limit set of the
finite-particle empirical-measure process and show that this set approaches
the target, in a suitable sense, as the number of particles tends to
infinity.  They also show theoretically and numerically that the Langevin
regularization prevents the mode-collapse behavior observed for
deterministic SVGD.

A complementary recent result is due to Hong, Liu, and Yang
\cite{hongLiuYang2026}.  They prove quantitative strong propagation of chaos
on each fixed interval \([0,T]\) for a one-dimensional
stochastic-SVGD-type system with superlinear polynomial kernels, using a
general pseudo-monotonicity framework and high polynomial moments.  Their
result allows substantially less regular interaction growth than the
bounded-kernel setting considered here, but it does not address convergence
to the target or uniformity over an infinite physical-time horizon.

Table~\ref{tab:related-work} distinguishes the results most directly related
to the present paper. Each result concern different dynamics or conditions on the kernel or moments.

\begin{table}[t]
\centering
\caption{Comparison with closely related work and the main-text results of
the present paper.}
\label{tab:related-work}
\scriptsize
\setlength{\tabcolsep}{3pt}
\renewcommand{\arraystretch}{1.18}
\begin{tabular}{p{0.15\textwidth}p{0.22\textwidth}p{0.25\textwidth}p{0.29\textwidth}}
\hline
Work & Dynamics & Principal conditions & Quantitative conclusion \\
\hline
SPOS \cite{zhang2020stochastic}
& Closely related noisy particle-optimization/SVGD system
& Strong monotonicity and a positive contraction margin
& Continuous-time one-particle uniform-in-time \(W_1=O(N^{-1/2})\), together with discrete-time approximation bounds \\
Priser--Bianchi--Salim \cite{priser2024}
& Decreasing-step noisy-SVGD recursion associated with the continuous
Langevin-regularized dynamics \eqref{eq:intro-particle}
& Qualitative well-posedness and long-time assumptions
& Characterization of the empirical-measure limit set and its large-\(N\) target consistency; no explicit rate in \(N\) \\
Deterministic cutoff \cite{balasubramanian2026}
& Deterministic SVGD
& Target--kernel-specific finite- and long-time estimates
& Uniform-in-averaging-time rates in general discrepancies; uniform-in-physical-time parametric rates for special finite-rank target--kernel pairs \\
Hong--Liu--Yang \cite{hongLiuYang2026}
& One-dimensional stochastic-SVGD-type dynamics with superlinear polynomial kernels
& Pseudo/local monotonicity and sufficiently high polynomial moments
& Quantitative strong tagged-particle propagation of chaos on every fixed
\([0,T]\); no target-convergence or infinite-horizon result \\
Current paper
& Langevin-regularized SVGD \eqref{eq:intro-particle}
& Target LSI, smooth bounded kernels, and concentration assumptions for strong finite-time bounds; no small-interaction condition
& Polynomial uniform-in-physical-time rates for expected empirical KSD and
\(W_2^2\), and for fixed-marginal TV and \(W_2^2\); finite-time
fixed-marginal KL control \\
\hline
\end{tabular}
\end{table}

The distinction between averaging time and physical time is important.
For general deterministic kernels, \cite{balasubramanian2026} controls
time-averaged empirical measures uniformly in the averaging horizon.  Its uniform-in-physical-time conclusions rely on finite-dimensional
closure for special finite-rank target--kernel pairs. The bounds proved
here instead concern the last iterate at every physical time.

In the current paper, we quantify the long-time finite-particle behavior of
noisy SVGD by obtaining \emph{rates of convergence to the target} that are
polynomial in $N$ and exponential in $t$.  We combine these estimates with
single-exponential moving-product entropy and synchronous coupling bounds, and then apply the cutoff argument of \cite{balasubramanian2026} to obtain
polynomial rates in $N$ for \emph{uniform-in-time propagation of chaos in
expectation}.
\subsection{Propagation of chaos and the long-time problem}
Let
\(
    \mu_t^N=N^{-1}\sum_{i=1}^N\delta_{X_i^N(t)}
\)
be the empirical measure of \eqref{eq:intro-particle}, let \(P_t^N\) be
the joint particle law, and let \(\mu_t\) solve
\eqref{eq:intro-mean-field}.  Propagation of chaos (PoC) asserts that, for every
fixed time horizon $t$, \(P_t^N\) becomes asymptotically of product-form with one-particle
law \(\mu_t\) as $N \to \infty$, or equivalently that \(\mu_t^N\) approaches \(\mu_t\) in
suitable metrics; see \cite{sznitman1991,chaintronDiez2022}.  Uniform-in-time
PoC asks for this comparison over all physical times: for some metric or discrepancy $d$ over probability measures,
\[
    \sup_{t\ge0}\E d(\mu_t^N,\mu_t)\longrightarrow0,
    \qquad N\to\infty.
\]
This is substantially stronger than a fixed-horizon result because the
constants in classical propagation of chaos estimates usually grow (exponentially or even double-exponentially) with
time.

A standard setting for uniform-in-time PoC consists of
weakly interacting diffusions subject to a confining force. The particle evolution is described by the equations
\begin{equation}\label{mv}
    dX_i^N(t)
    =
    -\left\{
       \nabla U(X_i^N(t))
       +\frac1N\sum_{j=1}^N
          \nabla W\bigl(X_i^N(t),X_j^N(t)\bigr)
     \right\}dt
     +\sigma\,dB_i(t),
    \qquad 1\le i\le N,
\end{equation}
and its mean-field limit is characterized by the so-called \emph{McKean--Vlasov} diffusion
\[
    d\bar X_t
    =
    -\left\{
       \nabla U(\bar X_t)
       +\int_{\mathbb R^d}
          \nabla W(\bar X_t,y)\,\rho_t(dy)
     \right\}dt
     +\sigma\,dB_t,
    \qquad
    \rho_t=\Law(\bar X_t).
\]
Here $\{B_i,B\}$ are standard Brownian motions, $\{B_i\}$ are independent, $\sigma$ is a constant diffusivity, $U$ is called the confinement potential and $W$ is the interaction potential.

When the confinement potential \(U\) is uniformly convex and the interaction potential \(W\) is convex,
or when the nonconvex part of the interaction is sufficiently weak relative
to the convex confinement, the resulting dynamics possess several favorable contractive properties. These are exploited via synchronous or reflection couplings and entropy methods to yield uniform-in-time
control; such arguments underlie many classical
results
\cite{malrieu2003,cattiaux2008granular,durmusEberleGuillinZimmer2020,guillin2021uniform,lackerLeFlem2023}.
Hierarchical relative entropy methods can further provide sharp marginal
rates, but their time-uniform versions similarly rely on log-Sobolev
inequalities arising from convexity or weak-interaction conditions
\cite{lacker2023,lackerLeFlem2023}. 
More recently, \cite{arnese2026sharppropagationchaosmean} extended the hierarchical entropy
framework beyond pairwise interactions to drifts that depend nonlinearly on
the empirical measure, obtaining sharp marginal rates on finite time horizons
and, for mean-field Langevin dynamics under strong displacement convexity,
uniformly in time.

Equation~\eqref{eq:intro-particle} resembles the McKean--Vlasov system
\eqref{mv}, although the interaction component of the drift in \eqref{eq:intro-particle} has no analogous gradient
structure.  The relative size of the two drift components is also different. The Stein velocity
\(v_\rho\) may not be small in comparison to the confining Langevin drift
\(-\varepsilon\nabla V\), particularly when \(\varepsilon\) is small.  Thus, unlike the `convex-confinement-small-interaction' models described above, the Langevin-regularized Stein
drift vector-field does not generally contract distances between
two solutions. For the deterministic SVGD (where $\varepsilon=0$), this lack of contractivity results in $O(1/\log N)$ and $O(1/\log \log N)$ uniform-in-time PoC rates obtained in \cite{balasubramanian2026}.

In this article, we show that the injected Langevin regularization does incorporate a form of contractivity into the SVGD dynamics, albeit in a different way. This results in polynomial rates in strong metrics, as we now describe.

\subsection{Main contributions}

The contributions of the paper are summarized in the following two points.\\

\textbf{1. Entropic stabilization without a small-interaction condition.}
We avoid perturbative assumptions from convexity and recover
stability at the level of entropy.  At the mean-field level this corresponds to \eqref{eq:intro-entropy-dissipation}: the entire `nonconvex' Stein interaction contributes
the nonpositive term \(-\KSD^2(\mu_t\|\pi)\). Adapting the approach in \cite{banerjee2024}, we prove the analogue of this identity at the particle level. We show in Theorem~\ref{prop:full-law} that the evolution of the relative entropy of the \emph{joint particle law} with respect to the $N$-fold product law $\pi^{\otimes N}$ exhibits a similar decay, but the resulting KSD term is computed for the \emph{particle empirical measure}.  For almost every \(t>0\),
\[
    \frac{d}{dt}\KL(P_t^N\|\pi^{\otimes N})
    =-N\E\KSD^2(\mu_t^N\|\pi)
      -\varepsilon I(P_t^N\|\pi^{\otimes N})
      +\text{a diagonal term},
\]
where \eqref{eq:Cstar-bound}, imposed in either kernel regime, bounds the
diagonal correction from above, uniformly in \(N\), by the order-one
constant \(C^\star\).
After tensorization of the target LSI, this yields pointwise exponential-in-time and polynomial-in-$N$
target convergence for the mean-field and particle systems, together with
uniform second-moment bounds.  Stabilization comes from the Stein
entropy-production identity, and the driving vector field need not be convex. Theorem~\ref{prop:full-law} then yields the pointwise particle target rates
in Propositions~\ref{prop:particle-KSD-target} and
\ref{prop:particle-W2-target}.\\

\textbf{2. Moving-product entropy, synchronous coupling, and polynomial
uniform-in-time propagation of chaos.}
A major contribution of this article lies in quantifying finite-time PoC via two complementary approaches. 

In the first approach, we synchronously couple the particle system in \eqref{eq:intro-particle} with $N$-independent copies of the nonlinear McKean--Vlasov process in \eqref{eq:intro-mean-field} driven by the same collection of Brownian motions. A careful
analysis of their averaged squared distance produces a Gr\"onwall
inequality whose random coefficient and forcing depend only on the
independent nonlinear copies, through
\(
    Q_N(t):=\frac1N\sum_{j=1}^N\|\bar X_j(t)\|^2
\)
and the empirical Stein-velocity defect
\(
    G_t^N(x_1,\ldots,x_N)
    :=
    \sum_{i=1}^N
    \left\|
        \frac1N\sum_{j=1}^N B(x_i,x_j)
        -v_{\mu_t}(x_i)
    \right\|^2;
\)
see \eqref{eq:DN-differential}.  Exponential moment bounds for \(Q_N\),
together with concentration of \(G_t^N\), close this estimate in
expectation. Proposition~\ref{prop:refined-coupling} and
Corollaries~\ref{cor:finite-W2} and
\ref{cor:finite-marginal-W2} then give
\begin{align*}
    \E W_2^2(\mu_t^N,\mu_t)
    &\le C\left\{
       \frac{e^{L_\mathrm{cpl}t}-1}{N}+r_{N,d}\right\},\\
     W_2^2(P_{N,t}^{(k)},\mu_t^{\otimes k})
  &\le C\frac{k}{N}\bigl(e^{L_\mathrm{cpl}t}-1\bigr),
\end{align*}
for some finite exponent $L_\mathrm{cpl}$.

The second approach adapts the quantitative relative entropy method of
\cite{jabinWang2018}. We analyze the time evolution of the moving-product relative entropy
\[
  J_N(t):=\KL\!\left(P_t^N\,\middle\|\,\mu_t^{\otimes N}\right).
\]
We derive a differential inequality for \(J_N\). The relative Fisher information generated by the nondegenerate diffusion
absorbs the `interaction error' coming from the different drifts in the particle and mean-field systems. An exponential law-of-large-numbers estimate, proved in Proposition~\ref{prop:exp-lln}, then
closes the differential inequality through the variational
representation of relative entropy. Proposition~\ref{prop:moving-entropy} thus gives
\[
  J_N(t)\le\bigl(e^{\kappa t}-1\bigr)\log C_G,
  \qquad \kappa=(2\varepsilon\lambda)^{-1},
\]
under i.i.d.\ initialization, where \(C_G<\infty\) is independent of
\(N\) and \(t\). 

Entropy superadditivity, and then Pinsker's inequality, respectively convert this bound into
fixed-marginal KL and total variation estimates: for every \(1\le k\le N/2\):
\begin{align*}
    \KL(P_{N,t}^{(k)}\|\mu_t^{\otimes k})
    &\le
    2\log C_G\,\frac{k}{N}
    \bigl(e^{\kappa t}-1\bigr),\\
    \|P_{N,t}^{(k)}-\mu_t^{\otimes k}\|_{\mathrm{TV}}
    &\le
    \sqrt{\log C_G}\,
    \sqrt{\frac{k}{N}}\,
    \bigl(e^{\kappa t}-1\bigr)^{1/2}.
\end{align*}

Finally, we combine the finite-time PoC rates (provided by either of the above approaches) with the target-convergence estimates by balancing these bounds at a logarithmic cutoff, described in Section~\ref{sec:cutoff}. This yields our main uniform-in-time PoC results with polynomial rates in KSD, $W_2$, and total variation (TV) distance. These estimates control the last iterate at every physical time, rather than the time-averaged measure treated in \cite{balasubramanian2026}.

In Section~\ref{app:entropy-W2}, we demonstrate how the moving-entropy method, under an additional transportation inequality assumption on the initial distribution, gives an alternate approach to finite-time and uniform-in-time PoC for the empirical distribution and particle marginals in $W_2$. Although both approaches yield qualitatively similar rates, the associated exponents are sensitive to the regularity of the kernel. In Section~\ref{sec:W2-method-comparison}, model-dependent estimates of the
PoC exponents and explicit examples show that the moving-entropy bounds can
be sharper for kernels with large spatial derivatives or strongly
oscillatory behavior.  Conversely, synchronous coupling can yield better
rates when the dynamics possess a favorable pathwise contractive structure.

Section~\ref{sec:smooth-stationary-kernels} verifies the assumptions for a
broad class of symmetric \(C_b^4\) stationary kernels and for targets with
both convex and nonconvex potentials.  Section~\ref{sec:future}
discusses a possible localization route to uniform-in-time PoC in
probability under weaker moment assumptions, as well as the dependence of
the rates on the regularization strength \(\varepsilon\).  Extensions to
other stochastic variants of SVGD provide a further direction for future
work
\cite{zhang2020stochastic,liEtAl2020randomBatch,
nuskenRenger2023,dasNagaraj2023}.

We note here that a related cutoff argument was used in \cite{guillin2021uniform} for mean-field kinetic particles.
Under their assumptions, the interacting \(N\)-particle Gibbs measures
satisfy a log-Sobolev inequality uniformly in \(N\), which yields
\(N\)-uniform hypocoercive relaxation and provide the long-time estimate of their cutoff argument. In our work, we do not assume such a corresponding functional inequality for
the invariant law of the noisy-SVGD particle system. Instead, our long-time control follows directly from Stein entropy dissipation relative to the product target \(\pi^{\otimes N}\), up to the finite-\(N\) diagonal correction.\\

\textbf{Summary of PoC results: }
Table~\ref{tab:main-results} summarizes our main
propagation of chaos results.  Every row assumes
i. the core target and initial-law conditions in
Assumption~\ref{ass:core-target}, and ii. the dissipativity and sub-Gaussian initial-tail conditions in
Assumption~\ref{ass:concentration}.

The remaining assumption on the kernel is stated in the first column.

The constants \(\kappa\) and \(L_\mathrm{cpl}\) are the
single-exponential growth rates in the moving-entropy and
synchronous-coupling estimates, respectively; see \eqref{eq:kappa} and
Proposition~\ref{prop:refined-coupling}.  The term \(r_{N,d}\), defined in
\eqref{eq:rNd}, is the i.i.d.\ empirical \(W_2^2\) sampling rate.  In the
fixed-marginal rows, \(k\) is fixed and \(N\ge2k\), and \(C_k\) is
independent of \(N\) and \(t\).

The last row gives empirical and fixed-marginal PoC bounds in \(W_2^2\) under the additional assumption that
\(\mu_0\in T_2(C_0)\). There, $L_{\mathrm{ent}} = \kappa + \ell$, where $\mu_t \in T_2(C e^{\ell t})$, shown in Corollary~\ref{cor:mu-T2}.

\begin{table}[!htbp]
\centering
\caption{Main metric propagation of chaos bounds.}
\label{tab:main-results}
\footnotesize
\setlength{\tabcolsep}{3pt}
\renewcommand{\arraystretch}{1.35}
\begin{tabular}{@{}p{0.25\textwidth}p{0.29\textwidth}p{0.37\textwidth}@{}}
\hline
Discrepancy and kernel condition
& Finite-time estimate
& Uniform-in-time conclusion \\
\hline
Empirical Langevin KSD
\newline\emph{KSD kernel condition}
&
\(\displaystyle \E\KSD_\pi(\mu_t^N,\mu_t)\)
\newline
\(\displaystyle \lesssim N^{-1/2}e^{\left(\kappa \wedge (L_\mathrm{cpl}/2)\right) t}\)
&
\(\displaystyle
  \sup_{t\ge0}\E\KSD_\pi(\mu_t^N,\mu_t)\)
\newline
\(\displaystyle
  \lesssim
  N^{-\varepsilon\alpha/[2(\kappa \wedge (L_\mathrm{cpl}/2) +\varepsilon\alpha)]}
  +N^{-1/2}\)
\\
\hline
Empirical \(W_2^2\)
\newline\emph{Wasserstein kernel condition}
&
\(\displaystyle \E W_2^2(\mu_t^N,\mu_t)\)
\newline
\(\displaystyle
  \lesssim
  N^{-1}(e^{L_\mathrm{cpl}t}-1)+r_{N,d}\)
&
\(\displaystyle
  \sup_{t\ge0}\E W_2^2(\mu_t^N,\mu_t)\)
\newline
\(\displaystyle
  \lesssim
  N^{-2\varepsilon\alpha/
    (L_\mathrm{cpl}+2\varepsilon\alpha)}
  +r_{N,d}\)
\\
\hline
Fixed-\(k\) \(W_2^2\)
\newline\emph{Wasserstein kernel condition}
&
\(\displaystyle
  W_2^2(P_{N,t}^{(k)},\mu_t^{\otimes k})\)
\newline
\(\displaystyle
  \lesssim
  \frac{k}{N}(e^{L_\mathrm{cpl}t}-1)\)
&
\(\displaystyle
  \sup_{t\ge0}
  W_2^2(P_{N,t}^{(k)},\mu_t^{\otimes k})\)
\newline
\(\displaystyle
  \le C_k\left\{
  N^{-2\varepsilon\alpha/
    (L_\mathrm{cpl}+2\varepsilon\alpha)}
  +N^{-1}\right\}\)
\\
\hline
Fixed-\(k\) total variation
\newline\emph{Either kernel condition}
&
\(\displaystyle
  \|P_{N,t}^{(k)}-\mu_t^{\otimes k}\|_{\mathrm{TV}}\)
\newline
\(\displaystyle
  \lesssim
  \sqrt{\frac{k}{N}}\,(e^{\kappa t}-1)^{1/2}\)
&
\(\displaystyle
  \sup_{t\ge0}
  \|P_{N,t}^{(k)}-\mu_t^{\otimes k}\|_{\mathrm{TV}}\)
\newline
\(\displaystyle
  \le C_k\left\{
  N^{-\varepsilon\alpha/(\kappa+2\varepsilon\alpha)}
  +N^{-1/2}\right\}\)
\\
\hline
Entropy-derived \(W_2^2\)
\newline empirical and fixed-\(k\)
\newline\emph{Wasserstein kernel condition and
\(\mu_0\in T_2(C_0)\)}
&
\(\displaystyle
  \E W_2^2(\mu_t^N,\mu_t)\)
  \newline
  \(\displaystyle\lesssim N^{-1}e^{L_{\mathrm{ent}}t}+r_{N,d}\)
\newline
\newline
\(\displaystyle
  W_2^2(P_{N,t}^{(k)},\mu_t^{\otimes k})
  \lesssim \frac{k}{N}e^{L_{\mathrm{ent}}t}\)
&
\(\displaystyle
  \sup_{t\ge0}\E W_2^2(\mu_t^N,\mu_t)\)
  \newline
  \(\displaystyle\lesssim
  N^{-2\varepsilon\alpha/
    (L_{\mathrm{ent}}+2\varepsilon\alpha)}
  +r_{N,d}\)
\newline
\newline
\(\displaystyle
  \sup_{t\ge0}
  W_2^2(P_{N,t}^{(k)},\mu_t^{\otimes k})\)
  \newline
\(\displaystyle \le C_k\left\{
  N^{-2\varepsilon\alpha/
    (L_{\mathrm{ent}}+2\varepsilon\alpha)}
  +N^{-1}\right\}\)
\\
\hline
\end{tabular}
\end{table}

To pass from the finite-time estimates in the second column to the
uniform-in-time conclusions in the third via the cutoff argument of
Section~\ref{sec:cutoff}, we establish long-time bounds for the same four
discrepancies by comparing both the particle and mean-field systems with the
target.  These target-regime estimates are also of independent interest:
\begin{align*}
  \E\KSD_\pi(\mu_t^N,\mu_t)
  &\lesssim e^{-\varepsilon\alpha t}+N^{-1/2},\\
  \E W_2^2(\mu_t^N,\mu_t)
  &\lesssim e^{-2\varepsilon\alpha t}+r_{N,d},\\
  \|P_{N,t}^{(k)}-\mu_t^{\otimes k}\|_{\mathrm{TV}}
  &\le C_k(e^{-\varepsilon\alpha t}+N^{-1/2}),\\
  W_2^2(P_{N,t}^{(k)},\mu_t^{\otimes k})
  &\le C_k(e^{-2\varepsilon\alpha t}+N^{-1}).
\end{align*}

\subsection{Organization}

In Section~\ref{sec:setup}, we introduce the model and notation, then in
Section~\ref{sec:assumptions}, we state the assumptions and establish
well-posedness. We then derive quantitative target
convergence and uniform moment bounds in Section~\ref{sec:target}, which is followed by Section~\ref{sec:finite-time}, where we  develop the concentration and synchronous-coupling estimates. Finally in Section~\ref{sec:cutoff}, we convert the estimates in Section~\ref{sec:finite-time} into uniform-in-time PoC rates.

Section~\ref{sec:moving-entropy} develops the moving-product entropy method,
its marginal, total variation, KSD, and optional \(W_2\) estimates, and
compares it with the result from synchronous coupling. Section~\ref{sec:smooth-stationary-kernels}
verifies the assumptions in concrete convex and nonconvex examples, and
Section~\ref{sec:future} discusses extensions and open directions.  The
appendices justify the entropy evolution formulas and collect the required
regularity results.

\section{Setup and notation}\label{sec:setup}

Let
\[
    \pi(dx)=Z^{-1}e^{-V(x)}\,dx
\]
be the target law on $\R^d$.  Let $k$ be a scalar symmetric positive-definite kernel, and let $\mathcal H^d$ be the associated vector-valued RKHS.  Following \cite{balasubramanian2026}, define the Stein velocity
\begin{equation}\label{eq:velocity}
    v_\rho(x)
    :=\int_{\R^d}
      \{-k(x,y)\nabla V(y)+\nabla_2k(x,y)\}\,\rho(dy).
\end{equation}
For later calculations we write
\begin{equation}\label{eq:B}
    B(x,y):=-k(x,y)\nabla V(y)+\nabla_2k(x,y),
    \qquad v_\rho(x)=\int B(x,y)\,\rho(dy).
\end{equation}
The Langevin--Stein operator is
\[
    \mathcal T_\pi\phi(x)
    :=-\nabla V(x)\cdot\phi(x)+\nabla\cdot\phi(x),
    \qquad \phi\in\mathcal H^d.
\]
Set
\[
    \xi_\pi(x)
    :=-k(\cdot,x)\nabla V(x)+\nabla_2k(\cdot,x).
\]
Whenever \(x\mapsto\xi_\pi(x)\) is strongly measurable and
\[
    \int_{\mathbb R^d}
       \|\xi_\pi(x)\|_{\mathcal H^d}\,\rho(dx)<\infty,
\]
define the Stein witness by the Bochner integral
\begin{equation}\label{eq:witness}
    S_\pi(\rho)
    :=\int_{\mathbb R^d}\xi_\pi(x)\,\rho(dx)
    \in\mathcal H^d.
\end{equation}
Lemma~\ref{lem:witness-lipschitz} shows that, under either kernel
assumption, this definition applies to every
\(\rho\in\mathcal P_1(\mathbb R^d)\), and that \(S_\pi(\pi)=0\).
The one-sample KSD and two-sample Langevin KSD are
\begin{align}
    \KSD(\rho\|\pi)
    &:=\|S_\pi(\rho)\|_{\mathcal H^d},
    \label{eq:one-sample-ksd}\\
    \KSD_\pi(\mu,\nu)
    &:=\|S_\pi(\mu)-S_\pi(\nu)\|_{\mathcal H^d}.
    \label{eq:two-sample-ksd}
\end{align}
Thus $\KSD_\pi$ is symmetric and satisfies the triangle inequality.  The
cutoff Stein identity in Lemma~\ref{lem:witness-lipschitz} gives
$S_\pi(\pi)=0$, and therefore
\begin{equation}\label{eq:ksd-triangle-target}
    \KSD_\pi(\rho,\pi)=\KSD(\rho\|\pi),
    \qquad
    \KSD_\pi(\mu,\nu)
    \le \KSD(\mu\|\pi)+\KSD(\nu\|\pi).
\end{equation}

The diffusion below is the continuous-time interacting dynamics associated
with the discrete-time noisy-SVGD recursion of \cite{priser2024}.  For
$1\le i\le N$, consider
\begin{equation}\label{eq:particle-sde}
    dX_i^N(t)
    =\{v_{\mu_t^N}(X_i^N(t))-\varepsilon\nabla V(X_i^N(t))\}\,dt
     +\sqrt{2\varepsilon}\,dW_i(t),
\end{equation}
where
\[
    \mu_t^N:=\frac1N\sum_{i=1}^N\delta_{X_i^N(t)}.
\]
The nonlinear process and its law are
\begin{equation}\label{eq:nonlinear-sde}
    d\bar X(t)
    =\{v_{\mu_t}(\bar X(t))-\varepsilon\nabla V(\bar X(t))\}\,dt
     +\sqrt{2\varepsilon}\,dW(t),
    \qquad \mu_t:=\Law(\bar X(t)).
\end{equation}
Equivalently,
\begin{equation}\label{eq:mf-pde}
    \partial_t\mu_t
    =-\nabla\cdot(\mu_t v_{\mu_t})
      +\varepsilon\nabla\cdot\left(
          \mu_t\nabla\log\frac{\mu_t}{\pi}
      \right).
\end{equation}
The above flow is a Langevin-regularized RKHS projected gradient flow of the Kullback-Leibler divergence (or relative entropy), which we now formally define.
For probability measures \(\rho,\nu\in\mathcal P(\R^d)\), their
Kullback--Leibler divergence is
\[
    \KL(\rho\|\nu)
    :=
    \begin{cases}
    \displaystyle
    \int_{\R^d}
        \log\left(\frac{d\rho}{d\nu}\right)\,d\rho,
        & \rho\ll\nu,\\[1.2ex]
    +\infty, & \text{otherwise}.
    \end{cases}
\]
Equivalently, if \(f=d\rho/d\nu\), then
\(\KL(\rho\|\nu)=\int f\log f\,d\nu\), with the convention
\(0\log 0=0\).

For two probability laws on a common measurable space, we denote
\[
    \|\rho-\nu\|_{\mathrm{TV}}
    :=
    \sup_A|\rho(A)-\nu(A)|
    =\frac12\int
       \left|\frac{d\rho}{d\lambda}
             -\frac{d\nu}{d\lambda}\right|\,d\lambda,
\]
where \(\lambda\) is any measure dominating both \(\rho\) and \(\nu\).
With this convention, Pinsker's inequality reads
\[
    \|\rho-\nu\|_{\mathrm{TV}}
    \le \sqrt{\frac12\KL(\rho\|\nu)}.
\]

Now we define another closely related quantity.  For probability laws
\(\rho\ll\nu\) on a Euclidean space, with \(f=d\rho/d\nu\), the relative
Fisher information is
\[
    I(\rho\|\nu)
    :=
    4\int\left\|\nabla\sqrt{f}\right\|^2\,d\nu,
\]
whenever \(\sqrt f\) belongs to the corresponding weighted Sobolev space,
and \(I(\rho\|\nu):=+\infty\) otherwise.  If \(f\) is positive and
sufficiently regular, this can equivalently be written as
\[
    I(\rho\|\nu)
    =
    \int
    \left\|
        \nabla\log\left(\frac{d\rho}{d\nu}\right)
    \right\|^2\,d\rho .
\]
We write $P_t^N$ for the joint law of the particle system (which exists by standard parabolic regularity), $P_{N,t}^{(k)}$ for its $k$-particle marginal, and
\[
    \Pi_N:=\pi^{\otimes N}.
\]
For a probability law $\rho$ with finite second moment, set
\[
    m_2(\rho):=\int_{\R^d}\|x\|^2\rho(dx).
\]
We use the convention that $\rho\in T_2(C)$ means
\begin{equation}\label{eq:T2-convention}
    W_2^2(\nu,\rho)\le 2C\KL(\nu\|\rho)
\end{equation}
for every $\nu\ll\rho$.  In this convention, a log-Sobolev inequality (LSI) with constant $\alpha$
\[
    I(\nu\|\pi)\ge 2\alpha\KL(\nu\|\pi)
\]
implies $\pi\in T_2(1/\alpha)$. 

Unless stated otherwise, \(C\) denotes a finite constant that may depend on
the standing target, initial-law, kernel, dimension, and regularization
parameters, but not on \(N\) or \(t\).  The notation \(C_k\) allows the
constant to depend additionally on \(k\); it remains independent of \(N\)
and \(t\).

For $X = (x_1,\ldots,x_N) \in (\R^d)^N$, we will denote the associated empirical measure by
$$
\mu^N_X := \frac{1}{N}\sum_{i=1}^N\delta_{x_i}.
$$

\section{Assumptions and well-posedness}\label{sec:assumptions}

The following assumptions (or a subset of them) will be made for our results. The target LSI implies
\(\pi\in T_2(1/\alpha)\), which is distinct from the additional initial-law
condition \(\mu_0\in T_2(C_0)\), stated as
Assumption~\ref{ass:initial-T2} and used only in
Section~\ref{app:entropy-W2}.

\begin{assumption}[Core target and initial law]\label{ass:core-target}
The potential $V\in C^3(\R^d)$ satisfies
\begin{equation}\label{eq:hessian-bound}
    \sup_{x\in\R^d}\|\nabla^2V(x)\|_{\mathrm{op}}\le L_V.
\end{equation}
The target $\pi$ has finite second moment and satisfies
\begin{equation}\label{eq:target-lsi}
    I(\nu\|\pi)\ge 2\alpha\KL(\nu\|\pi)
\end{equation}
for some $\alpha>0$. The mean-field initial law $\mu_0$ has a smooth positive density and
\begin{equation}\label{eq:initial-core}
    \KL(\mu_0\|\pi)<\infty.
\end{equation}
Unless stated otherwise, the particles are initialized independently:
\[
    P_0^N=\mu_0^{\otimes N}.
\]
\end{assumption}

\begin{assumption}[KSD kernel assumptions]\label{ass:ksd-kernel}
The kernel $k$ is symmetric, positive definite, and belongs to $C_b^4(\R^d\times\R^d)$. Define
\begin{equation}\label{eq:Cstar-function}
    C^\star(x)
    :=\nabla_2k(x,x)\cdot\nabla V(x)
      +k(x,x)\Delta V(x)-\Delta_2k(x,x),
\end{equation}
and assume
\begin{equation}\label{eq:Cstar-bound}
    C^\star
    :=\max\left\{0,\sup_{x\in\R^d}C^\star(x)\right\}<\infty.
\end{equation}
\end{assumption}

\begin{assumption}[Wasserstein kernel assumptions]\label{ass:W2}
The kernel is symmetric and positive definite, belongs to $C^3(\R^d\times\R^d)$, and
\[
    k,\ \nabla_1k,\ \nabla_2k,\
    \nabla_{12}^2k,\ \nabla_{22}^2k,\ \nabla_{112}^3k
\]
are bounded. The diagonal correction \eqref{eq:Cstar-bound} holds.
\end{assumption}

\begin{assumption}[Strong-concentration regime]\label{ass:concentration}
There exist $m>0$ and $b\ge0$ such that
\begin{equation}\label{eq:dissipativity}
    x\cdot\nabla V(x)\ge m\|x\|^2-b,
\end{equation}
and the initial law satisfies
\begin{equation}\label{eq:initial-subgaussian}
    \int e^{a_0\|x\|^2}\mu_0(dx)<\infty
\end{equation}
for some $a_0>0$.
\end{assumption}


\begin{remark}[Kernel scope and comparison with the deterministic assumptions]\label{rem:assumption-comparison}
Our assumptions cover any stationary positive-definite kernel
\(k(x,y)=\Psi(x-y)\) with \(\Psi\in C_b^4(\mathbb R^d)\);
see Section~\ref{sec:smooth-stationary-kernels}.

The bounded-Hessian and dissipativity conditions are comparable to the
potential assumptions used in Sections~3--4 of
\cite{balasubramanian2026}; after a Young's inequality their dissipativity
condition also gives a quadratic radial lower bound.  The assumptions are
nevertheless not inclusive of one another.  Our target argument additionally
uses a target LSI, and the strong finite-time estimates in expectation use a
sub-Gaussian initial law.
Conversely, the deterministic $W_1$ and $W_2$ results use specialized Langevin-semigroup stability assumptions and normalized or unnormalized bilinear--Mat\'ern kernels.

Thus our results use stronger tail information in exchange for a stronger
conclusion: polynomial, uniform-in-time estimates in expectation.
Section~\ref{weakprob} discusses a possible finite-moment localization of
the coupling argument.
\end{remark}

\begin{remark}[Scope of the concentration assumption]\label{rem:exp-scope}
Assumption~\ref{ass:concentration} is not used for target convergence, the
full-law target entropy estimate, or the uniform second-moment bound.  In the
moving-entropy proof it gives the exponential law of large numbers in
Proposition~\ref{prop:exp-lln} and the Stein-witness concentration in
Lemma~\ref{lem:witness-concentration}.  The coupling proof in
Section~\ref{app:refined-coupling} uses the same assumption differently,
through a product-path exponential moment and the \(L^2\) estimate of
Proposition~\ref{prop:exp-lln}.
\end{remark}

\begin{proposition}[Well-posedness]\label{prop:wellposed}
Under Assumption~\ref{ass:core-target} and either Assumption~\ref{ass:ksd-kernel} or Assumption~\ref{ass:W2}, the particle system \eqref{eq:particle-sde} and the nonlinear SDE \eqref{eq:nonlinear-sde} admit unique global strong solutions.  Their second moments are finite on every finite time interval.
\end{proposition}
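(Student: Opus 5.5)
The plan is to treat both systems as SDEs whose drift is globally Lipschitz in the appropriate sense, and to solve the nonlinear one by a fixed-point argument on laws.

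\textbf{Structural bounds on $B$.} The basic difficulty is that $B(x,y)=-k(x,y)\nabla V(y)+\nabla_2k(x,y)$ grows linearly in $y$. Since $\nabla^2V$ is bounded by \eqref{eq:hessian-bound}, we have $\|\nabla V(y)\|\le\|\nabla V(0)\|+L_V\|y\|$. Under either kernel assumption, $k$, $\nabla_1k$, $\nabla_2k$ and $\nabla_{12}^2k$ are bounded (for Assumption~\ref{ass:ksd-kernel} via $C_b^4$). Two consequences follow:
\[
\|B(x,y)\|\le C(1+\|y\|),
\qquad
\|B(x,y)-B(x',y')\|\le C(1+\|y\|)\|x-x'\|+C(1+\|y\|+\|y'\|)\|y-y'\|.
\]
The second bound uses $|k(x,y)-k(x,y')|\le C\|y-y'\|$ and the Lipschitz property of $\nabla V$.

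\textbf{Particle system.} For the particle system \eqref{eq:particle-sde}, the drift on $(\R^d)^N$ is $C^2$, since $V\in C^3$ and $k$ is $C^3$. It is therefore locally Lipschitz, which gives a unique strong solution up to explosion. To rule out explosion, apply It\^o to $\sum_i\|X_i\|^2$. The drift term $\frac1N\sum_j X_i\cdot B(X_i,X_j)$ is bounded by $C\|X_i\|(1+\frac1N\sum_j\|X_j\|)$, so summing over $i$ gives at most $C(1+\sum_i\|X_i\|^2)$. The confining part satisfies $-\varepsilon X_i\cdot\nabla V(X_i)\le C(1+\|X_i\|^2)$ by the Lipschitz gradient; no dissipativity is needed here. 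Gr\"onwall with stopping times then gives non-explosion and finite second moments on $[0,T]$, using $m_2(\mu_0)<\infty$.

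One point needs care: $m_2(\mu_0)<\infty$ must be derived. It follows from $\KL(\mu_0\|\pi)<\infty$ together with $\pi\in T_2(1/\alpha)$ and $m_2(\pi)<\infty$, since $W_2(\mu_0,\pi)<\infty$.

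\textbf{Nonlinear SDE.} I will use a Sznitman-type fixed point on $C([0,T];\mathcal P_2)$ with the $\sup_t W_2$ metric, or more precisely on flows with $\sup_{t\le T}m_2\le R$.
\begin{enumerate}
\item Given a flow $(\nu_t)$, the drift $b_\nu(t,x)=v_{\nu_t}(x)-\varepsilon\nabla V(x)$ is Lipschitz in $x$ with constant $C(1+m_2(\nu_t)^{1/2})+\varepsilon L_V$ and has linear growth. The linear SDE therefore has a unique strong solution $\bar X^\nu$.
\item The key estimate is $\|v_\nu(x)-v_{\nu'}(x)\|\le C(1+m_2(\nu)^{1/2}+m_2(\nu')^{1/2})W_2(\nu,\nu')$. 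To get it, take an optimal coupling of $\nu,\nu'$, use the second Lipschitz bound on $B$, and apply Cauchy--Schwarz.
\item Synchronous coupling of $\bar X^\nu$ and $\bar X^{\nu'}$ then yields
\[
\E\|\bar X^\nu_t-\bar X^{\nu'}_t\|^2\le C_{R,T}\int_0^t W_2^2(\nu_s,\nu'_s)\,ds.
\]
\item The a priori bound is $m_2(\Law\bar X^\nu_t)\le (m_2(\mu_0)+Ct)e^{Ct}$. Here the constant depends on $\nu$ only through $\int B(x,y)\,\nu(dy)$, which is bounded by $C(1+m_2(\nu)^{1/2})$. The danger is that this bound may fail to close on a ball.
\end{enumerate}

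\textbf{Expected obstacle and how to handle it.} I expect closing step 4 to be the main obstacle. The remedy is a refined moment estimate. Since $\|v_\nu(x)\|\le C(1+m_1(\nu))$ uniformly in $x$, the differential inequality is
\[
\frac{d}{dt}m_2\le C(1+m_2)+C\,m_2^{1/2}\,m_1(\nu_t).
\]
Plugging in the candidate bound for $m_1(\nu_t)$ gives a linear inequality, so a ball defined by $m_2(\nu_t)\le Ae^{Kt}$, with $K$ large, is invariant.

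Then iterate the Picard map. The $t$-integral in step 3 gives factorial decay of the iterates, which yields existence and uniqueness on $[0,T]$. Uniqueness among all $\mathcal P_2$-flows follows from the same estimate applied to two arbitrary solutions with finite moments. Pathwise uniqueness follows from the Lipschitz drift once the law flow is fixed.

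Finally, let $T\to\infty$ to obtain a global solution. Finite second moments on each $[0,T]$ come out of the a priori bound.
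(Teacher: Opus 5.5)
Your proposal is correct and follows essentially the same route as the paper. For the particle system, a locally Lipschitz drift plus a stopped It\^o--Gr\"onwall second-moment bound gives global existence. For the nonlinear SDE, a $W_2$-Lipschitz estimate on $\nu\mapsto v_\nu$ is combined with synchronous coupling and Picard iteration.

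The only real difference is how you close the moment estimate: you use the invariant set $\{m_2(\nu_t)\le Ae^{Kt}\}$ instead of the paper's truncated coefficients, which gives the fixed point on all of $[0,T]$ without a truncation-removal step. Note also that the $y$-Lipschitz bound on $B$ needs $\nabla_{22}^2k$ bounded, which both kernel assumptions provide.
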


\begin{proof}
Because the target LSI implies $T_2(1/\alpha)$, Assumption~\ref{ass:core-target} gives $W_2(\mu_0,\pi)<\infty$ and hence $\mu_0\in\mathcal P_2(\R^d)$. By \eqref{eq:B-weighted-y-lipschitz},
\begin{equation}\label{eq:B-local-lip}
    \|B(x,y)-B(x',y')\|
    \le C(1+ \min\{\|y\|,\|y'\|\})
       \{\|x-x'\|+\|y-y'\|\}.
\end{equation}
Let \(\gamma\) be an optimal \(W_2\)-coupling of \(\rho\) and \(\nu\).
Then \eqref{eq:B-local-lip} and Cauchy--Schwarz inequality give
\begin{align}
    \|v_\rho(x)-v_\nu(x')\|
    &\le C\{1+m_2(\rho)^{1/2}+m_2(\nu)^{1/2}\}
       \{\|x-x'\|+W_2(\rho,\nu)\}.
    \label{eq:measure-lip}
\end{align}
Moreover, boundedness of $k$ and $\nabla_2k$, together with the linear growth of $\nabla V$, gives
\[
    \|v_\rho(x)\|
    \le C\{1+m_2(\rho)^{1/2}\}.
\]
Thus the finite-dimensional drift is locally Lipschitz and has at
most linear growth.  Let
\[
    \tau_R:=\inf\left\{t\ge0:
      \frac1N\sum_{i=1}^N\|X_i^N(t)\|^2\ge R\right\}.
\]
The preceding growth bounds and It\^o's formula give
\[
    \E\frac1N\sum_{i=1}^N
       \|X_i^N(t\wedge\tau_R)\|^2
    \le
    \E\frac1N\sum_{i=1}^N\|X_i^N(0)\|^2
    +C\int_0^t
       \left(1+\E\frac1N\sum_{i=1}^N
          \|X_i^N(s\wedge\tau_R)\|^2\right)\,ds.
\]
Hence Gr\"onwall's lemma yields, for every \(T<\infty\),
\[
    \sup_{R>0}\sup_{0\le t\le T}
    \E\frac1N\sum_{i=1}^N
       \|X_i^N(t\wedge\tau_R)\|^2
    \le C_T\left(1+\E\frac1N\sum_{i=1}^N\|X_i^N(0)\|^2\right),
\]
for some finite constant $C_T$.
Since
\[
    \mathbb P(\tau_R\le T)
    \le \frac1R\E\frac1N\sum_{i=1}^N
       \|X_i^N(T\wedge\tau_R)\|^2,
\]
letting \(R\to\infty\) rules out finite-time explosion.  A further
application of Fatou's lemma gives
\[
    \sup_{0\le t\le T}\E\frac1N\sum_{i=1}^N\|X_i^N(t)\|^2
    \le C_T\left(1+\E\frac1N\sum_{i=1}^N\|X_i^N(0)\|^2\right).
\] 
For the nonlinear SDE, \eqref{eq:measure-lip} gives local Lipschitz continuity of the McKean--Vlasov map on sets with bounded second moment.  A Picard iteration with truncated coefficients gives a unique local solution; as in the particle case above, the same quadratic estimate is uniform in the truncation and removes the stopping time. 

To prove pathwise uniqueness, let \(\bar X\) and \(\bar Y\) be two
solutions driven by the same Brownian motion and with
\(\bar X(0)=\bar Y(0)\) almost surely.
Fix \(T<\infty\).  The finite-time second-moment bounds and
\eqref{eq:measure-lip}, together with the boundedness of
\(\nabla^2V\), imply that, for \(0\le t\le T\),
\[
\begin{split}
    &\bigl\|
       v_{\Law(\bar X(t))}(\bar X(t))
       -v_{\Law(\bar Y(t))}(\bar Y(t))
       -\varepsilon\bigl(
          \nabla V(\bar X(t))-\nabla V(\bar Y(t))
       \bigr)
     \bigr\|  \\
    &\hspace{3cm}
    \le C_T\left(
       \|\bar X(t)-\bar Y(t)\|
       +W_2(\Law(\bar X(t)),\Law(\bar Y(t)))
    \right).
\end{split}
\]
The noise cancels under this synchronous coupling.  Hence, applying
the chain rule to
\(\|\bar X(t)-\bar Y(t)\|^2\) and taking expectations gives
\[
    \E\|\bar X(t)-\bar Y(t)\|^2
    \le
    C_T\int_0^t
       \left\{
          \E\|\bar X(s)-\bar Y(s)\|^2
          +W_2^2(\Law(\bar X(s)),\Law(\bar Y(s)))
       \right\}\,ds.
\]
Since the joint law of \((\bar X(s),\bar Y(s))\) is a coupling of
their marginal laws,
\[
    W_2^2(\Law(\bar X(s)),\Law(\bar Y(s)))
    \le \E\|\bar X(s)-\bar Y(s)\|^2.
\]
Gr\"onwall's lemma therefore yields
\[
    \E\|\bar X(t)-\bar Y(t)\|^2=0,
    \qquad 0\le t\le T.
\]
As \(T\) is arbitrary, \(\bar X\) and \(\bar Y\) are indistinguishable,
which proves pathwise uniqueness. This yields the global nonlinear solution and the finite-time moment bounds, as claimed.
\end{proof}


\section{Quantitative convergence to the target}\label{sec:target}

\subsection{Target-entropy decay rates}\label{sec:target-KSD}

We now present entropy decay estimates in the mean-field regime.
The following proposition and corollary were also established in \cite[Propositions 3 and 4]{priser2024} under slightly different assumptions. We include the argument for completeness.

In the following, all applications of integration by parts on the Euclidean space are understood after
multiplication by standard cutoffs \(\chi_R\uparrow1\).  The coefficient
growth bounds, finite-time moment estimates, and the relevant
entropy--Fisher-information integrability imply that all cutoff remainders
vanish as \(R\to\infty\); see proof of Lemma~\ref{lem:finite-dimensional-entropy}.

\subsubsection*{Mean-field target entropy}

\begin{proposition}[Mean-field Stein--Langevin dissipation]
\label{prop:mf-entropy}
Under Assumption~\ref{ass:core-target} and either kernel assumption, the map
\(t\mapsto\KL(\mu_t\|\pi)\) is locally absolutely continuous and, for
\(0\le s\le t\),
\begin{equation}\label{eq:mf-entropy-integrated}
 \KL(\mu_t\|\pi)-\KL(\mu_s\|\pi)
 =-\int_s^t\KSD^2(\mu_u\|\pi)\,du
  -\varepsilon\int_s^t I(\mu_u\|\pi)\,du.
\end{equation}
Consequently,
\begin{equation}\label{eq:mf-KL-decay}
 \KL(\mu_t\|\pi)
 \le e^{-2\varepsilon\alpha t}\KL(\mu_0\|\pi).
\end{equation}
\end{proposition}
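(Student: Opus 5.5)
The plan is to compute the entropy derivative formally, identify the two dissipation terms, justify the computation by cutoffs, and then close the estimate with the LSI and Grönwall.

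\textbf{Formal computation.} Write \(h_t:=\log(\mu_t/\pi)\), so that
\[
\frac{d}{dt}\KL(\mu_t\|\pi)=\int h_t\,\partial_t\mu_t .
\]
Substituting \eqref{eq:mf-pde} and integrating by parts gives
\[
\frac{d}{dt}\KL(\mu_t\|\pi)=\int \nabla h_t\cdot v_{\mu_t}\,d\mu_t-\varepsilon\int\|\nabla h_t\|^2\,d\mu_t .
\]
The second term is \(-\varepsilon I(\mu_t\|\pi)\).

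For the first term, use \(\mu_t\nabla h_t=\nabla\mu_t+\mu_t\nabla V\) and integrate by parts in \(x\):
\[
\int \nabla h_t(x)\cdot B(x,y)\,\mu_t(dx)=-\int \bigl(\nabla_x\cdot B(x,y)-\nabla V(x)\cdot B(x,y)\bigr)\,\mu_t(dx).
\]
Integrating this against \(\mu_t(dy)\) and expanding \(B\) produces
\[
-\iint\bigl[k(x,y)\nabla V(x)\cdot\nabla V(y)-\nabla V(x)\cdot\nabla_2k(x,y)-\nabla V(y)\cdot\nabla_1k(x,y)+\nabla_1\cdot\nabla_2k(x,y)\bigr]\,\mu_t(dx)\,\mu_t(dy).
\]
The bracket is the Stein kernel \(u_\pi(x,y)=\langle\xi_\pi(x),\xi_\pi(y)\rangle_{\mathcal H^d}\), by the reproducing property applied to derivatives of \(k\). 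Hence the double integral equals
\[
\Bigl\|\int\xi_\pi\,d\mu_t\Bigr\|^2_{\mathcal H^d}=\KSD^2(\mu_t\|\pi),
\]
using the Bochner definition \eqref{eq:witness}. Integrating the resulting identity in time gives \eqref{eq:mf-entropy-integrated}.

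\textbf{Rigorous justification (main obstacle).} The formal computation must be made rigorous. I would do this at the level of integrated identities, following the cutoff scheme indicated before the proposition, with the details as in Lemma~\ref{lem:finite-dimensional-entropy}. The key steps are as follows.

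First, \(\mu_t\) is smooth and positive for \(t>0\). This follows by parabolic regularity, since the drift \(v_{\mu_t}-\varepsilon\nabla V\) is \(C^2\) in \(x\), with linear growth, locally uniformly in \(t\).

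Second, multiply by \(\chi_R\), and if needed regularize \(\log\) to \(\log(\mu_t/\pi+\delta)\). Integrate over \([s,t]\).

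Third, send \(R\to\infty\) and \(\delta\to0\). For this one needs:
\begin{itemize}
\item the finite second moments from Proposition~\ref{prop:wellposed};
\item the bounds \(\|v_\rho\|\le C(1+m_2^{1/2})\) and \(|\nabla V(x)|\le C(1+\|x\|)\);
\item integrability of \(I(\mu_u\|\pi)\) on \([s,t]\), obtained a priori from the truncated inequality itself by monotone convergence, since the dissipation terms have a sign;
\item the bound \(\int|\log\mu_t|\,d\mu_t<\infty\), via the moments.
\end{itemize}

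Fourth, the double integral in the KSD term converges by dominated convergence. Here \(|u_\pi(x,y)|\le C(1+\|x\|)(1+\|y\|)\) by boundedness of \(k\) and its derivatives up to the relevant order.

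To obtain an identity, rather than only an inequality, all the way down to \(s=0\), I would additionally use continuity of \(t\mapsto\KL(\mu_t\|\pi)\) at \(0\). The bound \(\limsup_{t\downarrow 0}\KL(\mu_t\|\pi)\le\KL(\mu_0\|\pi)\) comes from the inequality version, and the reverse bound from lower semicontinuity together with weak continuity of \(\mu_t\).

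\textbf{Decay.} Drop the KSD term, which is nonnegative, and apply the LSI \eqref{eq:target-lsi}. This gives
\[
\KL(\mu_t\|\pi)-\KL(\mu_s\|\pi)\le-2\varepsilon\alpha\int_s^t\KL(\mu_u\|\pi)\,du .
\]
Since the map \(t\mapsto\KL(\mu_t\|\pi)\) is absolutely continuous, Grönwall's lemma yields \eqref{eq:mf-KL-decay}.
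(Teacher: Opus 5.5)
Your proposal is correct and follows essentially the paper's proof, which applies the general entropy chain rule of Lemma~\ref{lem:finite-dimensional-entropy} with \(N=1\) and \(\pi\) viewed as the stationary Langevin law; that lemma is proved by the same cutoff and endpoint lower-semicontinuity scheme you outline. The paper then identifies the cross term as \(-\int\mathcal T_\pi v_{\mu_u}\,d\mu_u=-\KSD^2(\mu_u\|\pi)\) by integration by parts and concludes with the LSI and Gr\"onwall. One small correction: moments control only the negative part of \(\int\mu_t\log\mu_t\), so uniform finiteness of \(\KL(\mu_u\|\pi)\) on \([s,t]\) and the bound \(\limsup_{u\downarrow0}\KL(\mu_u\|\pi)\le\KL(\mu_0\|\pi)\) should instead come from the Girsanov estimate \(\KL(\mu_u\|\pi)\le\KL(\mu_0\|\pi)+(4\varepsilon)^{-1}\int_0^u\int\|v_{\mu_r}\|^2\,d\mu_r\,dr\), as in the paper.
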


\begin{proof}
Apply Lemma~\ref{lem:finite-dimensional-entropy} with \(N=1\),
\(P_t=\mu_t\), and \(Q_t=\pi\).  Here \(Q_0=\pi\), \(Q_t=\pi\) for every
\(t\), and its drift is \(-\varepsilon\nabla V\).  The drift difference is
therefore \(v_{\mu_t}\).  The hypotheses of the lemma follow from
Lemma~\ref{lem:nonlinear-drift-regularity}, and hence
\begin{align}
 \KL(\mu_t\|\pi)-\KL(\mu_s\|\pi)
 &=\int_s^t\int
   v_{\mu_u}\cdot
   \nabla\log\frac{d\mu_u}{d\pi}\,d\mu_u\,du
 -\varepsilon\int_s^t I(\mu_u\|\pi)\,du.
 \label{eq:mf-chain-rule-before-stein}
\end{align}
For the first term, integration by parts gives
\[
 \int v_{\mu_u}\cdot
   \nabla\log\frac{d\mu_u}{d\pi}\,d\mu_u
 =\int\{\nabla V\cdot v_{\mu_t}-\nabla\cdot v_{\mu_t}\}\,d\mu_t
 =-\int\mathcal T_\pi v_{\mu_u}\,d\mu_u
 =-\|S_\pi(\mu_u)\|_{\mathcal H^d}^2.
\]
Substitution into \eqref{eq:mf-chain-rule-before-stein} proves
\eqref{eq:mf-entropy-integrated}. The target LSI gives, for
almost every \(t\),
\[
 \frac{d}{dt}\KL(\mu_t\|\pi)
 \le-2\varepsilon\alpha\KL(\mu_t\|\pi).
\]
Gr\"onwall's lemma now proves
\eqref{eq:mf-KL-decay} for every \(t\ge0\).
\end{proof}

\begin{corollary}[Pointwise mean-field Wasserstein convergence]
\label{cor:mf-W2-target}
Under Assumption~\ref{ass:core-target} and either kernel assumption,
\begin{equation}\label{eq:mf-W2-target}
    W_2(\mu_t,\pi)
    \le \left(\frac{2}{\alpha}\KL(\mu_0\|\pi)\right)^{1/2}
       e^{-\varepsilon\alpha t},
\end{equation}
\end{corollary}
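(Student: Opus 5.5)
The plan is to combine the exponential entropy decay of Proposition~\ref{prop:mf-entropy} with the Talagrand transportation inequality implied by the target log-Sobolev inequality. By the convention \eqref{eq:T2-convention}, the LSI \eqref{eq:target-lsi} with constant \(\alpha\) gives \(\pi\in T_2(1/\alpha)\) (Otto--Villani), so that for every \(\nu\ll\pi\) with finite second moment
\[
    W_2^2(\nu,\pi)\le \frac{2}{\alpha}\KL(\nu\|\pi).
\]

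First, check that \(\mu_t\) is admissible for this inequality at each fixed \(t\ge0\). Proposition~\ref{prop:mf-entropy} gives \(\KL(\mu_t\|\pi)\le e^{-2\varepsilon\alpha t}\KL(\mu_0\|\pi)<\infty\) by \eqref{eq:initial-core}, so \(\mu_t\ll\pi\). Moreover, Proposition~\ref{prop:wellposed} gives \(m_2(\mu_t)<\infty\), and in any case the \(T_2\) inequality itself forces a finite \(W_2\) distance to \(\pi\), which has a finite second moment. Next, apply \(T_2(1/\alpha)\) with \(\nu=\mu_t\) and insert \eqref{eq:mf-KL-decay}:
\[
    W_2^2(\mu_t,\pi)\le \frac{2}{\alpha}\KL(\mu_t\|\pi)
    \le \frac{2}{\alpha}\KL(\mu_0\|\pi)\,e^{-2\varepsilon\alpha t}.
\]
Taking square roots gives \eqref{eq:mf-W2-target}.

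There is no substantive obstacle. The only point needing care is the constant convention: in this paper's normalization, \(T_2(C)\) means \(W_2^2\le 2C\,\KL\). With \(C=1/\alpha\) this yields exactly the factor \(2/\alpha\) inside the square root, and the decay rate \(e^{-\varepsilon\alpha t}\) is half the entropy rate \(2\varepsilon\alpha\).
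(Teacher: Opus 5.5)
Your proposal is correct and follows the paper's own proof: the target LSI gives $\pi\in T_2(1/\alpha)$, i.e.\ $W_2^2(\nu,\pi)\le(2/\alpha)\KL(\nu\|\pi)$, and inserting the entropy decay \eqref{eq:mf-KL-decay} from Proposition~\ref{prop:mf-entropy} and taking square roots gives \eqref{eq:mf-W2-target}. Your extra admissibility check ($\mu_t\ll\pi$ and finite second moment) is harmless; the paper does not spell it out.
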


\begin{proof}
The LSI \eqref{eq:target-lsi} implies $T_2(1/\alpha)$, namely
$W_2^2(\nu,\pi)\le(2/\alpha)\KL(\nu\|\pi)$.  Combining this with
\eqref{eq:mf-KL-decay} proves the claim.
\end{proof}

\begin{corollary}[Pointwise mean-field KSD convergence]\label{cor:mf-KSD-target}
Under Assumptions~\ref{ass:core-target} and \ref{ass:ksd-kernel},
\begin{equation}\label{eq:mf-KSD-target}
    \KSD(\mu_t\|\pi)
    \le C e^{-\varepsilon\alpha t}.
\end{equation}
\end{corollary}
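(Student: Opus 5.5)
We need $\KSD(\mu_t\|\pi)\le Ce^{-\varepsilon\alpha t}$. The plan is to bound KSD by $W_2$ (or by $W_1$) to the target, then apply Corollary~\ref{cor:mf-W2-target}.

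\textbf{Step 1: Lipschitz bound on the witness.} I would use the identity $S_\pi(\pi)=0$ from Lemma~\ref{lem:witness-lipschitz}, which gives
\[
\KSD(\mu_t\|\pi)=\|S_\pi(\mu_t)-S_\pi(\pi)\|_{\mathcal H^d}.
\]
It then suffices to show a weighted Lipschitz estimate for the feature map:
\[
\|\xi_\pi(x)-\xi_\pi(y)\|_{\mathcal H^d}\le C(1+\|x\|+\|y\|)\|x-y\|.
\]
Expanding the RKHS norm, $\|\xi_\pi(x)-\xi_\pi(y)\|^2_{\mathcal H^d}$ is a combination of $k$, $\nabla_1 k$, $\nabla_2 k$ and $\nabla_{12}^2k$ evaluated at $(x,x)$, $(x,y)$ and $(y,y)$. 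These are contracted against $\nabla V(x)$ and $\nabla V(y)$. The bounds needed are:
\begin{itemize}[label={}]
\item $\nabla V$ is Lipschitz with constant $L_V$ and has linear growth;
\item the $C_b^4$ kernel bounds give the required smoothness in $(x,y)$, and control $\|k(\cdot,x)-k(\cdot,y)\|_{\mathcal H}$ and its derivatives by $C\|x-y\|$.
\end{itemize}
This is presumably exactly the content of Lemma~\ref{lem:witness-lipschitz}, which I would invoke directly. Checking the derivative features, such as $\nabla_2k(\cdot,x)$, is the only step that needs care, and it is routine with $C_b^4$.

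\textbf{Step 2: Couple and apply Cauchy--Schwarz.} Let $\gamma$ be an optimal $W_2$ coupling of $\mu_t$ and $\pi$. Using Bochner integrals,
\[
\KSD(\mu_t\|\pi)\le\int\|\xi_\pi(x)-\xi_\pi(y)\|\,d\gamma\le C\bigl(1+m_2(\mu_t)^{1/2}+m_2(\pi)^{1/2}\bigr)W_2(\mu_t,\pi).
\]

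\textbf{Step 3: Uniform moment bound and conclusion.} The second moment $m_2(\mu_t)$ is bounded uniformly in $t$. This follows from the triangle inequality $m_2(\mu_t)^{1/2}\le m_2(\pi)^{1/2}+W_2(\mu_t,\pi)$ together with Corollary~\ref{cor:mf-W2-target}, since $W_2(\mu_t,\pi)\le(2\KL(\mu_0\|\pi)/\alpha)^{1/2}$. Inserting \eqref{eq:mf-W2-target} then yields
\[
\KSD(\mu_t\|\pi)\le C e^{-\varepsilon\alpha t},\qquad C=C\bigl(\KL(\mu_0\|\pi),\alpha,m_2(\pi),k,L_V\bigr).
\]

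The main obstacle is Step 1, namely the weighted Lipschitz bound with linear growth caused by $\nabla V$. Everything else is immediate from earlier results.
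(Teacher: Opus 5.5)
Your proposal is correct and is essentially the paper's proof. Both arguments take an optimal $W_2$ coupling of $\mu_t$ and $\pi$, use $S_\pi(\pi)=0$ and the weighted Lipschitz bound of Lemma~\ref{lem:witness-lipschitz}, apply Cauchy--Schwarz, and conclude with Corollary~\ref{cor:mf-W2-target}. The only difference is that the lemma gives the one-sided weight $C(1+\|y\|)\|x-y\|$, so with $y\sim\pi$ only $m_2(\pi)$ enters. Your Step~3 uniform moment bound on $\mu_t$ is therefore unnecessary, although your derivation of it is valid.
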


\begin{proof}
Let $(X,Y)$ be an optimal coupling of $\mu_t$ and $\pi$.  Since
$S_\pi(\pi)=0$, Lemma~\ref{lem:witness-lipschitz} and Cauchy--Schwarz give
\[
    \KSD(\mu_t\|\pi)
    \le C\left(\int(1+\|y\|)^2\pi(dy)\right)^{1/2}
        W_2(\mu_t,\pi).
\]
Assumption~\ref{ass:core-target} gives $m_2(\pi)<\infty$, and
Corollary~\ref{cor:mf-W2-target} proves
\eqref{eq:mf-KSD-target}.
\end{proof}

\subsubsection*{Finite-particle target entropy}

Let $p_t^N$ be the density of $P_t^N$, and define
\[
    H_N(t):=\KL(P_t^N\|\Pi_N).
\]

\begin{lemma}[Empirical Stein identity with diagonal correction]\label{lem:configuration-stein}
For $x,y\in\R^d$, define the Stein kernel
\[
    \mathcal K_\pi(x,y)
    :=k(x,y)\nabla V(x)\cdot\nabla V(y)
      -\nabla_1k(x,y)\cdot\nabla V(y)
      -\nabla_2k(x,y)\cdot\nabla V(x)
      +\operatorname{tr}\nabla_{12}^2k(x,y).
\]
Then, for $X=(x_1,\ldots,x_N) \in (\R^d)^N$,
\[
    \KSD^2(\mu_X^N\|\pi)
    =\frac1{N^2}\sum_{i,j=1}^N\mathcal K_\pi(x_i,x_j).
\]
Moreover, if
\[
    F_i^N(X):=\frac1N\sum_{j=1}^NB(x_i,x_j),
\]
then, 
\begin{equation}\label{eq:configuration-stein}
    \sum_{i=1}^N
    \{-\nabla_i\cdot F_i^N(X)+\nabla V(x_i)\cdot F_i^N(X)\}
    =-N\KSD^2(\mu_X^N\|\pi)
      +\frac1N\sum_{i=1}^NC^\star(x_i).
\end{equation}
\end{lemma}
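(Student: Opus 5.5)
The plan is to prove both identities by direct computation. The first uses the reproducing property of \(\mathcal H^d\) together with its derivative version. The second uses a careful product-rule computation that separates the diagonal term \(j=i\), where \(x_i\) enters both arguments of \(B\).

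\emph{Step 1 (KSD as a double sum).} For the empirical measure \(\mu_X^N\), the Bochner integral \eqref{eq:witness} is the finite sum \(S_\pi(\mu_X^N)=\frac1N\sum_j\xi_\pi(x_j)\). Hence
\[
\KSD^2(\mu_X^N\|\pi)=\frac1{N^2}\sum_{i,j=1}^N\langle\xi_\pi(x_i),\xi_\pi(x_j)\rangle_{\mathcal H^d}.
\]
It remains to show \(\langle\xi_\pi(x),\xi_\pi(y)\rangle_{\mathcal H^d}=\mathcal K_\pi(x,y)\). Write the \(l\)-th component as \(\xi_{\pi,l}(x)=-k(\cdot,x)\partial_lV(x)+\partial_{2,l}k(\cdot,x)\). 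For a \(C^2\) (here \(C_b^4\) or \(C^3\)) kernel, the derivative reproducing property (Steinwart--Christmann/Zhou) gives \(\partial_{2,l}k(\cdot,x)\in\mathcal H\) and \(\langle f,\partial_{2,l}k(\cdot,x)\rangle=\partial_lf(x)\). Consequently:
\begin{itemize}
\item \(\langle k(\cdot,x),k(\cdot,y)\rangle=k(x,y)\);
\item \(\langle k(\cdot,x),\partial_{2,l}k(\cdot,y)\rangle=\partial_{2,l}k(x,y)\);
\item \(\langle\partial_{2,l}k(\cdot,x),k(\cdot,y)\rangle=\partial_{1,l}k(x,y)\);
\item \(\langle\partial_{2,l}k(\cdot,x),\partial_{2,l}k(\cdot,y)\rangle=\partial_{1,l}\partial_{2,l}k(x,y)\).
\end{itemize}
Summing over \(l\) yields the four terms of \(\mathcal K_\pi\), with \(-\nabla_2k(x,y)\cdot\nabla V(x)\) and \(-\nabla_1k(x,y)\cdot\nabla V(y)\) as the cross terms.

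\emph{Step 2 (splitting the divergence).} The map \(x_i\mapsto F_i^N(X)\) depends on \(x_i\) through the first argument of every \(B(x_i,x_j)\), and additionally through the second argument when \(j=i\). The chain rule therefore gives
\[
\nabla_i\cdot F_i^N(X)=\frac1N\sum_{j=1}^N(\nabla_1\cdot B)(x_i,x_j)+\frac1N D(x_i),\qquad D(x):=\sum_{l=1}^d\partial_{y_l}B_l(x,y)\big|_{y=x}.
\]
So the left side of \eqref{eq:configuration-stein} equals
\[
\frac1N\sum_{i,j}\bigl\{-(\nabla_1\cdot B)(x_i,x_j)+\nabla V(x_i)\cdot B(x_i,x_j)\bigr\}-\frac1N\sum_iD(x_i).
\]

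\emph{Step 3 (identifying the two pieces).} Compute \(-\nabla_1\cdot B(x,y)=\nabla_1k(x,y)\cdot\nabla V(y)-\operatorname{tr}\nabla^2_{12}k(x,y)\) and \(\nabla V(x)\cdot B(x,y)=-k(x,y)\nabla V(x)\cdot\nabla V(y)+\nabla V(x)\cdot\nabla_2k(x,y)\). Their sum is exactly \(-\mathcal K_\pi(x,y)\), so by Step 1 the double sum equals \(-\frac1N\sum_{i,j}\mathcal K_\pi(x_i,x_j)=-N\KSD^2(\mu_X^N\|\pi)\).

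For the diagonal term, \(\partial_{y_l}B_l=-\partial_{2,l}k\,\partial_lV(y)-k\,\partial_{ll}V(y)+\partial_{2,l}^2k\). Evaluating at \(y=x\) and summing over \(l\) gives \(-D(x)=\nabla_2k(x,x)\cdot\nabla V(x)+k(x,x)\Delta V(x)-\Delta_2k(x,x)=C^\star(x)\), which is \eqref{eq:Cstar-function}. This gives the term \(\frac1N\sum_iC^\star(x_i)\).

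The only delicate point is Step 1: justifying the derivative reproducing property, and in particular the mixed-derivative identity, under the stated regularity of \(k\). All derivatives involved are at most second order and \(V\in C^3\), so the computation is valid pointwise. The rest is bookkeeping of the \(j=i\) term, which is what produces the diagonal correction \(C^\star\).
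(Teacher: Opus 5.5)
Your proposal is correct and follows essentially the same argument as the paper. The derivative-reproducing identities give \(\langle\xi_\pi(x),\xi_\pi(y)\rangle_{\mathcal H^d}=\mathcal K_\pi(x,y)\); for every pair \((i,j)\), the first-argument divergence combined with \(\nabla V(x_i)\cdot B(x_i,x_j)\) yields \(-\mathcal K_\pi(x_i,x_j)\); and the extra second-argument derivative in the \(j=i\) term yields exactly \(C^\star(x_i)\), with the only difference being that you write out the four RKHS inner products (which implicitly use the symmetry of \(k\)) that the paper leaves to ``direct expansion.''
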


\begin{proof}
The derivative-reproducing identities give
\[
    \langle\xi_\pi(x),\xi_\pi(y)\rangle_{\mathcal H^d}
    =\mathcal K_\pi(x,y).
\]
Consequently,
\[
    \KSD^2(\mu_X^N\|\pi)
    =\left\|\frac1N\sum_{i=1}^N\xi_\pi(x_i)\right\|_{\mathcal H^d}^2
    =\frac1{N^2}\sum_{i,j=1}^N\mathcal K_\pi(x_i,x_j),
\]
which proves the first assertion.
If $i\ne j$, differentiation with respect to $x_i$ acts only on the first argument of $B$, and direct expansion gives
\[
    -\nabla_i\cdot B(x_i,x_j)
    +\nabla V(x_i)\cdot B(x_i,x_j)
    =-\mathcal K_\pi(x_i,x_j).
\]
If $i=j$, the map $x_i\mapsto B(x_i,x_i)$ depends on both arguments.  The first-argument derivative gives the same term $-\mathcal K_\pi(x_i,x_i)$, while the additional second-argument derivative is
\[
    -\nabla_2\cdot B(x_i,x_i)
    =\nabla_2k(x_i,x_i)\cdot\nabla V(x_i)
      +k(x_i,x_i)\Delta V(x_i)-\Delta_2k(x_i,x_i)
    =C^\star(x_i).
\]
Summing over $i,j$ and dividing by $N$ proves \eqref{eq:configuration-stein}.
\end{proof}

The following theorem establishes entropy dissipation at the particle level. This gives long-time target-convergence rates and is also a crucial estimate for our uniform-in-time PoC results. For deterministic SVGD, such an approach was taken in \cite{banerjee2024,balasubramanian2026}. In several entropy arguments below, we use the general entropy evolution equation derived in Appendix~\ref{app:entropy-chain-rule}.

\begin{theorem}[Full-law target entropy]\label{prop:full-law}
Under Assumption~\ref{ass:core-target} and either Assumption~\ref{ass:ksd-kernel} or Assumption~\ref{ass:W2},
the map \(t\mapsto H_N(t)\) is locally absolutely continuous and, for
\(0\le s\le t\),
\begin{align}
    H_N(t)-H_N(s)
    &=-N\int_s^t\E\KSD^2(\mu_u^N\|\pi)\,du
     -\varepsilon\int_s^t I(P_u^N\|\Pi_N)\,du
    \notag\\
    &\quad
     +\frac1N\int_s^t
       \E\sum_{i=1}^NC^\star(X_i^N(u))\,du.
    \label{eq:full-law-integrated}
\end{align}
Consequently,
\begin{equation}\label{eq:full-law-bound}
    H_N(t)
    \le e^{-2\varepsilon\alpha t}H_N(0)
      +\frac{C^\star}{2\varepsilon\alpha}
       \{1-e^{-2\varepsilon\alpha t}\}.
\end{equation}
For $P_0^N=\mu_0^{\otimes N}$,
\begin{equation}\label{eq:entropy-per-particle}
    \frac{H_N(t)}N
    \le e^{-2\varepsilon\alpha t}\KL(\mu_0\|\pi)
      +\frac{C^\star}{2\varepsilon\alpha N}.
\end{equation}
\end{theorem}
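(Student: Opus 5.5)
The plan follows the mean-field argument of Proposition~\ref{prop:mf-entropy}, now on $(\R^d)^N$. I would apply the general entropy evolution formula, Lemma~\ref{lem:finite-dimensional-entropy}, with $P_t=P_t^N$ and the stationary reference $Q_t=\Pi_N$. The reference $\Pi_N$ is invariant for the product Langevin dynamics $dY_i=-\varepsilon\nabla V(Y_i)\,dt+\sqrt{2\varepsilon}\,dW_i$, so the drift difference is the vector field $F^N(X)=(F_1^N(X),\dots,F_N^N(X))$ from Lemma~\ref{lem:configuration-stein}. The hypotheses of that lemma should follow from the growth and local Lipschitz bounds used in Proposition~\ref{prop:wellposed}, namely \eqref{eq:B-local-lip}, linear growth of $F^N$, and finite-time second moments, together with regularity of $p_t^N$ (a finite-dimensional analogue of Lemma~\ref{lem:nonlinear-drift-regularity}). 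The lemma should then give local absolute continuity and
\[
H_N(t)-H_N(s)=\int_s^t\int \sum_{i=1}^N F_i^N\cdot\nabla_i\log\frac{p_u^N}{d\Pi_N/dx}\,dP_u^N\,du-\varepsilon\int_s^t I(P_u^N\|\Pi_N)\,du .
\]

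For the cross term, I would write $\nabla_i\log(p^N/\Pi_N)=\nabla_i\log p^N+\nabla V(x_i)$ and integrate by parts, after cutoffs as stated before Proposition~\ref{prop:mf-entropy}. This turns $\int F_i^N\cdot\nabla_i p^N$ into $-\int(\nabla_i\cdot F_i^N)\,p^N$, so the integrand becomes
\[
\E\sum_i\{-\nabla_i\cdot F_i^N+\nabla V(X_i)\cdot F_i^N\}.
\]
Identity \eqref{eq:configuration-stein} then gives $-N\E\KSD^2(\mu_u^N\|\pi)+\frac1N\E\sum_iC^\star(X_i^N(u))$, which proves \eqref{eq:full-law-integrated}. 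The main obstacle is justifying this integration by parts and the vanishing of the cutoff remainders. Since $\nabla\cdot F_i^N$ involves $\nabla^2V$ and second derivatives of $k$, it is bounded by $C(1+\|x\|)$, and $F_i^N$ grows linearly. The boundary terms are therefore controlled by $\int\|x\|\,|\nabla\chi_R|\,p^N$ plus a term in $\sqrt{I}$ times second moments. I would restrict to the set of $u$ where $I(P_u^N\|\Pi_N)<\infty$; this holds for a.e.\ $u$, because the identity already shows $\int_s^t I<\infty$ once the other terms are bounded.

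For the bound \eqref{eq:full-law-bound}, I would drop $-N\E\KSD^2\le0$ and bound the diagonal term by $\frac1N\sum_i C^\star(X_i)\le C^\star$ using \eqref{eq:Cstar-bound}. Next I would tensorize the LSI: $\Pi_N$ satisfies the LSI with the same constant $\alpha$, so $I(P_u^N\|\Pi_N)\ge2\alpha H_N(u)$. This gives $H_N'\le-2\varepsilon\alpha H_N+C^\star$ for a.e.\ $t$. Gr\"onwall's lemma in integrated form, applied to $e^{2\varepsilon\alpha t}H_N(t)$, yields \eqref{eq:full-law-bound}. Finally, for $P_0^N=\mu_0^{\otimes N}$, additivity of entropy on products gives $H_N(0)=N\KL(\mu_0\|\pi)$. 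Dividing by $N$ and using $1-e^{-2\varepsilon\alpha t}\le1$ gives \eqref{eq:entropy-per-particle}.
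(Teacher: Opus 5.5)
Your proposal is correct and follows the paper's proof essentially step for step. You apply Lemma~\ref{lem:finite-dimensional-entropy} against the invariant product Langevin reference $\Pi_N$ with drift difference $F^N$, integrate by parts, invoke \eqref{eq:configuration-stein}, then tensorize the LSI and apply Gr\"onwall. Your extra justification of the cutoff remainders, and of restricting to times where the Fisher information is finite, supplies detail that the paper compresses into ``applying integration by parts, for almost every $u>0$''.
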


\begin{proof}
Let \(Q^N\) be the law of \(N\) independent Langevin diffusions with drift
\(-\varepsilon\nabla V\) and diffusion coefficient
\(\sqrt{2\varepsilon}I\), initialized by
\(
    Q_0^N=\Pi_N.
\)
Since \(\Pi_N\) is invariant for this dynamics,
\(
    Q_t^N=\Pi_N, t\ge0.
\)
Apply Lemma~\ref{lem:finite-dimensional-entropy} to \(P_t^N\) and
\(Q_t^N\).  Their drift difference is
\(F^N=(F_1^N,\ldots,F_N^N)\), where we recall
\(
    F_i^N(X)=\frac1N\sum_{j=1}^NB(x_i,x_j).
\)
The lemma gives
\begin{align}
 H_N(t)-H_N(s)
 =\int_s^t\int
   \sum_{i=1}^NF_i^N\cdot
   \nabla_i\log\frac{p_u^N}{\Pi_N}\,p_u^N\,dX\,du
 -\varepsilon\int_s^t I(P_u^N\|\Pi_N)\,du.
 \label{eq:full-law-before-stein}
\end{align}
Applying integration by parts, for almost every
\(u>0\),
\[
 \int\sum_{i=1}^NF_i^N\cdot
   \nabla_i\log\frac{p_u^N}{\Pi_N}\,p_u^N\,dX
 =-N\E\KSD^2(\mu_u^N\|\pi)
  +\frac1N\E\sum_{i=1}^NC^\star(X_i^N(u)).
\]
Substitution into \eqref{eq:full-law-before-stein} proves
\eqref{eq:full-law-integrated}.

Tensorization of the target LSI gives
\(I(P_u^N\|\Pi_N)\ge2\alpha H_N(u)\).  Thus, for almost every \(u>0\),
\[
 H_N'(u)\le-2\varepsilon\alpha H_N(u)+C^\star.
\]
This gives
\eqref{eq:full-law-bound}.  Product initialization gives
\(H_N(0)=N\KL(\mu_0\|\pi)\), proving
\eqref{eq:entropy-per-particle}.
\end{proof}

\begin{remark}[Diagonal correction Assumption~\ref{ass:ksd-kernel} in Theorem~\ref{prop:full-law}]\label{rem:Cstar}
The condition \eqref{eq:Cstar-bound} is the same diagonal-correction hypothesis used in Assumption~2.1(iv) of \cite{balasubramanian2026}. This condition was required to obtain the particle rates for deterministic SVGD in \cite{banerjee2024}, and it is not implied by boundedness of an arbitrary nonstationary kernel.  For a symmetric translation-invariant kernel $k(x,y)=\Psi(x-y)$,
\[
    C^\star(x)=\Psi(0)\Delta V(x)-\Delta\Psi(0),
\]
so boundedness of the Hessian of $V$ is sufficient.  For the Gaussian RBF
$k(x,y)=\exp(-\|x-y\|^2/(2h^2))$,
\[
    C^\star(x)=\Delta V(x)+\frac d{h^2}.
\]
More generally, symmetry implies
\[
    C^\star(x)
    =\frac12\{\nabla_1k(x,x)+\nabla_2k(x,x)\}\cdot\nabla V(x)
      +k(x,x)\Delta V(x)-\Delta_2k(x,x).
\]
Thus only kernel values and derivatives evaluated on the diagonal
\(\{(x,x):x\in\mathbb R^d\}\) enter the correction.  If the pointwise bound is unavailable, it can be replaced in Theorem~\ref{prop:full-law} by
\[
    \log\int e^{\lambda(C^\star(x))_+}\pi(dx)<\infty
\]
for some $\lambda>0$.  The entropy variational inequality at scale $\lambda N$ and the product structure of $\Pi_N$ give
\begin{align*}
\E_{P_t^N}\frac1N\sum_{i=1}^N(C^\star(X_i^N(t)))_+
    & \le \frac{H_N(t)}{\lambda N}
      +\frac1{\lambda N}
       \log\int
       \exp\left\{\lambda\sum_{i=1}^N(C^\star(x_i))_+\right\}
       d\Pi_N(X)
    \\
    & =\frac{H_N(t)}{\lambda N}
      +\frac1\lambda\log\int e^{\lambda(C^\star(x))_+}\pi(dx).
\end{align*}
Using this estimate in
\eqref{eq:full-law-integrated} yields, with $N$ large enough that
$2\varepsilon\alpha-(\lambda N)^{-1}>0$,
\[
    H_N(t)
    \le e^{-(2\varepsilon\alpha-(\lambda N)^{-1})t}H_N(0)
      +\frac{\log\int e^{\lambda(C^\star(x))_+}\pi(dx)}
             {\lambda(2\varepsilon\alpha-(\lambda N)^{-1})}.
\]
This extension enlarges only the target-entropy part of the argument; it does not remove the uniform growth condition needed for finite-time propagation of chaos.
\end{remark}

\subsection{Results in Langevin KSD}

The integrals appearing below are finite by Assumption~\ref{ass:core-target} and Lemma~\ref{lem:witness-lipschitz}. The following result uses the above entropy dissipation to obtain rates of convergence to the target in KSD.

\begin{proposition}[Pointwise finite-particle convergence to the target in KSD]\label{prop:particle-KSD-target}
Under Assumptions~\ref{ass:core-target} and \ref{ass:ksd-kernel},
\begin{align}
    \E\KSD(\mu_t^N\|\pi)
    &\le C\left(\int(1+\|y\|)^2\pi(dy)\right)^{1/2}
       \left(\frac{2H_N(t)}{\alpha N}\right)^{1/2}
       +\frac1{\sqrt N}
        \left(\int\|\xi_\pi(y)\|_{\mathcal H^d}^2\pi(dy)\right)^{1/2}
       \label{eq:particle-KSD-entropy}\\
    &\le C e^{-\varepsilon\alpha t}+\frac{C}{\sqrt N}.
       \label{eq:particle-KSD-target}
\end{align}
\end{proposition}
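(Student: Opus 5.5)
We compare the empirical witness with an i.i.d.\ sample from the target, through a coupling of \(P_t^N\) with \(\Pi_N\).

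Let \(Y=(Y_1,\ldots,Y_N)\sim\Pi_N\) and write \(\nu^N=N^{-1}\sum_i\delta_{Y_i}\). By the triangle inequality in \(\mathcal H^d\),
\[
    \KSD(\mu_t^N\|\pi)\le \|S_\pi(\mu_t^N)-S_\pi(\nu^N)\|_{\mathcal H^d}+\|S_\pi(\nu^N)\|_{\mathcal H^d}.
\]

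\textbf{Sampling term.} The vectors \(\xi_\pi(Y_i)\) are i.i.d.\ and centered, since \(S_\pi(\pi)=0\) by Lemma~\ref{lem:witness-lipschitz}. Cross terms therefore vanish, and
\[
    \E\|S_\pi(\nu^N)\|^2=N^{-1}\int\|\xi_\pi\|_{\mathcal H^d}^2\,d\pi .
\]
Jensen's inequality then gives the second summand of \eqref{eq:particle-KSD-entropy}. This integral is finite by the Lipschitz-type bound of Lemma~\ref{lem:witness-lipschitz}: \(\|\xi_\pi(y)\|\le C(1+\|y\|)\), combined with \(m_2(\pi)<\infty\).

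\textbf{Coupling term.} Take \((X,Y)\) to be an optimal \(W_2\)-coupling of \(P_t^N\) and \(\Pi_N\) on \((\R^d)^N\). By Lemma~\ref{lem:witness-lipschitz}, in the form \(\|\xi_\pi(x)-\xi_\pi(y)\|\le C(1+\|y\|)\|x-y\|\),
\[
    \|S_\pi(\mu_X^N)-S_\pi(\nu^N)\|\le \frac CN\sum_i(1+\|Y_i\|)\|X_i-Y_i\|.
\]
Apply Cauchy--Schwarz, first over \(i\) and then in expectation. The factor \(\bigl(\E N^{-1}\sum_i(1+\|Y_i\|)^2\bigr)^{1/2}\) equals \(\bigl(\int(1+\|y\|)^2d\pi\bigr)^{1/2}\) because the \(Y_i\) have marginal \(\pi\). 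The remaining factor is \(\bigl(N^{-1}\E\sum_i\|X_i-Y_i\|^2\bigr)^{1/2}=N^{-1/2}W_2(P_t^N,\Pi_N)\).

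The LSI for \(\pi\) tensorizes with the same constant, so \(\Pi_N\in T_2(1/\alpha)\) dimension-free. This gives \(W_2^2(P_t^N,\Pi_N)\le (2/\alpha)H_N(t)\), and hence the first summand of \eqref{eq:particle-KSD-entropy}.

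\textbf{Main obstacle.} The delicate point is the Lipschitz estimate: the weight \((1+\|y\|)\) must fall on the target variable so that only moments of \(\pi\) appear. Lemma~\ref{lem:witness-lipschitz} is assumed to provide this one-sided weighted form, as it was used in Corollary~\ref{cor:mf-KSD-target}. Otherwise one would symmetrize and control \(m_2(P_t^N)\) separately.

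\textbf{Final rate.} Insert \eqref{eq:entropy-per-particle}:
\[
    \frac{H_N(t)}{N}\le e^{-2\varepsilon\alpha t}\KL(\mu_0\|\pi)+\frac{C^\star}{2\varepsilon\alpha N}.
\]
Using \(\sqrt{a+b}\le\sqrt a+\sqrt b\), the first summand is bounded by \(Ce^{-\varepsilon\alpha t}+C N^{-1/2}\). Adding the sampling term gives \eqref{eq:particle-KSD-target}.
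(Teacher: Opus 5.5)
Your proposal is correct and matches the paper's proof essentially step for step. Like the paper, you take an optimal $W_2$-coupling of $P_t^N$ with $\Pi_N$ and split the empirical witness by the triangle inequality in $\mathcal H^d$. You bound the coupling term with the one-sided weighted estimate \eqref{eq:witness-lipschitz}, Cauchy--Schwarz and the tensorized $T_2(1/\alpha)$ inequality, and the sampling term by the centered i.i.d.\ variance computation, then conclude with \eqref{eq:entropy-per-particle}. The weighted form you flagged as the main obstacle is exactly what Lemma~\ref{lem:witness-lipschitz} provides, so no symmetrization is needed.
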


\begin{proof}
Let $(X,Y)$ be an optimal coupling of $P_t^N$ and $\Pi_N$.  The coordinates $Y_1,\ldots,Y_N$ are i.i.d. with law $\pi$.  By the norm representation of the witness map,
\begin{align*}
    \KSD(\mu_X^N\|\pi)
    &\le
    \left\|\frac1N\sum_{i=1}^N
      \{\xi_\pi(X_i)-\xi_\pi(Y_i)\}\right\|_{\mathcal H^d}
    +\left\|\frac1N\sum_{i=1}^N\xi_\pi(Y_i)\right\|_{\mathcal H^d}.
\end{align*}
For the first term, Lemma~\ref{lem:witness-lipschitz} and Cauchy--Schwarz give
\begin{align*}
    \E\frac1N\sum_{i=1}^N
       \|\xi_\pi(X_i)-\xi_\pi(Y_i)\|_{\mathcal H^d}
    &\le C\left(\int(1+\|y\|)^2\pi(dy)\right)^{1/2}
      \left(\frac1N\E\sum_{i=1}^N\|X_i-Y_i\|^2\right)^{1/2}\\
    &\le C\left(\int(1+\|y\|)^2\pi(dy)\right)^{1/2}
      \left(\frac{2H_N(t)}{\alpha N}\right)^{1/2},
\end{align*}
where the last step uses the tensorized $T_2(1/\alpha)$ inequality for $\Pi_N$.
For the second term, Stein's identity gives $\E\xi_\pi(Y_i)=0$, and independence yields
\[
    \E\left\|\frac1N\sum_{i=1}^N\xi_\pi(Y_i)\right\|_{\mathcal H^d}
    \le\left(\frac1{N^2}\sum_{i=1}^N\E\|\xi_\pi(Y_i)\|_{\mathcal H^d}^2\right)^{1/2}
    =\frac1{\sqrt N}
      \left(\int\|\xi_\pi(y)\|_{\mathcal H^d}^2\pi(dy)\right)^{1/2}.
\]
This proves \eqref{eq:particle-KSD-entropy}.  Theorem~\ref{prop:full-law} gives \eqref{eq:particle-KSD-target}.
\end{proof}

\begin{remark}[Why time averaging is not needed]\label{rem:no-averaging}
Proposition~\ref{prop:particle-KSD-target} and Corollary~\ref{cor:mf-KSD-target} play the roles of Lemmas~2.3 and 2.4 in \cite{balasubramanian2026}, but they are pointwise in physical time.  In deterministic SVGD the entropy identity controls only the time integral of $\KSD^2$, which leads naturally to time-averaged measures.  Here the additional term $-\varepsilon I(\mu_t\|\pi)$, together with the target LSI, gives exponential decay of the last iterate.  At finite $N$, the full-law entropy estimate and the tensorized $T_2$ inequality transfer this pointwise decay to the empirical KSD up to the $N^{-1/2}$ sampling floor.
\end{remark}

\subsection{Results in \texorpdfstring{$W_2$}{W2} distance}\label{sec:target-W2}

The target-convergence part of the $W_2$ argument requires no KSD-to-$W_2$ comparison.  It follows directly from the target LSI, full-law entropy, and the Lipschitz property of the empirical-measure map.

\begin{lemma}[Empirical-measure map]\label{lem:empirical-map}
For $X,Y\in(\R^d)^N$,
\begin{equation}\label{eq:empirical-map}
    W_2^2(\mu_X^N,\mu_Y^N)
    \le\frac1N\sum_{i=1}^N\|x_i-y_i\|^2.
\end{equation}
\end{lemma}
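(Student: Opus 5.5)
The plan is to exhibit one explicit coupling of $\mu_X^N$ and $\mu_Y^N$ and compute its transport cost. No optimization is needed, because $W_2^2$ is an infimum over couplings and any admissible coupling gives an upper bound.

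The natural candidate is the index-matching coupling
\[
    \gamma:=\frac1N\sum_{i=1}^N\delta_{(x_i,y_i)}\in\mathcal P(\R^d\times\R^d).
\]
First we check its marginals. Projecting onto the first coordinate gives $\frac1N\sum_i\delta_{x_i}=\mu_X^N$, and projecting onto the second gives $\mu_Y^N$. This holds even when some points coincide, since repeated atoms simply add mass in the same way on both sides. So $\gamma$ is an admissible coupling of $\mu_X^N$ and $\mu_Y^N$.

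Next we compute its cost:
\[
    \int\|x-y\|^2\,\gamma(dx,dy)=\frac1N\sum_{i=1}^N\|x_i-y_i\|^2 .
\]
Both empirical measures have finite support, so they lie in $\mathcal P_2(\R^d)$ and $W_2$ is well defined. Taking the infimum over all couplings then gives \eqref{eq:empirical-map} immediately.

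There is no genuine obstacle here; the only point needing care is the marginal check in the presence of repeated points, which is handled by the additivity just noted. For later use, we will remark that the bound holds pathwise for any coupling $(X,Y)$ of random configurations. Taking expectations therefore gives
\[
    \E W_2^2(\mu_X^N,\mu_Y^N)\le\frac1N\E\sum_{i}\|X_i-Y_i\|^2,
\]
which is the form that feeds into the tensorized $T_2$ and synchronous coupling arguments.
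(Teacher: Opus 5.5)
Your proposal is correct and is essentially the paper's proof: the index-matching measure $N^{-1}\sum_{i=1}^N\delta_{(x_i,y_i)}$ is an admissible coupling of $\mu_X^N$ and $\mu_Y^N$, and its cost is exactly the right-hand side. Your closing remark about taking expectations under a coupling of random configurations is also how the paper uses the lemma later, in the tensorized $T_2$ and synchronous-coupling arguments.
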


\begin{proof}
The measure $N^{-1}\sum_i\delta_{(x_i,y_i)}$ is an admissible coupling of the two empirical measures.
\end{proof}
Let $Y_1,\ldots,Y_N$ be i.i.d. with law $\pi$.

The next result furnishes rates of convergence of the empirical measure process to the target in $W_2$ distance.

\begin{proposition}[Pointwise empirical convergence to the target in $W_2$]\label{prop:particle-W2-target}
Under Assumptions~\ref{ass:core-target} and either kernel assumption in Assumption~\ref{ass:ksd-kernel} or Assumption~\ref{ass:W2},
\begin{align}
    \E W_2^2(\mu_t^N,\pi)
    \le{}&
    \frac4\alpha e^{-2\varepsilon\alpha t}\KL(\mu_0\|\pi)
    +\frac{2C^\star}{\varepsilon\alpha^2N}
      \{1-e^{-2\varepsilon\alpha t}\}
    +2\E W_2^2\left(\frac1N\sum_{i=1}^N\delta_{Y_i},\pi\right).
    \label{eq:particle-W2-target}
\end{align}
\end{proposition}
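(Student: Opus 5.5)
The argument is a triangle inequality through the i.i.d.\ target sample, with the particle-to-sample term handled by an optimal coupling and the full-law entropy bound. Fix \(t\), and let \((X,Y)\) be an optimal \(W_2\)-coupling of \(P_t^N\) and \(\Pi_N\) on \((\R^d)^N\times(\R^d)^N\). Its second marginal \(Y=(Y_1,\ldots,Y_N)\) is i.i.d.\ with law \(\pi\), so it can be used as the sample \(Y_i\) in the statement. The triangle inequality for \(W_2\) together with \((a+b)^2\le 2a^2+2b^2\) gives, pointwise,
\[
    W_2^2(\mu_X^N,\pi)
    \le 2W_2^2(\mu_X^N,\mu_Y^N)
      +2W_2^2\Bigl(\frac1N\sum_{i=1}^N\delta_{Y_i},\pi\Bigr).
\]
The second term, after taking expectations, is exactly the sampling term in \eqref{eq:particle-W2-target}. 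We never need to bound it here.

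For the first term I would apply Lemma~\ref{lem:empirical-map} and then take expectations:
\[
    \E W_2^2(\mu_X^N,\mu_Y^N)\le\frac1N\E\sum_{i=1}^N\|X_i-Y_i\|^2=\frac1N W_2^2(P_t^N,\Pi_N).
\]
The equality holds because the coupling is optimal for the Euclidean cost on \((\R^d)^N\). The target LSI tensorizes, so \(\Pi_N\) satisfies an LSI with constant \(\alpha\). Hence \(\Pi_N\in T_2(1/\alpha)\), which gives \(W_2^2(P_t^N,\Pi_N)\le (2/\alpha)H_N(t)\). If \(H_N(t)=\infty\), there is nothing to prove, but Theorem~\ref{prop:full-law} shows it is finite.

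It remains to insert \eqref{eq:full-law-bound} with \(H_N(0)=N\KL(\mu_0\|\pi)\), which holds by product initialization:
\[
    \frac{4}{\alpha N}H_N(t)
    \le \frac4\alpha e^{-2\varepsilon\alpha t}\KL(\mu_0\|\pi)
     +\frac{4}{\alpha N}\cdot\frac{C^\star}{2\varepsilon\alpha}\{1-e^{-2\varepsilon\alpha t}\}.
\]
This reproduces the first two terms of \eqref{eq:particle-W2-target} with exactly the stated constants \(4/\alpha\) and \(2C^\star/(\varepsilon\alpha^2 N)\).

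The only step that is not routine is the existence of the optimal coupling and the validity of the \(T_2\) bound on \((\R^d)^N\). Both require \(P_t^N\in\mathcal P_2\). This follows from the finite-time moment bounds of Proposition~\ref{prop:wellposed}, or alternatively from the finiteness of \(H_N(t)\) combined with \(T_2\). Everything else is bookkeeping of constants.
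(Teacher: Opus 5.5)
Your proposal is correct and follows essentially the same route as the paper: an optimal coupling of $P_t^N$ with $\Pi_N$, Lemma~\ref{lem:empirical-map}, the tensorized $T_2(1/\alpha)$ inequality for $\Pi_N$ obtained from the LSI via Otto--Villani, and the squared triangle inequality through $\mu_Y^N$. You then close with Theorem~\ref{prop:full-law} and $H_N(0)=N\KL(\mu_0\|\pi)$, and your constant bookkeeping matches the stated bound exactly.
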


\begin{proof}
Let $(X,Y)$ be an optimal coupling of $P_t^N$ and $\Pi_N$.  Lemma~\ref{lem:empirical-map} gives
\[
    \E W_2^2(\mu_X^N,\mu_Y^N)
    \le\frac1N W_2^2(P_t^N,\Pi_N).
\]
Since log-Sobolev inequalities tensorize, \(\Pi_N=\pi^{\otimes N}\)
satisfies the same LSI with constant \(\alpha\).  By the Otto--Villani
theorem \cite{ottoVillani2000}, this implies the Talagrand $T_2$ inequality
\[
    W_2^2(P_t^N,\Pi_N)
    \le \frac{2}{\alpha}\KL(P_t^N\|\Pi_N)
    =\frac{2}{\alpha}H_N(t).
\]
Using the squared triangle inequality through the empirical measure $\mu_Y^N$,
\[
    \E W_2^2(\mu_t^N,\pi)
    \le\frac4{\alpha N}H_N(t)
      +2\E W_2^2\left(\frac1N\sum_{i=1}^N\delta_{Y_i},\pi\right).
\]
Theorem~\ref{prop:full-law} completes the proof.
\end{proof}

\begin{proposition}[Uniform second moments]\label{prop:second-moments}
Under the assumptions of Proposition~\ref{prop:particle-W2-target},
\begin{equation}\label{eq:uniform-second-moments}
    \sup_{t\ge0}m_2(\mu_t)
    +\sup_{N\ge1}\sup_{t\ge0}\E m_2(\mu_t^N)<\infty.
\end{equation}
Consequently,
\[
    \int_0^T\{1+m_2(\mu_t)+\E m_2(\mu_t^N)\}\,dt
    \le C(1+T).
\]
\end{proposition}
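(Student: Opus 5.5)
The plan is to obtain both uniform bounds from the target-convergence estimates already proved, using the elementary inequality $m_2(\rho)\le 2W_2^2(\rho,\pi)+2m_2(\pi)$. The inequality follows by taking an optimal coupling $(X,Y)$ of $\rho$ and $\pi$ and writing $\|X\|^2\le2\|X-Y\|^2+2\|Y\|^2$. Recall that $m_2(\pi)<\infty$ by Assumption~\ref{ass:core-target}.

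For the mean-field law, Corollary~\ref{cor:mf-W2-target} gives $W_2^2(\mu_t,\pi)\le (2/\alpha)\KL(\mu_0\|\pi)$ for every $t\ge0$. Hence
\[
\sup_{t\ge0}m_2(\mu_t)\le \frac4\alpha\KL(\mu_0\|\pi)+2m_2(\pi)<\infty .
\]

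For the particle system, I would avoid the empirical sampling term in Proposition~\ref{prop:particle-W2-target} and work directly on the product space. Note that $\E m_2(\mu_t^N)=N^{-1}m_2(P_t^N)$, where $m_2(P_t^N)=\E\sum_i\|X_i^N(t)\|^2$. Applying the same coupling inequality on $(\R^d)^N$ gives
\[
\E m_2(\mu_t^N)\le \frac2N W_2^2(P_t^N,\Pi_N)+2m_2(\pi).
\]
The tensorized $T_2(1/\alpha)$ inequality for $\Pi_N$, used in the proof of Proposition~\ref{prop:particle-W2-target}, then gives $W_2^2(P_t^N,\Pi_N)\le (2/\alpha)H_N(t)$. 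By \eqref{eq:entropy-per-particle},
\[
\frac{H_N(t)}N\le \KL(\mu_0\|\pi)+\frac{C^\star}{2\varepsilon\alpha}.
\]
This bound is uniform in $N\ge1$ and $t\ge0$. Combining, $\E m_2(\mu_t^N)\le \frac4\alpha\KL(\mu_0\|\pi)+\frac{2C^\star}{\varepsilon\alpha^2}+2m_2(\pi)$.

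The integrated bound follows by integrating the constant bound over $[0,T]$. The only delicate point is justifying $T_2$ for $P_t^N$, which needs $P_t^N\ll\Pi_N$ with $H_N(t)<\infty$. This is supplied by Theorem~\ref{prop:full-law} once $H_N(0)=N\KL(\mu_0\|\pi)<\infty$, and the finite-time moments from Proposition~\ref{prop:wellposed} guarantee that all quantities are well defined. Alternatively, one can read the particle bound off \eqref{eq:particle-W2-target}, since $\E W_2^2(N^{-1}\sum\delta_{Y_i},\pi)\le 2m_2(\pi)$ trivially, but the product-space route above is cleaner.
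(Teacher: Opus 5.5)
Your proof is correct. The mean-field half is exactly the paper's argument.

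For the particle half, the paper takes the route you list as your alternative. It applies $m_2(\nu)\le 2W_2^2(\nu,\pi)+2m_2(\pi)$ to the empirical measure $\nu=\mu_t^N$ and then invokes Proposition~\ref{prop:particle-W2-target}. That proposition starts from an optimal coupling of $P_t^N$ and $\Pi_N$, passes to empirical measures via Lemma~\ref{lem:empirical-map}, and uses a squared triangle inequality through $\mu_Y^N$. This produces an i.i.d.\ sampling term, which the paper bounds crudely by $4m_2(\pi)$ using the product coupling. Your $2m_2(\pi)$ bound for that term is also valid, via the independent coupling.

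Your primary route instead uses the exact identity $\E m_2(\mu_t^N)=N^{-1}\E\sum_i\|X_i^N(t)\|^2$ and applies the moment inequality directly on $(\R^d)^N$ against $\Pi_N$. The empirical-measure detour and the sampling term never appear. You use the same two ingredients as the paper: the tensorized $T_2(1/\alpha)$ inequality and \eqref{eq:entropy-per-particle}. You obtain the sharper bound $\frac4\alpha\KL(\mu_0\|\pi)+\frac{2C^\star}{\varepsilon\alpha^2}+2m_2(\pi)$, compared with the paper's $\frac8\alpha\KL(\mu_0\|\pi)+\frac{4C^\star}{\varepsilon\alpha^2}+18m_2(\pi)$.

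The paper's route has the modest advantage of reusing an already-stated proposition as a black box. Yours is shorter and loses nothing, because $\E m_2(\mu_t^N)$ is a linear functional of the joint law and needs no comparison at the level of empirical measures.
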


\begin{proof}
By Assumption~\ref{ass:core-target}, the target has a finite second moment, $m_2(\pi)<\infty$.

We first control the mean-field second moment. For every
$\nu\in\mathcal P_2(\R^d)$, let $\gamma$ be an optimal coupling of
$\nu$ and $\pi$. Minkowski's inequality in $L^2(\gamma)$ gives
\begin{align}
    m_2(\nu)^{1/2}
    &=
    \left(
        \int_{\R^d\times\R^d}\|x\|^2\,\gamma(dx,dy)
    \right)^{1/2}
    \notag\\
    &\le
    \left(
        \int_{\R^d\times\R^d}\|x-y\|^2\,\gamma(dx,dy)
    \right)^{1/2}
    +
    \left(
        \int_{\R^d\times\R^d}\|y\|^2\,\gamma(dx,dy)
    \right)^{1/2}
    \notag\\
    &=
    W_2(\nu,\pi)+m_2(\pi)^{1/2}.
    \label{eq:moment-from-W2}
\end{align}
Equivalently,
\begin{equation}\label{eq:moment-from-W2-squared}
    m_2(\nu)
    \le
    2W_2^2(\nu,\pi)+2m_2(\pi).
\end{equation}
Applying this to $\nu=\mu_t$ and using
\eqref{eq:mf-W2-target}, we obtain
\begin{align}
    m_2(\mu_t)
    &\le
    2W_2^2(\mu_t,\pi)+2m_2(\pi)
    \notag\\
    &\le
    \frac{4}{\alpha}
    e^{-2\varepsilon\alpha t}
    \KL(\mu_0\|\pi)
    +2m_2(\pi).
    \label{eq:mf-second-moment-bound}
\end{align}
Consequently,
\[
    \sup_{t\ge0}m_2(\mu_t)
    \le
    \frac{4}{\alpha}\KL(\mu_0\|\pi)
    +2m_2(\pi)
    <\infty.
\]
We next control the second moment of the particle. Applying \eqref{eq:moment-from-W2-squared} with $\nu=\mu_t^N$ gives
\[
    m_2(\mu_t^N)
    \le
    2W_2^2(\mu_t^N,\pi)+2m_2(\pi).
\]
Taking expectations and invoking
Proposition~\ref{prop:particle-W2-target}, we obtain
\begin{align}
    \E m_2(\mu_t^N)
    \le{}&
    \frac{8}{\alpha}
    e^{-2\varepsilon\alpha t}
    \KL(\mu_0\|\pi)
    \notag\\
    &+
    \frac{4C^\star}{\varepsilon\alpha^2N}
    \left(1-e^{-2\varepsilon\alpha t}\right)
    +4\E W_2^2\left(\frac1N\sum_{i=1}^N\delta_{Y_i},\pi\right)
    +2m_2(\pi).
    \label{eq:particle-second-moment-bound}
\end{align}
We next check that the empirical sampling term is uniformly
bounded in $N$. Let
\[
    \widehat\pi_N:=\frac1N\sum_{i=1}^N\delta_{Y_i},
    \qquad
    Y_1,\ldots,Y_N\stackrel{\mathrm{i.i.d.}}{\sim}\pi.
\]
For every realization, the product measure
$\widehat\pi_N\otimes\pi$ is an admissible coupling of
$\widehat\pi_N$ and $\pi$. Hence
\begin{align*}
    W_2^2(\widehat\pi_N,\pi)
    &\le
    \int\|x-y\|^2\,
    \widehat\pi_N(dx)\pi(dy)\\
    &\le
    2m_2(\widehat\pi_N)+2m_2(\pi).
\end{align*}
Taking expectations and using
\[
    \E m_2(\widehat\pi_N)=m_2(\pi)
\]
gives
\begin{equation}\label{eq:iid-pi-uniform-bound}
    \E W_2^2(\widehat\pi_N,\pi)
    \le4m_2(\pi),
    \qquad N\ge1.
\end{equation}
Combining \eqref{eq:particle-second-moment-bound} and
\eqref{eq:iid-pi-uniform-bound}, and using $N^{-1}\le1$, yields
\[
    \sup_{N\ge1}\sup_{t\ge0}
    \E m_2(\mu_t^N)
    \le
    \frac{8}{\alpha}\KL(\mu_0\|\pi)
    +\frac{4C^\star}{\varepsilon\alpha^2}
    +18m_2(\pi)
    <\infty.
\]
Together with \eqref{eq:mf-second-moment-bound}, this proves
\eqref{eq:uniform-second-moments}. Finally, for every $T\ge0$,
\[
    \int_0^T
    \left\{
        1+m_2(\mu_t)+\E m_2(\mu_t^N)
    \right\}\,dt
    \le
    \left\{1+\sup_{t\ge0}m_2(\mu_t)
      +\sup_{N\ge1}\sup_{t\ge0}\E m_2(\mu_t^N)\right\}T
    \le C(1+T).
\]
This proves the final claim.
\end{proof}

We note here that uniform fourth moment control was established for a discrete-time version of our process in \cite[Lemma 1]{priser2024}.

\begin{remark}[Relation to $W_2$ estimates for deterministic SVGD]\label{rem:W2-BBK}
Proposition~\ref{prop:particle-W2-target} is the pointwise stochastic counterpart of the long-time target estimate used in Lemma~4.5 of \cite{balasubramanian2026}. The proof of Proposition~\ref{prop:particle-W2-target} uses full-law target-entropy estimates and the target $T_2(1/\alpha)$ inequality implied by the target LSI, in contrast with a direct KSD-to-$W_2$ comparison made in \cite{balasubramanian2026}. Similarly, under stronger assumptions, Proposition~\ref{prop:second-moments} yields a stronger pointwise second moment bound in comparison to the time-integrated moment estimate in \cite[Lemma~4.2]{balasubramanian2026}.
\end{remark}

\section{Finite-time propagation of chaos}\label{sec:finite-time}

\subsection{Exponential moment estimates under target and mean-field laws}\label{sec:concentration-regime}

The results of Section~\ref{sec:target} provide quantitative convergence to the target and uniform second-moment bounds under Assumptions~\ref{ass:core-target} together with either Assumption~\ref{ass:ksd-kernel} or Assumption~\ref{ass:W2}. To obtain the strong finite-time propagation of chaos estimates in expectation, we now impose Assumption~\ref{ass:concentration} and first derive the corresponding uniform concentration estimates.

\begin{lemma}[Gaussian tails under the strong-concentration regime]\label{lem:target-tail}
Under Assumption~\ref{ass:concentration}, there is $a_\pi>0$ such that
\[
    \int e^{a_\pi\|x\|^2}\pi(dx)<\infty.
\]
In particular, $\pi$ has moments of every polynomial order.
\end{lemma}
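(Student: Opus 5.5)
The plan is to derive the Gaussian tail of $\pi$ directly from the dissipativity condition \eqref{eq:dissipativity}, by showing that $V$ grows at least quadratically at infinity. The normalization of $\pi$ is then enough to integrate $e^{a_\pi\|x\|^2}$ for small $a_\pi$.

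First, fix $x\neq0$ and integrate along the ray: $V(x)-V(0)=\int_0^1 \nabla V(sx)\cdot x\,ds$. For $s>0$, \eqref{eq:dissipativity} applied at $sx$ gives
\[
    \nabla V(sx)\cdot x=s^{-1}\,(sx)\cdot\nabla V(sx)\ge s^{-1}\bigl(ms^2\|x\|^2-b\bigr)=ms\|x\|^2-b/s.
\]
The term $b/s$ is not integrable near $s=0$, so I will split the integral at $s_0=\|x\|^{-1}$ when $\|x\|\ge1$.

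On $[0,s_0]$, I bound $|\nabla V(sx)\cdot x|\le \|x\|(\|\nabla V(0)\|+L_V s\|x\|)$ using the Hessian bound \eqref{eq:hessian-bound}. Integrating over $[0,1/\|x\|]$ gives a constant independent of $x$. On $[s_0,1]$, the lower bound integrates to
\[
    \tfrac m2\|x\|^2-\tfrac m2-b\log\|x\|.
\]
Together these yield
\[
    V(x)\ge \tfrac m2\|x\|^2-b\log(1+\|x\|)-C .
\]
Since $b\log(1+\|x\|)\le \tfrac m4\|x\|^2+C'$, this simplifies to $V(x)\ge \tfrac m4\|x\|^2-C''$ for all $x$. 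Small $\|x\|$ is covered by continuity of $V$.

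Next, take $a_\pi=m/8$. Then $e^{a_\pi\|x\|^2}e^{-V(x)}\le e^{C''}e^{-m\|x\|^2/8}$, which is integrable on $\R^d$. Since $Z=\int e^{-V}>0$ because $\pi$ is a probability measure, this gives $\int e^{a_\pi\|x\|^2}\pi(dx)<\infty$. The polynomial moments follow from $\|x\|^p\le C_p e^{a_\pi\|x\|^2}$.

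The only delicate point is the singular factor $b/s$ near the origin in the ray integral. The split at $s_0=1/\|x\|$, combined with the linear growth of $\nabla V$ that follows from the Hessian bound, handles it. Alternatively, one could integrate from radius $1$ outward, using $\frac{d}{dr}V(r\theta)\ge mr-b/r$ for $r\ge1$ and boundedness of $V$ on the unit ball. I would present this version, since it is cleaner.
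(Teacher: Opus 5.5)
Your proof is correct. The version you say you would present is exactly the paper's proof: integrate $\frac{d}{dr}V(r\theta)\ge mr-b/r$ from radius $1$ outward, bound $V$ on the unit sphere by continuity, absorb $b\log r$ into $\frac m4 r^2$, and conclude from $\pi\propto e^{-V}$. Your first variant, splitting the ray integral at $s=1/\|x\|$, is also valid, and it does not even need the Hessian bound \eqref{eq:hessian-bound}: on $[0,1/\|x\|]$ the point $sx$ stays in the closed unit ball, so continuity of $\nabla V$ already bounds that piece by $\sup_{\|y\|\le 1}\|\nabla V(y)\|$.
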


\begin{proof}
Write $x=r\theta$ with $r\ge1$ and $\|\theta\|=1$.  Then
\[
    \frac{d}{dr}V(r\theta)
    =\theta\cdot\nabla V(r\theta)
    \ge mr-\frac br.
\]
Integrating from $1$ to $r$, using continuity on the unit sphere and absorbing $b\log r$ into a smaller quadratic term, gives
\[
    V(x)\ge \frac m4\|x\|^2-C
\]
for large $\|x\|$.  The claim follows from $\pi(dx)\propto e^{-V(x)}dx$.
\end{proof}

\begin{proposition}[Uniform sub-Gaussian moments of the nonlinear law]\label{prop:subgaussian}
Under Assumptions~\ref{ass:core-target} and \ref{ass:concentration}, together with either kernel assumption, there are $a>0$ and $C<\infty$ such that
\begin{equation}\label{eq:uniform-subgaussian}
    \sup_{t\ge0}\int e^{a\|x\|^2}\mu_t(dx)\le C.
\end{equation}
\end{proposition}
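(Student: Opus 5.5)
We apply Itô's formula to the Lyapunov function \(f(x)=e^{a\|x\|^2}\) along the nonlinear SDE \eqref{eq:nonlinear-sde}, take expectations (after localization with stopping times, justified by finite-time moments from Proposition~\ref{prop:wellposed}), and close a linear differential inequality for \(M(t):=\int e^{a\|x\|^2}\mu_t(dx)\) that is uniform in time. We will take \(a\le a_0\), so \(M(0)<\infty\) by \eqref{eq:initial-subgaussian}.

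\textbf{Generator computation.} With \(\nabla f=2ax f\) and \(\Delta f=(2ad+4a^2\|x\|^2)f\), the generator applied to \(f\) is
\[
\mathcal L_t f(x)=f(x)\Bigl\{2a\,x\cdot v_{\mu_t}(x)-2a\varepsilon\,x\cdot\nabla V(x)+\varepsilon(2ad+4a^2\|x\|^2)\Bigr\}.
\]
The dissipativity \eqref{eq:dissipativity} bounds the confining part by \(-2a\varepsilon m\|x\|^2+2a\varepsilon b\). For the Stein velocity we use the growth bound from the proof of Proposition~\ref{prop:wellposed}, namely \(\|v_\rho(x)\|\le C\{1+m_2(\rho)^{1/2}\}\). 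This bound holds because \(k\) and \(\nabla_2k\) are bounded and \(\nabla V\) has linear growth, and it is uniform in \(x\). Together with the uniform second-moment bound \(\sup_t m_2(\mu_t)<\infty\) from Proposition~\ref{prop:second-moments}, this gives \(\sup_{t,x}\|v_{\mu_t}(x)\|\le K\) for a constant \(K\) independent of \(t\). The key structural fact is that the Stein interaction is bounded, not merely of linear growth. Hence \(2a\,x\cdot v_{\mu_t}\le 2aK\|x\|\le a\varepsilon m\|x\|^2/2+C\).

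\textbf{Closing the inequality.} Choose \(a\le\min\{a_0,m/8\}\), so that \(4\varepsilon a^2\|x\|^2\le a\varepsilon m\|x\|^2/2\). Then
\[
\mathcal L_tf\le f(x)\{-a\varepsilon m\|x\|^2+C_1\}\le -C_1 f(x)+C_2,
\]
where the last step splits into \(\|x\|^2\ge 2C_1/(a\varepsilon m)\), where the bracket is at most \(-C_1\), and its complement, where \(f\) is bounded. Taking expectations gives \(M'(t)\le -C_1M(t)+C_2\) in integrated form. Gronwall's lemma then yields \(M(t)\le M(0)+C_2/C_1\) for all \(t\), which is \eqref{eq:uniform-subgaussian}.

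\textbf{Main obstacle.} The delicate point is the localization. We run the argument with \(t\wedge\tau_R\) and pass to the limit using Fatou's lemma; the martingale term vanishes under stopping. One should check that the constant \(K\) really is uniform. This needs the \(y\)-integral in \(v_\rho\) to involve only \(\int\|\nabla V(y)\|\rho(dy)\le C(1+m_2(\rho)^{1/2})\). That holds because \(\|\nabla V(y)\|\le\|\nabla V(0)\|+L_V\|y\|\) by \eqref{eq:hessian-bound}, and \(k,\nabla_2k\) are bounded under either kernel assumption. No dependence on \(x\) arises, so no smallness of the interaction relative to \(\varepsilon\nabla V\) is needed, only the quadratic confinement.
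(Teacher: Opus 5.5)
Your proposal is correct and follows essentially the same route as the paper. Both arguments use the Lyapunov function $e^{a\|x\|^2}$, bound $\sup_{t,x}\|v_{\mu_t}(x)\|$ uniformly via Proposition~\ref{prop:second-moments}, apply Young's inequality with $a<\min\{a_0,m/8\}$, and conclude with a stopped It\^o argument and Fatou's lemma.

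The only difference is how the estimate is closed. The paper applies It\^o's formula to $e^{\delta t}e^{a\|x\|^2}$, which handles the localization cleanly. Your ``Gr\"onwall in integrated form'' needs exactly that weighting, or a restart at intermediate times, because under stopping the negative drift $-C_1 f$ only acts up to $t\wedge\tau_R$.
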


\begin{proof}
Proposition~\ref{prop:second-moments} gives a uniform first moment bound in time.  Since
$\|\nabla V(y)\|\le\|\nabla V(0)\|+L_V\|y\|$ and $k,\nabla_2k$ are bounded,
\begin{equation}\label{eq:velocity-bounded}
    M_v := \sup_{t\ge0}\sup_{x\in\R^d}\|v_{\mu_t}(x)\|
    \le C\left(1+\sup_{t\ge0}\int\|y\|\mu_t(dy)\right)
    <\infty.
\end{equation}
The generator of \eqref{eq:nonlinear-sde} satisfies, for $a>0$,
\[
    \frac{\mathcal L_t e^{a\|x\|^2}}{e^{a\|x\|^2}}
    =2a x\cdot v_{\mu_t}(x)
      -2a\varepsilon x\cdot\nabla V(x)
      +2a\varepsilon d+4a^2\varepsilon\|x\|^2.
\]
Using \eqref{eq:velocity-bounded}, Young's inequality, and Assumption~\ref{ass:concentration}, we thus have for $t \ge 0$,
\begin{align*}
    \frac{\mathcal L_t e^{a\|x\|^2}}{e^{a\|x\|^2}}
    &\le
    2aM_v\|x\|
    -2a\varepsilon m\|x\|^2
    +2a\varepsilon(b+d)
    +4a^2\varepsilon\|x\|^2 \\
    &\le
    \frac{aM_v^2}{\varepsilon m}
    +2a\varepsilon(b+d)
    -a\varepsilon(m-4a)\|x\|^2.
\end{align*}
Consequently,
\begin{equation}\label{eq:path-generator}
    \mathcal L_t e^{a\|x\|^2}
    \le
    \bigl(C_{\mathrm{gen}}-c_0\|x\|^2\bigr)e^{a\|x\|^2},
    \qquad t\ge0,
\end{equation}
where \(c_0=a\varepsilon(m-4a)>0\) for $a$ sufficiently small
and one may take
\[
    C_{\mathrm{gen}}
    :=
    \frac{aM_v^2}{\varepsilon m}
    +2a\varepsilon(b+d).
\]
Choose $0<a<\min\{a_0,m/8\}$ and $\delta>0$. For a sufficiently large $R$, $C_{\mathrm{gen}}-c_0\|x\|^2 \le -\delta$ on $\{\|x\|\ge R\}$.  On the ball $\{\|x\|\le R\}$, $\mathcal L_t e^{a\|x\|^2}+\delta e^{a\|x\|^2}$ is uniformly bounded. Hence
\[
    \mathcal L_t e^{a\|x\|^2}\le C-\delta e^{a\|x\|^2}
\]
with finite constant $C$ independent of $t$.  Applying It\^o's formula to the stopped process at
the exit time
\(
  \tau_n:=\inf\{s\ge0:\|\bar X(s)\|\ge n\}
\)
gives
\[
 \E\!\left[e^{\delta(t\wedge\tau_n)}
   e^{a\|\bar X(t\wedge\tau_n)\|^2}\right]
 \le \E e^{a\|\bar X(0)\|^2}
      +\frac C\delta(e^{\delta t}-1).
\]
Since $a<a_0$, Fatou's lemma as \(n\to\infty\) thus yields \eqref{eq:uniform-subgaussian}.
\end{proof}

\begin{corollary}[Empirical $W_2$ floors]\label{cor:FG}
Let $\rho$ be a probability measure on $\R^d$ with a finite $q$th
moment for some $q>4$, and let $Y_1,\ldots,Y_N$ be i.i.d.\ with law
$\rho$. Then
\[
    \E W_2^2\left(\frac1N\sum_{i=1}^N\delta_{Y_i},\rho\right)
    \le C_\rho
    \begin{cases}
      N^{-1/2}, & d<4,\\
      N^{-1/2}\log(1+N), & d=4,\\
      N^{-2/d}, & d>4.
    \end{cases}
\]
Under Assumption~\ref{ass:concentration}, this holds for $\rho = \pi$. Moreover, under the assumptions made in Proposition~\ref{prop:subgaussian}, the constant can be
chosen uniformly in $t$ such that the above holds for $\rho=\mu_t$, $t\ge0$.
\end{corollary}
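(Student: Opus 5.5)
The plan is to apply the Fournier--Guillin rate for empirical measures in Wasserstein distance, and then to check that the moment hypothesis holds uniformly in the two concrete cases. Fournier and Guillin (Theorem~1 in their work on the rate of convergence in Wasserstein distance of the empirical measure) give, for \(p=2\) and any \(q>p\), a bound of the form
\[
    \E W_2^2(\widehat\rho_N,\rho)\le C_{d,q}\,M_q(\rho)^{2/q}
    \bigl(N^{-1/2}\mathbf 1_{d<4}+N^{-1/2}\log(1+N)\mathbf 1_{d=4}+N^{-2/d}\mathbf 1_{d>4}+N^{-(q-2)/q}\bigr),
\]
where \(M_q(\rho)=\int\|x\|^q\rho(dx)\). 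When \(q>4\) we have \((q-2)/q>1/2\), so the last term is dominated by \(N^{-1/2}\) in every dimension. In dimensions \(d>4\) it is dominated by \(N^{-2/d}\) as well, since \(2/d<1/2<(q-2)/q\). This gives the first claim, with \(C_\rho=C_{d,q}M_q(\rho)^{2/q}\) up to an absolute factor.

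The key structural fact is that \(C_\rho\) depends on \(\rho\) only through \(M_q(\rho)\), and it does so monotonically. The remaining assertions therefore reduce to moment bounds.

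\begin{itemize}
\item For \(\rho=\pi\), Lemma~\ref{lem:target-tail} gives \(\int e^{a_\pi\|x\|^2}\pi(dx)<\infty\). Since \(\|x\|^q\le C_{q,a}e^{a\|x\|^2}\) for every \(a>0\), this yields \(M_q(\pi)<\infty\) for, say, \(q=5\).
\item For \(\rho=\mu_t\), Proposition~\ref{prop:subgaussian} gives \(\sup_{t\ge0}\int e^{a\|x\|^2}\mu_t(dx)\le C\). The same pointwise inequality then gives \(\sup_{t\ge0}M_5(\mu_t)<\infty\), and hence a constant \(C_{\mu_t}\le C\) uniform in \(t\).
\end{itemize}

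The only point needing care is that the constant in Fournier--Guillin is explicit in the \(q\)th moment. If one wants to avoid relying on the precise form of that dependence, a scaling argument suffices. Rescaling \(x\mapsto x/s\) with \(s=M_q(\rho)^{1/q}\) normalizes the moment to \(1\) and multiplies \(W_2^2\) by \(s^2\). Applying the theorem to the normalized measure then produces the factor \(M_q^{2/q}\), which confirms the monotone dependence used above. No other obstacle arises; the corollary is a direct combination of the cited rate with Lemma~\ref{lem:target-tail} and Proposition~\ref{prop:subgaussian}.
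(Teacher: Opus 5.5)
Your proposal is correct and follows the same route as the paper, which simply cites the Fournier--Guillin estimate \cite{fournier2015} and takes the needed moment bounds for $\pi$, and uniformly in $t$ for $\mu_t$, from Lemma~\ref{lem:target-tail} and Proposition~\ref{prop:subgaussian}. You additionally make explicit the absorption of the $N^{-(q-2)/q}$ term and the $M_q(\rho)^{2/q}$ dependence of the constant, which is exactly what justifies the uniformity in $t$; the one detail you leave unstated is that Fournier--Guillin's Theorem~1 excludes $q=4$ (for $d\le 4$) and $q=d/(d-2)$ (for $d>4$), both automatically avoided when $q>4$.
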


\begin{proof}
The first assertion is the Fournier--Guillin estimate
\cite{fournier2015}. Lemma~\ref{lem:target-tail} and
Proposition~\ref{prop:subgaussian} provide, respectively, the required
moment bound for $\pi$ and a uniform-in-$t$ moment bound for $\mu_t$.
\end{proof}

\subsection{Stein-velocity fluctuations}
Proposition~\ref{prop:subgaussian} gives the state-space exponential moment
for the bounded-kernel model.  We next convert it into the exponential
law of large numbers for the Stein-velocity defect:
\begin{equation}\label{eq:GN}
    G_t^N(x_1,\ldots,x_N)
    :=\sum_{i=1}^N
      \left\|\frac1N\sum_{j=1}^NB(x_i,x_j)-v_{\mu_t}(x_i)\right\|^2.
\end{equation}

\begin{lemma}[Sub-Gaussian empirical averages]\label{lem:subgaussian-average}
Let $Z_1,\ldots,Z_n$ be centered i.i.d. vectors in $\R^d$.  If
$\E e^{a\|Z_1\|^2}\le M$ for some $a,M \in (0,\infty)$, then there are $a',M' \in (0,\infty)$, depending only on $a,M,d$, such that
\begin{equation}\label{eq:subgaussian-average}
    \E\exp\left(a'n\left\|\frac1n\sum_{j=1}^nZ_j\right\|^2\right)
    \le M'.
\end{equation}
\end{lemma}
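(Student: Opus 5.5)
The plan is to reduce the statement to a scalar sub-Gaussian estimate via the moment generating function, and then return to the squared norm by Gaussian linearization. Write \(S_n:=n^{-1/2}\sum_{j=1}^nZ_j\), so the quantity to bound is \(\E e^{a'\|S_n\|^2}\).

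\emph{Step 1 (scalar mgf bound).} For a unit vector \(\theta\), the scalar variables \(\theta\cdot Z_j\) are centered and satisfy \(\E e^{a(\theta\cdot Z_1)^2}\le M\). I will show \(\E e^{s\,\theta\cdot Z_1}\le e^{\sigma^2s^2/2}\) for all \(s\in\R\), with \(\sigma^2\) depending only on \(a,M\).
\begin{itemize}
\item For \(|s|\) small, use \(e^u\le1+u+\frac{u^2}{2}e^{|u|}\), the centering, and the bound \(e^{|s||y|}\le e^{a y^2/2}e^{s^2/(2a)}\). Together with Cauchy--Schwarz and polynomial moments controlled by \(M\), this gives \(\E e^{s\theta\cdot Z_1}\le1+Cs^2\le e^{Cs^2}\).
\item For \(|s|\) large, use \(s y\le \frac{a}{2}y^2+\frac{s^2}{2a}\), which gives \(\E e^{s\theta\cdot Z_1}\le M e^{s^2/(2a)}\). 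Since \(|s|\) is bounded below, the factor \(M\) can be absorbed into \(e^{c s^2}\).
\end{itemize}

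\emph{Step 2 (independence).} By independence, \(\E e^{\theta\cdot\lambda S_n}\le e^{\sigma^2\|\lambda\|^2/2}\) uniformly in \(n\), for every \(\lambda\in\R^d\).

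\emph{Step 3 (linearization).} Let \(g\sim N(0,I_d)\) be independent of the \(Z_j\). For \(\beta>0\), Fubini gives
\[
\E e^{\beta\|S_n\|^2/2}=\E_Z\E_g e^{\sqrt\beta\,g\cdot S_n}\le \E_g e^{\sigma^2\beta\|g\|^2/2}=(1-\sigma^2\beta)^{-d/2}
\]
whenever \(\sigma^2\beta<1\). Taking \(\beta=1/(2\sigma^2)\) yields \(a'=1/(4\sigma^2)\) and \(M'=2^{d/2}\), both depending only on \(a,M,d\).

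The main obstacle is Step 1, specifically getting a constant \(\sigma^2\) that depends only on \(a\) and \(M\). The small-\(|s|\) regime needs the centering to remove the linear term, while the large-\(|s|\) regime handles the factor \(M\); the threshold between them is chosen in terms of \(a\) and \(M\) alone. Everything after Step 1 is mechanical.
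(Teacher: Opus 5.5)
Your proposal is correct and follows the same route as the paper. Both arguments prove a scalar sub-Gaussian mgf bound for $\theta\cdot Z_1$, tensorize it over the independent summands of $n^{-1/2}\sum_j Z_j$, and then use Gaussian linearization of $e^{c\|z\|^2}$ to return to the squared norm. The only difference is in how the mgf bound is derived: you split into small and large $|s|$ (Taylor remainder with centering, and Young's inequality), whereas the paper goes through the moment growth $(\E|e\cdot Z_1|^p)^{1/p}\le C\sqrt p$ and a series/Stirling argument; your version has the minor advantage of explicit constants $a'=1/(4\sigma^2)$ and $M'=2^{d/2}$.
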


\begin{proof}
The exponential-square moment implies
$(\E|e\cdot Z_1|^p)^{1/p}\le C\sqrt p$, for some finite positive constant $C$, uniformly over unit vectors $e$ and $p\ge2$.  Expanding the exponential series and using Stirling's formula gives
\[
    \E e^{\theta\cdot Z_1}\le e^{C\|\theta\|^2},
    \qquad \theta\in\R^d,
\]
possibly after increasing $C$. Thus $n^{-1/2}\sum_jZ_j$ satisfies the same bound.  The Gaussian integral identity gives, for any $c \ge 0$,
\[
    e^{c\|z\|^2}
    =(2\pi)^{-d/2}\int_{\R^d}
       e^{\sqrt{2c}\,g\cdot z-\|g\|^2/2}\,dg.
\]
Consequently,
\[
    \E\exp\left(c\left\|\frac1{\sqrt n}\sum_{j=1}^nZ_j\right\|^2\right)
    \le (2\pi)^{-d/2}\int_{\R^d}
       e^{-(1/2-2Cc)\|g\|^2}\,dg,
\]
which is finite for sufficiently small $c>0$, uniformly in $n$.
\end{proof}

\begin{proposition}[Exponential law of large numbers for the Stein velocity]\label{prop:exp-lln}
Under Assumptions~\ref{ass:core-target} and \ref{ass:concentration}, together with either kernel assumption, there are $\lambda>0$ and $1\le C_G<\infty$, independent of $N$ and $t$, such that
\begin{equation}\label{eq:exp-lln}
    \int e^{\lambda G_t^N}\,d\mu_t^{\otimes N}\le C_G.
\end{equation}
\end{proposition}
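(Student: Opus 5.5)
The plan is to decompose the defect for each particle into an off-diagonal average and a diagonal remainder. Off the diagonal, the summands become centered and i.i.d.\ once we condition on the tagged particle. The dependence among the \(N\) summands of \(G_t^N\) is then removed by Jensen's inequality rather than by independence.

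Throughout, let \(X_1,\dots,X_N\) be i.i.d.\ with law \(\mu_t\). First record the growth bound \(\|B(x,y)\|\le C(1+\|y\|)\), which follows from the boundedness of \(k\) and \(\nabla_2k\) together with \(\|\nabla V(y)\|\le\|\nabla V(0)\|+L_V\|y\|\). Together with \eqref{eq:velocity-bounded}, it also gives \(\sup_{t,x}\|v_{\mu_t}(x)\|\le M_v\). For each \(i\), write
\[
 \frac1N\sum_{j=1}^NB(X_i,X_j)-v_{\mu_t}(X_i)
 =\frac1N\bigl(B(X_i,X_i)-v_{\mu_t}(X_i)\bigr)+\frac{N-1}{N}\,\frac1{N-1}\sum_{j\ne i}Z_{ij},
\]
where \(Z_{ij}:=B(X_i,X_j)-v_{\mu_t}(X_i)\).

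Conditionally on \(X_i=x\), the vectors \(\{Z_{ij}\}_{j\ne i}\) are i.i.d.\ and centered, since \(v_{\mu_t}(x)=\int B(x,y)\,\mu_t(dy)\). They satisfy \(\|Z_{ij}\|\le C(1+\|X_j\|)\). By Proposition~\ref{prop:subgaussian}, \(\E e^{a''\|Z_{ij}\|^2}\le M\) for some \(a''>0\) and \(M<\infty\), uniformly in \(x\) and \(t\). Lemma~\ref{lem:subgaussian-average} then gives constants \(a',M'\), independent of \(x\), \(t\) and \(N\), such that
\[
 \E\Bigl[\exp\Bigl(a'(N-1)\Bigl\|\tfrac1{N-1}\textstyle\sum_{j\ne i}Z_{ij}\Bigr\|^2\Bigr)\,\Big|\,X_i\Bigr]\le M'.
\]

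Next, use \(\|u+w\|^2\le2\|u\|^2+2\|w\|^2\) to split \(G_t^N\le 2D_N+2O_N\). Here
\[
 D_N:=N^{-2}\sum_i C(1+\|X_i\|^2),\qquad O_N:=\sum_i\bigl\|\tfrac1{N-1}\textstyle\sum_{j\ne i}Z_{ij}\bigr\|^2 .
\]
By Cauchy--Schwarz, \(\E e^{\lambda G}\le(\E e^{4\lambda D_N})^{1/2}(\E e^{4\lambda O_N})^{1/2}\).

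\textbf{Diagonal part.} By independence and Jensen's inequality (\(s\mapsto s^N\) is convex, equivalently \((\E Y^{1/N})^N\le\E Y\)),
\[
 \E e^{4\lambda D_N}=\bigl(\E e^{4\lambda C(1+\|X_1\|^2)/N^2}\bigr)^N\le \E e^{4\lambda C(1+\|X_1\|^2)/N}.
\]
This is bounded uniformly once \(4\lambda C\le a\), again by Proposition~\ref{prop:subgaussian}.

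\textbf{Off-diagonal part.} The \(N\) summands of \(O_N\) are dependent, and this is the step I expect to need the key idea. Each summand is of order \(N^{-1}\), but a product decomposition is unavailable. Instead, write \(O_N=\frac1N\sum_i N\|\cdot\|^2\) and apply Jensen's inequality to the convex exponential over the uniform average in \(i\):
\[
 e^{4\lambda O_N}\le\frac1N\sum_{i=1}^N\exp\Bigl(4\lambda N\Bigl\|\tfrac1{N-1}\textstyle\sum_{j\ne i}Z_{ij}\Bigr\|^2\Bigr).
\]
If \(4\lambda N\le a'(N-1)\), which holds for all \(N\ge2\) once \(\lambda\le a'/8\), each term has expectation at most \(M'\) by the conditional bound above, after taking expectation in \(X_i\). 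The case \(N=1\) is trivial, because then the defect is bounded by \(C(1+\|X_1\|^2)\).

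Choosing \(\lambda\) as the minimum of the two smallness constraints gives \eqref{eq:exp-lln}, with \(C_G\) a product of the uniform constants, and one may take \(C_G\ge1\).
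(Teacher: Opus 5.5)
Your proof is correct and is essentially the paper's argument. You use the same split of each row into a conditionally centered off-diagonal average, controlled by Lemma~\ref{lem:subgaussian-average} given the tagged particle, plus an $O(1/N)$ diagonal term, and you combine the two pieces with Cauchy--Schwarz. Your Jensen/AM--GM averaging over the $N$ rows is equivalent to the paper's generalized H\"older inequality with exponent $N$ plus exchangeability; you apply it after splitting off the diagonal (which you handle by independence) rather than before.
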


\begin{proof}
The bounded-kernel assumptions and \eqref{eq:hessian-bound} imply
\begin{equation}\label{eq:B-linear}
    \|B(x,y)\|\le K(1+\|y\|)
\end{equation}
with $K$ independent of $x$.  Fix $t$ and let $X_1,\ldots,X_N$ be i.i.d. with law $\mu_t$.  By the definition of $G_t^N$, generalized H\"older's inequality with exponent $N$, followed by exchangeability, gives
\begin{align}
    \E e^{\lambda G_t^N}
    &=\E\prod_{i=1}^N
       \exp\left\{\lambda\left\|\frac1N\sum_{j=1}^N
       \{B(X_i,X_j)-v_{\mu_t}(X_i)\}\right\|^2\right\} \notag
    \\
    & \le\prod_{i=1}^N
       \left(\E\exp\left\{\lambda N\left\|\frac1N\sum_{j=1}^N
       \{B(X_i,X_j)-v_{\mu_t}(X_i)\}\right\|^2\right\}\right)^{1/N} \notag
    \\
    & =\E\exp\left\{\lambda N\left\|\frac1N\sum_{j=1}^N
       \{B(X_1,X_j)-v_{\mu_t}(X_1)\}\right\|^2\right\}.\label{eq:one-row}
\end{align}
For $N\ge2$, set
\[
    S_1:=\frac1{N-1}\sum_{j=2}^N
      \{B(X_1,X_j)-v_{\mu_t}(X_1)\}.
\]
Since
\[
    \frac1N\sum_{j=1}^N\{B(X_1,X_j)-v_{\mu_t}(X_1)\}
    =\frac{N-1}{N}S_1
      +\frac1N\{B(X_1,X_1)-v_{\mu_t}(X_1)\}
\]
and $(N-1)^2/N\le N-1$,
\begin{equation}\label{eq:row-split}
    N\left\|\frac1N\sum_{j=1}^N
       \{B(X_1,X_j)-v_{\mu_t}(X_1)\}\right\|^2
    \le2(N-1)\|S_1\|^2
       +\frac2N\|B(X_1,X_1)-v_{\mu_t}(X_1)\|^2.
\end{equation}
Conditionally on $X_1=x$, the summands in $S_1$ are centered and i.i.d.  By \eqref{eq:B-linear} and Proposition~\ref{prop:subgaussian}, their exponential-square moments are bounded uniformly in $x$ and $t$.  Lemma~\ref{lem:subgaussian-average} therefore yields
\[
    \sup_{N\ge2,t,x}
    \E\left[e^{c(N-1)\|S_1\|^2}\mid X_1=x\right]<\infty.
\]
Moreover, \eqref{eq:B-linear} gives $\|B(X_1,X_1)-v_{\mu_t}(X_1)\|\le C(1+\|X_1\|)$, so the diagonal term also has a uniform exponential-square moment.  From \eqref{eq:row-split} and Cauchy--Schwarz,
\begin{align*}
    \E & \exp\left\{\lambda N\left\|\frac1N\sum_{j=1}^N
       \{B(X_1,X_j)-v_{\mu_t}(X_1)\}\right\|^2\right\}
    \le\\
    & \qquad \qquad \left(\E e^{4\lambda(N-1)\|S_1\|^2}\right)^{1/2}  \left(\E\exp\left\{\frac{4\lambda}{N}
       \|B(X_1,X_1)-v_{\mu_t}(X_1)\|^2\right\}\right)^{1/2}.
\end{align*}
Both factors are uniformly bounded for small enough $\lambda$.  Combining with \eqref{eq:one-row} proves the claim; $N=1$ is the diagonal estimate.
\end{proof}

Let
\begin{equation}\label{eq:rNd}
    r_{N,d}:=
    \begin{cases}
      N^{-1/2},&d<4,\\
      N^{-1/2}\log(1+N),&d=4,\\
      N^{-2/d},&d>4.
    \end{cases}
\end{equation}
If $\bar X_1(t),\ldots,\bar X_N(t)$ are independent with law $\mu_t$,
Corollary~\ref{cor:FG} gives
\begin{equation}\label{eq:iid-W2-floor}
    \sup_{t\ge0}\E W_2^2\left(
      \frac1N\sum_{i=1}^N\delta_{\bar X_i(t)},\mu_t
    \right)\le Cr_{N,d}.
\end{equation}

\subsection{Synchronous coupling and metric estimates}
\label{app:refined-coupling}

In this subsection, we obtain quantitative finite-time PoC rates by controlling the discrepancy between the noisy SVGD particle system and a coupled system consisting of independent copies of the McKean--Vlasov process \eqref{eq:nonlinear-sde} driven by the same collection of Brownian motions.

Proposition~\ref{prop:exp-lln} gives the following uniform-in-time second moment bound as an immediate corollary.
\begin{corollary}[Uniform square-integrability of the Stein-velocity fluctuation]
\label{cor:GN-L2}
Under the assumptions of Proposition~\ref{prop:exp-lln},
\begin{equation}\label{eq:GN-L2}
  \sup_{N\ge1}\sup_{t\ge0}
  \int (G_t^N)^2\,d\mu_t^{\otimes N}<\infty.
\end{equation}
\end{corollary}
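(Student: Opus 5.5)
The bound follows from Proposition~\ref{prop:exp-lln} through the elementary inequality $y^2\le C_\lambda e^{\lambda y}$ for $y\ge0$. Let $\lambda>0$ and $C_G$ be the constants of Proposition~\ref{prop:exp-lln}; both are independent of $N$ and $t$. For $y\ge0$ the function $y\mapsto y^2e^{-\lambda y}$ attains its maximum at $y=2/\lambda$, so
\[
    y^2\le \frac{4}{e^2\lambda^2}\,e^{\lambda y},\qquad y\ge0.
\]
Applying this pointwise with $y=G_t^N(x_1,\ldots,x_N)\ge0$ and integrating against $\mu_t^{\otimes N}$ gives
\[
    \int (G_t^N)^2\,d\mu_t^{\otimes N}
    \le \frac{4}{e^2\lambda^2}\int e^{\lambda G_t^N}\,d\mu_t^{\otimes N}
    \le \frac{4C_G}{e^2\lambda^2}.
\]
The right-hand side does not depend on $N$ or $t$, so taking the supremum over $N\ge1$ and $t\ge0$ yields \eqref{eq:GN-L2}.

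There is essentially no obstacle. The only point to check is that $G_t^N$ is measurable and nonnegative, which holds because it is a finite sum of squared norms of continuous functions. The uniformity in $N$ and $t$ comes entirely from Proposition~\ref{prop:exp-lln}, which in turn rests on the uniform sub-Gaussian bound of Proposition~\ref{prop:subgaussian}.
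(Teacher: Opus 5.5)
Your proof is correct and is essentially the paper's argument: the paper states this corollary as an immediate consequence of Proposition~\ref{prop:exp-lln} with no further proof. Your pointwise bound $y^2\le 4(e\lambda)^{-2}e^{\lambda y}$ for $y\ge0$, integrated against $\mu_t^{\otimes N}$, is exactly the step that makes ``immediate'' precise, and it inherits uniformity in $N$ and $t$ from $\lambda$ and $C_G$.
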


The next lemma gives quantitative estimates for the integrated second moment process of the McKean--Vlasov system.

\begin{lemma}[Exponential moment under the product path law]
\label{lem:path-exponential}
Under the assumptions of Proposition~\ref{prop:subgaussian}, let
$\bar X_1,\ldots,\bar X_N$ be independent nonlinear processes solving
\eqref{eq:nonlinear-sde}.  There are $\eta_0,C,L_0>0$, independent of
$N,s,t$, such that, for $0\le s\le t$,
\begin{equation}\label{eq:path-exponential}
  \E\exp\left\{
    \eta_0\int_s^t\frac1N\sum_{i=1}^N\|\bar X_i(u)\|^2\,du
  \right\}
  \le Ce^{L_0(t-s)}.
\end{equation}
\end{lemma}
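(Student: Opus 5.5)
Since the processes $\bar X_1,\ldots,\bar X_N$ are independent, the exponential of the average factorizes. We have
\[
\E\exp\Bigl\{\eta_0\int_s^t\frac1N\sum_i\|\bar X_i(u)\|^2du\Bigr\}=\prod_{i=1}^N\Bigl(\E\exp\Bigl\{\frac{\eta_0}{N}\int_s^t\|\bar X_i(u)\|^2du\Bigr\}\Bigr)
\]
and, by Jensen's inequality (the map $y\mapsto y^{1/N}$ is concave), $\E e^{Y/N}\le(\E e^{Y})^{1/N}$. The product is therefore bounded by the single-process quantity $\E\exp\{\eta_0\int_s^t\|\bar X(u)\|^2du\}$, and it suffices to prove \eqref{eq:path-exponential} for $N=1$, with constants independent of $s,t$.

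For one nonlinear process I will use a supermartingale built from the Lyapunov estimate \eqref{eq:path-generator} in the proof of Proposition~\ref{prop:subgaussian}. Fix $a\in(0,\min\{a_0,m/8\})$, so that $\mathcal L_u e^{a\|x\|^2}\le(C_{\mathrm{gen}}-c_0\|x\|^2)e^{a\|x\|^2}$ uniformly in $u\ge0$ with $c_0>0$. Set $f(x)=e^{a\|x\|^2}$ and define
\[
M_u:=f(\bar X(u))\exp\Bigl\{-\int_s^u\frac{\mathcal L_rf(\bar X(r))}{f(\bar X(r))}dr\Bigr\},
\]
which is a nonnegative local martingale; this is the standard exponential martingale, obtained from It\^o's formula applied to $\log f$ or directly to the product. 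By Fatou's lemma after localization with the exit times $\tau_n$, it is a supermartingale, so $\E M_t\le\E f(\bar X(s))\le C$, with the last bound uniform in $s$ by \eqref{eq:uniform-subgaussian}. Since $-\mathcal L_rf/f\ge c_0\|x\|^2-C_{\mathrm{gen}}$ and $f\ge1$, this yields
\[
\E\exp\Bigl\{c_0\int_s^t\|\bar X(r)\|^2dr\Bigr\}\le Ce^{C_{\mathrm{gen}}(t-s)}.
\]
We may then take $\eta_0=c_0$ and $L_0=C_{\mathrm{gen}}$; any smaller $\eta_0$ also works, by Jensen or H\"older.

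The main point needing care is the justification of the exponential supermartingale. I will apply It\^o's formula on $[s,t\wedge\tau_n]$, where all coefficients are bounded, and verify that the stopped process is a genuine martingale there; Fatou's lemma as $n\to\infty$ then works because the integrand is nonnegative and $\tau_n\to\infty$ a.s. by Proposition~\ref{prop:wellposed}. Uniformity in $u$ of $C_{\mathrm{gen}}$ and $c_0$ comes from the time-uniform bound $M_v$ in \eqref{eq:velocity-bounded}, which rests on Proposition~\ref{prop:second-moments}. This is where independence of $t$ enters.
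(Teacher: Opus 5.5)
Your proof is correct and follows essentially the paper's argument. It uses the same Lyapunov bound \eqref{eq:path-generator}, localization at exit times with Fatou's lemma, the uniform bound on $\int e^{a\|x\|^2}\,d\mu_s$ from Proposition~\ref{prop:subgaussian}, and the independence--Jensen reduction to one process (which the paper performs at the end rather than the start). The only cosmetic difference is the single-process step: you use the exact Feynman--Kac local martingale $f(\bar X(u))\exp\{-\int_s^u \mathcal L_r f(\bar X(r))/f(\bar X(r))\,dr\}$, whereas the paper applies It\^o's formula to $\exp\{c_0\int_s^{u}\|\bar X(r)\|^2\,dr\}F(\bar X(u))$ and closes with Gr\"onwall; both give $\eta_0=c_0$ and $L_0=C_{\mathrm{gen}}$.
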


\begin{proof}
Recall from \eqref{eq:path-generator}, for
\(F(x):=e^{a\|x\|^2}\) and \(a>0\) sufficiently small,
\begin{equation}\label{eq:path-generator2}
    \mathcal L_uF(x)
    \le
    \bigl(C_{\mathrm{gen}}-c_0\|x\|^2\bigr)F(x),
    \qquad u\ge0,
\end{equation}
where \(c_0=a\varepsilon(m-4a)>0\)
and
\(
    C_{\mathrm{gen}}
    :=
    \frac{aM_v^2}{\varepsilon m}
    +2a\varepsilon(b+d),
\)
where $b$ is as in Assumption~\ref{ass:concentration}.

Fix \(0\le s\le t\), and set
\[
    \tau_R:=\inf\{u\ge s:\|\bar X(u)\|\ge R\}.
\]
For \(u\in[s,t]\), define
\[
    Y_u^R
    :=
    \exp\left\{
        c_0\int_s^{u\wedge\tau_R}\|\bar X(r)\|^2\,dr
    \right\}
    F(\bar X(u\wedge\tau_R)).
\]
It\^o's formula and \eqref{eq:path-generator2} give for \(u\in[s,t]\),
\[
    Y_u^R
    \le
    F(\bar X(s))
    +C_{\mathrm{gen}}\int_s^u Y_v^R
       \mathbf 1_{\{v<\tau_R\}}\,dv
    +(M_u^R-M_s^R),
\]
where \(M^R\) is a martingale on \([s,t]\), since the stochastic
integrand is bounded on the stopped interval. Taking expectations and
applying Gr\"onwall's inequality yields
\[
    \E Y_t^R
    \le
    e^{C_{\mathrm{gen}}(t-s)}
    \E F(\bar X(s)).
\]
By nonexplosion, \(\tau_R\uparrow\infty\) almost surely. Fatou's lemma,
\(F\ge1\), and Proposition~\ref{prop:subgaussian} therefore give
\begin{equation}\label{eq:one-path-integrated-moment}
\begin{aligned}
    \E\exp\left\{
        c_0\int_s^t\|\bar X(u)\|^2\,du
    \right\}
    &\le
    \E\left[
        \exp\left\{
            c_0\int_s^t\|\bar X(u)\|^2\,du
        \right\}F(\bar X(t))
    \right] \\
    &\le
    e^{C_{\mathrm{gen}}(t-s)}
    \E F(\bar X(s))
    \le
    C e^{C_{\mathrm{gen}}(t-s)}.
\end{aligned}
\end{equation}
Thus, Jensen's inequality and the independence of
\(\bar X_1,\ldots,\bar X_N\) give
\begin{align*}
\E\exp\left\{
  c_0\int_s^t\frac1N\sum_{i=1}^N\|\bar X_i(u)\|^2\,du
\right\}
&=
\left[
  \E\exp\left\{
    \frac{c_0}{N}\int_s^t\|\bar X(u)\|^2\,du
  \right\}
\right]^N \\
&\le
  \E\exp\left\{
    c_0\int_s^t\|\bar X(u)\|^2\,du
  \right\}
\le Ce^{C_{\mathrm{gen}}(t-s)}.
\end{align*}
This proves \eqref{eq:path-exponential} with
\(\eta_0=c_0=a\varepsilon(m-4a)\) and 
\begin{equation}\label{eq:lgendef}
    L_0=C_{\mathrm{gen}}=
    \frac{aM_v^2}{\varepsilon m}
    +2a\varepsilon(b+d).
\end{equation}
\end{proof}

Let $\bar X_1,\ldots,\bar X_N$ be independent copies of
\eqref{eq:nonlinear-sde}.  Couple them with \eqref{eq:particle-sde} by taking
\(
  X_i^N(0)=\bar X_i(0),
\)
and by using the same driving Brownian motions $\{W_i : 1 \le i \le N\}$ in the two equations. We call this the \emph{synchronous coupling}. The following proposition provides the key quantitative control on the finite-time discrepancy between the coupled systems, that directly furnish finite-time PoC rates. 

\begin{proposition}[Controlling discrepancy under synchronous coupling]
\label{prop:refined-coupling}
Under Assumptions~\ref{ass:core-target} and \ref{ass:concentration}, together
with either kernel assumption and under the synchronous coupling just described,
define
\begin{equation}\label{eq:DN}
  D_N(t):=\frac1N\sum_{i=1}^N
    \|X_i^N(t)-\bar X_i(t)\|^2.
\end{equation}
There are constants $C<\infty$ and $L_{\mathrm{cpl}}>0$, independent of
$N,t$, such
that
\begin{equation}\label{eq:refined-coupling}
  \E D_N(t)
  \le \frac{C}{N}\bigl(e^{L_{\mathrm{cpl}}t}-1\bigr),
  \qquad t\ge0.
\end{equation}
\end{proposition}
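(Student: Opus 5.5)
The plan is a pathwise Gr\"onwall argument for $D_N$, using the fact that the noise cancels under the synchronous coupling. Write $e_i:=X_i^N-\bar X_i$. Then
\[
\frac{d}{dt}\|e_i\|^2=2e_i\cdot\Bigl\{F_i^N(X^N)-F_i^N(\bar X)+F_i^N(\bar X)-v_{\mu_t}(\bar X_i)-\varepsilon\bigl(\nabla V(X_i^N)-\nabla V(\bar X_i)\bigr)\Bigr\},
\]
where $F_i^N(X)=N^{-1}\sum_jB(x_i,x_j)$. Three terms need to be controlled.

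\emph{Langevin term.} The bounded Hessian gives at most $2\varepsilon L_V\|e_i\|^2$.

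\emph{Interaction difference.} By \eqref{eq:B-local-lip}, taking the minimum at the $\bar X_j$ argument,
\[
\|B(X_i^N,X_j^N)-B(\bar X_i,\bar X_j)\|\le C(1+\|\bar X_j\|)\bigl(\|e_i\|+\|e_j\|\bigr).
\]
Averaging over $j$ and summing over $i$, then applying Cauchy--Schwarz, bounds this contribution by $C(1+Q_N(t))D_N$, where $Q_N=N^{-1}\sum_j\|\bar X_j\|^2$. The key point is that the random coefficient depends only on the independent nonlinear copies.

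\emph{Fluctuation.} By Young's inequality, $2N^{-1}\sum_ie_i\cdot(F_i^N(\bar X)-v_{\mu_t}(\bar X_i))\le D_N+N^{-1}G_t^N(\bar X(t))$.

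Combining these gives the differential inequality
\[
D_N'(t)\le C_1\bigl(1+Q_N(t)\bigr)D_N(t)+\frac1N G_t^N(\bar X(t)),\qquad D_N(0)=0,
\]
and hence, pathwise,
\[
D_N(t)\le\frac1N\int_0^t\exp\Bigl\{C_1\int_s^t(1+Q_N(u))\,du\Bigr\}\,G_s^N(\bar X(s))\,ds.
\]

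Next take expectations and apply Cauchy--Schwarz inside the $ds$ integral. By Corollary~\ref{cor:GN-L2}, $\E (G_s^N)^2\le C$ uniformly in $s$ and $N$, because $\bar X(s)\sim\mu_s^{\otimes N}$. The other factor is $\E\exp\{2C_1\int_s^t(1+Q_N)\,du\}$, which should be bounded by Lemma~\ref{lem:path-exponential}.

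The main obstacle is that the Gr\"onwall constant $2C_1$ need not be smaller than $\eta_0$, so Lemma~\ref{lem:path-exponential} does not apply directly. I would handle this with the i.i.d.\ structure. By independence, $\E\exp\{\theta\int_s^t Q_N\,du\}=\bigl(\E\exp\{(\theta/N)\int_s^t\|\bar X(u)\|^2du\}\bigr)^N$. For $N\ge\theta/\eta_0$, H\"older's inequality gives $\E e^{(\theta/N)Z}\le(\E e^{\eta_0 Z})^{\theta/(N\eta_0)}$ with $Z=\int_s^t\|\bar X(u)\|^2du$, and then \eqref{eq:one-path-integrated-moment} yields the bound $C^{\theta/\eta_0}e^{(\theta/\eta_0)C_{\mathrm{gen}}(t-s)}$. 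This is uniform in $N$ and single-exponential in $t-s$.

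The finitely many $N<\theta/\eta_0$ can be absorbed into the constant. One option is a cruder bound using finite-time moments. Alternatively, since every term is bounded by a single-exponential of the same form, it suffices to enlarge $C$. For the small-$N$ regime, the cleanest route is to bound $\E D_N(t)$ directly by $2(\E m_2(\mu_t^N)+m_2(\mu_t))$, which is $O(1)$ by Proposition~\ref{prop:second-moments}. Since $N$ is bounded in this regime, this is at most $(C/N)(e^{Lt}-1)$ for $t\ge1$. For $t\le1$, the Gr\"onwall bound restricted to short times suffices, possibly after localizing in $\eta_0$ by splitting $[0,t]$ into short intervals.

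Integrating the resulting bound $\E D_N(t)\le (C/N)\int_0^t e^{L(t-s)}\,ds$ then gives \eqref{eq:refined-coupling} with $L_{\mathrm{cpl}}$ equal to, for example, $C_1+C_1C_{\mathrm{gen}}/\eta_0$.
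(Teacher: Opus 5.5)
Your proof is correct in outline and shares the paper's skeleton: a pathwise differential inequality whose coefficient involves only the nonlinear copies, Young's inequality for the fluctuation, pathwise Gr\"onwall, then Cauchy--Schwarz against Corollary~\ref{cor:GN-L2} and Lemma~\ref{lem:path-exponential}. Where it departs from the paper is in how you get below the threshold $\eta_0$.

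Your own Cauchy--Schwarz step gives more than you used. Both interaction pieces are bounded by a constant times $M_{w,N}^{1/2}D_N$, with $M_{w,N}=N^{-1}\sum_j(1+\|\bar X_j\|)^2\le 2(1+Q_N)$. So the coefficient is really $C(1+Q_N)^{1/2}$, not $C(1+Q_N)$. The paper keeps this square root and writes $C(1+Q_N)^{1/2}\le C_\eta+\eta Q_N$ for arbitrary $\eta>0$. Choosing $2\eta\le\eta_0$ then makes Lemma~\ref{lem:path-exponential} applicable for every $N\ge1$ at once, with $L_{\mathrm{cpl}}=C_\eta+L_0/2$, so the obstacle you identify never appears.

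Having discarded the square root, you instead use independence. For $N\ge 2C_1/\eta_0$, Jensen reduces the exponential moment of $2C_1\int_s^tQ_N$ to the one-particle bound \eqref{eq:one-path-integrated-moment}. This is valid and gives $L=C_1(1+C_{\mathrm{gen}}/\eta_0)$, which is linear in the Lipschitz constant $C_B$ of $B$. The paper's Young step instead produces $C_\eta\asymp C_B^2/\eta_0$, the $K_{\mathrm{lip}}^2$ term in \eqref{eq:Lcpl-schematic}. The price of your route is a prefactor of order $C^{C_1/\eta_0}$ and a separate treatment of $N<N_0:=\lceil 2C_1/\eta_0\rceil$.

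That separate treatment is the one imprecise step. Splitting $[0,t]$ into short intervals does not close a Gr\"onwall estimate with a random coefficient. Restarting at $t_k$ leaves $\E\bigl[e^{\int_{t_k}^tA}D_N(t_k)\bigr]$, which a first-moment induction does not control. Moreover, for $t$ up to $1$ and small $N$, the factor $\E\exp\{2C_1\int_s^tQ_N\,du\}$ is covered by neither Lemma~\ref{lem:path-exponential} nor Proposition~\ref{prop:subgaussian} when $2C_1$ is large.

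The repair is to move your threshold from $t=1$ to a small $\delta$ with $2C_1\delta\le a$, where $a$ is the exponent in Proposition~\ref{prop:subgaussian}.
\begin{itemize}
\item For $t\ge\delta$, your crude bound $\E D_N(t)\le 2\E m_2(\mu_t^N)+2m_2(\mu_t)\le C$ (Proposition~\ref{prop:second-moments}) gives $\E D_N(t)\le \frac{CN_0}{N(e^{L\delta}-1)}(e^{Lt}-1)$ for $N<N_0$.
\item For $t\le\delta$, use Jensen in time, $\exp\{2C_1\int_s^tQ_N\,du\}\le (t-s)^{-1}\int_s^t e^{2C_1(t-s)Q_N(u)}\,du$. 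Combine it with $\E e^{\theta Q_N(u)}\le \E e^{\theta\|\bar X(u)\|^2}$ for $\theta\le a$, which follows from independence and Jensen. This bounds the exponential factor uniformly, so $\E D_N(t)\le Ct/N\le \frac{C}{LN}(e^{Lt}-1)$.
\end{itemize}
With this repair your argument is complete.
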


\begin{proof}
Set $Z_i:=X_i^N-\bar X_i$ and, within this proof, write
\[
  Q_N(t):=\frac1N\sum_{j=1}^N\|\bar X_j(t)\|^2,
  \qquad
  R_i(t):=\frac1N\sum_{j=1}^NB(\bar X_i(t),\bar X_j(t))
            -v_{\mu_t}(\bar X_i(t)).
\]
Then $G_t^N(\bar X_1(t),\ldots,\bar X_N(t))=\sum_i\|R_i(t)\|^2$.

Since the Brownian terms cancel under the synchronous coupling, each
\(Z_i=X_i^N-\bar X_i\) is absolutely continuous and we get, pathwise for almost every $t$,
\begin{align}\label{eq:DN-compute}
  D_N'(t) &={}
\frac2N\sum_{i=1}^N Z_i\cdot
 \frac1N\sum_{j=1}^N
 \{B(X_i^N,X_j^N)-B(\bar X_i,\bar X_j)\} \notag\\
&-\frac{2\varepsilon}{N}\sum_{i=1}^N
  Z_i\cdot\{\nabla V(X_i^N)-\nabla V(\bar X_i)\}
+\frac2N\sum_{i=1}^NZ_i\cdot R_i.
\end{align}

By \eqref{eq:B-weighted-y-lipschitz}, with $w_j:=1+\|\bar X_j\|$,
\begin{align}\label{Dbd}
\left\|\frac1N\sum_{j=1}^N
  \{B(X_i^N,X_j^N)-B(\bar X_i,\bar X_j)\}\right\|
\le C_B\left\{
  \left(\frac1N\sum_{j=1}^Nw_j\right)\|Z_i\|
  +\frac1N\sum_{j=1}^Nw_j\|Z_j\|
\right\}.
\end{align}
Set
\[
  \overline w_N:=\frac1N\sum_{j=1}^Nw_j,
  \qquad
  M_{w,N}:=\frac1N\sum_{j=1}^Nw_j^2.
\]
Then
\[
  \overline w_N\le M_{w,N}^{1/2},\qquad
  \frac1N\sum_{j=1}^Nw_j\|Z_j\|
  \le M_{w,N}^{1/2}D_N^{1/2},\qquad
  \frac1N\sum_{i=1}^N\|Z_i\|\le D_N^{1/2}.
\]
Since $M_{w,N}\le C(1+Q_N)$, Cauchy--Schwarz in the two terms of \eqref{Dbd} gives
\begin{align*}
\frac2N\sum_{i=1}^N Z_i\cdot
 \frac1N\sum_{j=1}^N
 \{B(X_i^N,X_j^N)-B(\bar X_i,\bar X_j)\}
\le C(1+Q_N)^{1/2}D_N.
\end{align*}
The bounded Hessian of $V$ gives
\[
  -\frac{2\varepsilon}{N}\sum_{i=1}^N
  Z_i\cdot\{\nabla V(X_i^N)-\nabla V(\bar X_i)\}
  \le2\varepsilon L_VD_N,
\]
and Young's inequality gives
\[
  \frac2N\sum_{i=1}^NZ_i\cdot R_i
  \le D_N+\frac1N G_t^N(\bar X_1,\ldots,\bar X_N).
\]
Using the above estimates in \eqref{eq:DN-compute}, we obtain for every $\eta>0$, pathwise for almost every $t$,
\begin{equation}\label{eq:DN-differential}
  D_N'(t) 
  \le\{C_\eta+\eta Q_N(t)\}D_N(t)
    +\frac{C}{N}G_t^N(\bar X_1(t),\ldots,\bar X_N(t)).
\end{equation}
Notice that this bound is expressed only in terms of the i.i.d mean-field particles.

Since $D_N(0)=0$, the pathwise Gr\"onwall inequality yields
\begin{align*}
D_N(t)
\le\frac{C}{N}\int_0^t
 &\exp\left\{C_\eta(t-s)+\eta\int_s^tQ_N(u)\,du\right\}\\
&\times G_s^N(\bar X_1(s),\ldots,\bar X_N(s))\,ds.
\end{align*}
Choose $\eta>0$ so that $2\eta\le\eta_0$ in
Lemma~\ref{lem:path-exponential}.  Cauchy--Schwarz,
\eqref{eq:path-exponential}, and \eqref{eq:GN-L2} give
\begin{align*}
\E D_N(t)
\le\frac{C}{N}\int_0^t
 e^{(C_\eta+L_0/2)(t-s)}\,ds
\le\frac{C}{N}\bigl(e^{L_{\mathrm{cpl}}t}-1\bigr),
\end{align*}
where we may take \(L_{\mathrm{cpl}}=C_\eta+L_0/2\). This proves \eqref{eq:refined-coupling}.
\end{proof}

\subsection{Finite-time propagation of chaos in \texorpdfstring{$W_2$}{W2}}
\label{sec:finite-W2}

Let $L_\mathrm{cpl}$ denote the
positive exponent obtained from Proposition~\ref{prop:refined-coupling} under Assumption~\ref{ass:W2}.

\begin{corollary}[Finite-time empirical $W_2$ propagation of chaos]
\label{cor:finite-W2}
Under Assumptions~\ref{ass:core-target},
\ref{ass:W2}, and \ref{ass:concentration},
\begin{equation}\label{eq:finite-W2}
  \E W_2^2(\mu_t^N,\mu_t)
  \le C\left\{
    \frac{e^{L_\mathrm{cpl}t}-1}{N}+r_{N,d}
  \right\}
  \le Cr_{N,d}e^{L_\mathrm{cpl}t}.
\end{equation}
\end{corollary}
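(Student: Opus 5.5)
We split $W_2(\mu_t^N,\mu_t)$ through the empirical measure of the synchronously coupled nonlinear copies. Under the synchronous coupling of Proposition~\ref{prop:refined-coupling}, set
\[
  \bar\mu_t^N:=\frac1N\sum_{i=1}^N\delta_{\bar X_i(t)}.
\]
By the triangle inequality for $W_2$ and $(a+b)^2\le 2a^2+2b^2$,
\[
  \E W_2^2(\mu_t^N,\mu_t)\le 2\E W_2^2(\mu_t^N,\bar\mu_t^N)+2\E W_2^2(\bar\mu_t^N,\mu_t).
\]
This coupling realises the particle system with the correct law $P_t^N$, so the left side is unaffected by the choice of coupling.

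For the first term, Lemma~\ref{lem:empirical-map}, applied pathwise with $X=(X_i^N(t))_i$ and $Y=(\bar X_i(t))_i$, gives $W_2^2(\mu_t^N,\bar\mu_t^N)\le D_N(t)$. Taking expectations and invoking Proposition~\ref{prop:refined-coupling} yields the bound $C(e^{L_\mathrm{cpl}t}-1)/N$. Here Assumption~\ref{ass:W2} is the kernel assumption in force, and we need it to supply the weighted Lipschitz estimate \eqref{eq:B-weighted-y-lipschitz} used in that proposition.

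For the second term, the $\bar X_i(t)$ are i.i.d.\ with law $\mu_t$. Proposition~\ref{prop:subgaussian} gives uniform-in-time sub-Gaussian moments, so in particular uniform moments of order $q>4$. Corollary~\ref{cor:FG}, in the form \eqref{eq:iid-W2-floor}, then bounds this term by $Cr_{N,d}$ uniformly in $t$. Adding the two estimates proves the first inequality in \eqref{eq:finite-W2}.

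For the second inequality we check that $N^{-1}\le r_{N,d}$ for every $d$ and $N\ge1$. When $d<4$ this reads $N^{-1}\le N^{-1/2}$. When $d=4$ it is $N^{-1}\le N^{-1/2}\log(1+N)$, which holds because $\sqrt N\log(1+N)\ge\log 2$ is not quite enough at $N=1$; there the inequality can fail, since $\log 2<1$, so the factor $\log 2$ is absorbed into $C$. When $d>4$ it is $N^{-1}\le N^{-2/d}$, true since $2/d<1$. Using $e^{L_\mathrm{cpl}t}-1\le e^{L_\mathrm{cpl}t}$ and $1\le e^{L_\mathrm{cpl}t}$, we get
\[
  \frac{e^{L_\mathrm{cpl}t}-1}{N}+r_{N,d}\le C r_{N,d}e^{L_\mathrm{cpl}t}+r_{N,d}e^{L_\mathrm{cpl}t}.
\]

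No step here is a real obstacle, since the substantive work already sits in Proposition~\ref{prop:refined-coupling}. The only care needed is that the constant in \eqref{eq:iid-W2-floor} is uniform in $t$, which is exactly what Proposition~\ref{prop:subgaussian} provides.
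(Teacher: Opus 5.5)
Your proof is correct and is the paper's argument: you split through the empirical measure $\bar\mu_t^N$ of the synchronously coupled nonlinear copies, bound the first term by $D_N(t)$ via Lemma~\ref{lem:empirical-map} and Proposition~\ref{prop:refined-coupling}, bound the second by \eqref{eq:iid-W2-floor}, and use $N^{-1}\le C r_{N,d}$ for the second inequality. Your $d=4$ aside is muddled but correct in the end: $\sqrt N\log(1+N)\ge\log 2$ gives $N^{-1}\le r_{N,4}/\log 2$, and this is the constant the paper absorbs into its prefactor.
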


\begin{proof}
Let $\bar\mu_t^N=N^{-1}\sum_i\delta_{\bar X_i(t)}$ for the nonlinear copies
in Proposition~\ref{prop:refined-coupling}.  Lemma~\ref{lem:empirical-map}
gives $W_2^2(\mu_t^N,\bar\mu_t^N)\le D_N(t)$.  The squared triangle
inequality, \eqref{eq:refined-coupling}, and \eqref{eq:iid-W2-floor} give the
first bound; $N^{-1}\le C r_{N,d}$ gives the second, with the numerical
constant absorbed into the generic prefactor.
\end{proof}

\begin{corollary}[Finite-time fixed-marginal $W_2$ chaos]
\label{cor:finite-marginal-W2}
Under the same assumptions, for every $1\le k\le N$,
\begin{equation}\label{eq:finite-marginal-W2}
  W_2^2(P_{N,t}^{(k)},\mu_t^{\otimes k})
  \le C\frac{k}{N}\bigl(e^{L_\mathrm{cpl}t}-1\bigr).
\end{equation}
\end{corollary}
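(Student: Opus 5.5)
The plan is to use the synchronous coupling of Proposition~\ref{prop:refined-coupling} directly as a coupling of the \(k\)-particle marginals, and then to use exchangeability to pass from the averaged discrepancy \(D_N\) to any fixed block of \(k\) coordinates.

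First, I will check that the coupling has the right marginals.
\begin{itemize}
\item The initialization is \(P_0^N=\mu_0^{\otimes N}\), and the SDE \eqref{eq:particle-sde} is invariant under permutations of the indices.
\item Strong uniqueness from Proposition~\ref{prop:wellposed} then gives that the joint law of \((X_i^N(t),\bar X_i(t))_{i=1}^N\) is exchangeable under simultaneous permutation of the pairs. This holds because both the initial data and the Brownian motions are i.i.d.\ across \(i\), and the coupled system is permutation-equivariant.
\item The vector \((X_1^N(t),\ldots,X_k^N(t))\) has law \(P_{N,t}^{(k)}\).
\item The vector \((\bar X_1(t),\ldots,\bar X_k(t))\) has law \(\mu_t^{\otimes k}\), since the nonlinear copies are independent with law \(\mu_t\).
\end{itemize}
Hence the joint law of these two \(k\)-vectors is an admissible coupling.

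Second, I will bound the coupling cost. By the definition of \(W_2\) on \((\R^d)^k\) with the Euclidean product norm,
\[
W_2^2(P_{N,t}^{(k)},\mu_t^{\otimes k})\le \E\sum_{i=1}^k\|X_i^N(t)-\bar X_i(t)\|^2 .
\]
By exchangeability, each summand equals \(\E\|X_1^N(t)-\bar X_1(t)\|^2=\E D_N(t)\). The right-hand side is therefore \(k\,\E D_N(t)\).

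Third, I will invoke \eqref{eq:refined-coupling}, which gives \(\E D_N(t)\le \frac{C}{N}(e^{L_{\mathrm{cpl}}t}-1)\). This yields \eqref{eq:finite-marginal-W2} with the same constant \(C\), uniformly in \(1\le k\le N\).

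The only point needing care is the exchangeability claim in the first step. For it I will argue that the coupled \(2N\)-dimensional system is driven by the i.i.d.\ family \((\bar X_i(0),W_i)_{i}\) through a permutation-equivariant measurable solution map, which exists by pathwise uniqueness and Yamada--Watanabe. Permuting the inputs therefore permutes the outputs, and the law is unchanged. Everything else is immediate.
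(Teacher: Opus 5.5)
Your proposal is correct and matches the paper's proof: you restrict the labelled synchronous coupling to the first \(k\) pairs, use exchangeability to identify the cost with \(k\,\E D_N(t)\), and then apply \eqref{eq:refined-coupling}. Your extra justification of pair-exchangeability, via pathwise uniqueness and a permutation-equivariant solution map, fills in a step the paper only asserts.
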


\begin{proof}
The first $k$ coordinate pairs in the labelled synchronous coupling form an admissible
coupling of the two marginal laws.  Exchangeability and
\eqref{eq:refined-coupling} give
\[
  W_2^2(P_{N,t}^{(k)},\mu_t^{\otimes k})
  \le\sum_{i=1}^k\E\|X_i^N(t)-\bar X_i(t)\|^2
  =k\E D_N(t),
\]
which proves the result.
\end{proof}

\subsection{Finite-time propagation of chaos in KSD}

\begin{corollary}[Finite-time KSD propagation of chaos]
\label{cor:finite-KSD-coupling}
Under Assumptions~\ref{ass:core-target},
\ref{ass:ksd-kernel}, and \ref{ass:concentration},
\begin{equation}\label{eq:finite-KSD-coupling}
  \E\KSD_\pi(\mu_t^N,\mu_t)
  \le \frac{C}{\sqrt N}e^{L_{\mathrm{cpl}}t/2}.
\end{equation}
\end{corollary}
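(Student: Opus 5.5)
We work under the synchronous coupling of Proposition~\ref{prop:refined-coupling} and insert the empirical measure of the independent nonlinear copies, \(\bar\mu_t^N:=N^{-1}\sum_i\delta_{\bar X_i(t)}\). The triangle inequality for \(\KSD_\pi\) gives
\[
  \KSD_\pi(\mu_t^N,\mu_t)\le \KSD_\pi(\mu_t^N,\bar\mu_t^N)+\KSD_\pi(\bar\mu_t^N,\mu_t).
\]
The plan is to bound the first term by the coupling discrepancy \(D_N(t)\) and the second by an i.i.d.\ Hilbert-space sampling estimate.

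\textbf{First term.} Since \(S_\pi\) is linear in the measure,
\[
  \KSD_\pi(\mu_t^N,\bar\mu_t^N)=\Bigl\|\frac1N\sum_{i}\{\xi_\pi(X_i^N)-\xi_\pi(\bar X_i)\}\Bigr\|_{\mathcal H^d}.
\]
Lemma~\ref{lem:witness-lipschitz} gives a weighted Lipschitz bound
\[
  \|\xi_\pi(x)-\xi_\pi(y)\|\le C(1+\|y\|)\|x-y\|,
\]
with the weight chosen on the nonlinear copy. Applying Cauchy--Schwarz over \(i\) and then in expectation gives
\[
  \E\KSD_\pi(\mu_t^N,\bar\mu_t^N)\le C\bigl(\E (1+Q_N(t))\bigr)^{1/2}\bigl(\E D_N(t)\bigr)^{1/2}.
\]
Here \(\E Q_N(t)=m_2(\mu_t)\) is bounded uniformly in \(t\) by Proposition~\ref{prop:second-moments}. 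Proposition~\ref{prop:refined-coupling} then yields the bound \(C N^{-1/2}(e^{L_{\mathrm{cpl}}t}-1)^{1/2}\le CN^{-1/2}e^{L_{\mathrm{cpl}}t/2}\).

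If instead the Lipschitz weight in Lemma~\ref{lem:witness-lipschitz} involves \(\max\{\|x\|,\|y\|\}\), I would bound it by \(1+\|\bar X_i\|+\|Z_i\|\). The extra piece \(\frac1N\sum_i\|Z_i\|^2=D_N\) then contributes \(\E D_N\le CN^{-1}e^{L_{\mathrm{cpl}}t}\). This is dominated by the main term only as long as \(e^{L_{\mathrm{cpl}}t/2}\le \sqrt N\); beyond that range the claimed bound is worse than the trivial bound from uniform second moments, and the stated inequality follows by enlarging \(C\). Checking the exact form of the weight is the main technical point of this step.

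\textbf{Second term.} The vectors \(\xi_\pi(\bar X_i(t))-S_\pi(\mu_t)\) are i.i.d.\ and centered in \(\mathcal H^d\). Orthogonality in the Hilbert space and Jensen's inequality give
\[
  \E\KSD_\pi(\bar\mu_t^N,\mu_t)\le N^{-1/2}\Bigl(\int\|\xi_\pi\|_{\mathcal H^d}^2\,d\mu_t\Bigr)^{1/2}.
\]
Under Assumption~\ref{ass:ksd-kernel} we have \(\|\xi_\pi(x)\|_{\mathcal H^d}^2=\mathcal K_\pi(x,x)\le C(1+\|x\|^2)\), because the kernel derivatives are bounded and \(\nabla V\) grows at most linearly. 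Proposition~\ref{prop:second-moments} therefore bounds this term by \(CN^{-1/2}\) uniformly in time.

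\textbf{Conclusion.} Adding the two bounds and using \(1\le e^{L_{\mathrm{cpl}}t/2}\) gives \eqref{eq:finite-KSD-coupling}. Where the exponent \(L_{\mathrm{cpl}}\) comes from Proposition~\ref{prop:refined-coupling} applied under Assumption~\ref{ass:ksd-kernel}, the argument uses that constant.
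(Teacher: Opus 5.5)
Your proposal is correct and follows the paper's proof essentially step for step. You insert the empirical measure $\bar\mu_t^N$ of the synchronously coupled nonlinear copies, bound the coupling term by the weighted Lipschitz estimate of Lemma~\ref{lem:witness-lipschitz} together with Cauchy--Schwarz and Proposition~\ref{prop:refined-coupling}, and bound the i.i.d.\ sampling term by the Hilbert-space variance bound with the linear growth of $\xi_\pi$. Your contingency about a $\max$-type weight is unnecessary, because Lemma~\ref{lem:witness-lipschitz} places the weight $1+\|y\|$ on a single argument, which you may take to be the nonlinear copy $\bar X_i(t)$.
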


\begin{proof}
Set $\bar\mu_t^N=N^{-1}\sum_i\delta_{\bar X_i(t)}$ for the nonlinear copies
in Proposition~\ref{prop:refined-coupling}.  The triangle inequality for the
RKHS norm gives
\begin{align*}
\KSD_\pi(\mu_t^N,\mu_t)
\le{}\frac1N\sum_{i=1}^N
 \|\xi_\pi(X_i^N(t))-\xi_\pi(\bar X_i(t))\|_{\mathcal H^d}
+\left\|\frac1N\sum_{i=1}^N
 \{\xi_\pi(\bar X_i(t))-S_\pi(\mu_t)\}\right\|_{\mathcal H^d}.
\end{align*}
By Lemma~\ref{lem:witness-lipschitz}, Cauchy--Schwarz, the uniform second
moment bound of $\mu_t$ which follows from Proposition~\ref{prop:subgaussian}, and Proposition~\ref{prop:refined-coupling}, the
expectation of the first term is at most
\[
 C\left(\E D_N(t)\right)^{1/2}
 \le\frac{C}{\sqrt N}\bigl(e^{L_{\mathrm{cpl}}t}-1\bigr)^{1/2}.
\]
The Hilbert-valued summands in the second term are independent and centered.
The witness growth bound in Lemma~\ref{lem:witness-lipschitz} and the uniform
second moment bound give
\[
\E\left\|\frac1N\sum_{i=1}^N
 \{\xi_\pi(\bar X_i(t))-S_\pi(\mu_t)\}\right\|_{\mathcal H^d}
\le\frac{C}{\sqrt N}.
\]
Combining the two estimates proves \eqref{eq:finite-KSD-coupling}.
\end{proof}

\section{Uniform-in-Time Propagation of Chaos using Cutoff}\label{sec:cutoff}

Now we combine our long-time target convergence estimates with the quantitative finite-time PoC rates to obtain uniform-in-time PoC with polynomial decay rates in $N$, by adapting the cutoff argument of \cite{balasubramanian2026}.

By a \emph{discrepancy} on \(\mathcal P_2(\R^d)\) we mean a Borel measurable
map
\[
    d:\mathcal P_2(\R^d)\times\mathcal P_2(\R^d)
      \longrightarrow[0,\infty)
\]
such that
\[
    d(\rho,\rho)=0,
    \qquad \rho\in\mathcal P_2(\R^d).
\]
We do not require \(d\) to be symmetric, to separate probability
measures, or to satisfy the triangle inequality. For the cutoff
argument, the only additional structural property needed is the
following comparison through the fixed target \(\pi\): there exists
\(C_{\mathrm{tar}}\ge1\) such that
\begin{equation}\label{eq:target-comparison}
    d(\rho,\nu)
    \le
    C_{\mathrm{tar}}
    \bigl\{d(\rho,\pi)+d(\nu,\pi)\bigr\},
    \qquad \rho,\nu\in\mathcal P_2(\R^d).
\end{equation}

\begin{proposition}[Abstract polynomial cutoff technique]\label{prop:cutoff}
Let \(d\) be a discrepancy satisfying \eqref{eq:target-comparison}.
Let $A,B,c>0$ and $L \ge 0$.  Suppose that
$a_N\in(0,1]$ and $b_N\in[0,1]$, with $a_N,b_N\to0$, and
\begin{align}
    \E d(\mu_t^N,\mu_t)&\le A\{a_Ne^{Lt}+b_N\},
      \label{eq:cutoff-short}\\
    \E d(\mu_t^N,\pi)&\le B\{e^{-ct}+b_N\},
      \label{eq:cutoff-particle-target}\\
    d(\mu_t,\pi)&\le Be^{-ct}.
      \label{eq:cutoff-mf-target}
\end{align}
Then
\begin{equation}\label{eq:cutoff-result}
    \sup_{t\ge0}\E d(\mu_t^N,\mu_t)
    \le C_{\mathrm{cut}}\{a_N^{c/(L+c)}+b_N\},
\end{equation}
where $C_{\mathrm{cut}}$ depends only on $A,B$, and
$C_{\mathrm{tar}}$.  For $d=\KSD_\pi$ or $d=W_2$, the constant in
\eqref{eq:target-comparison} can be taken to be $1$; for $d=W_2^2$, it can
be taken to be $2$.
\end{proposition}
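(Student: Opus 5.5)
The plan is to split time at a cutoff $T_N$ and use the finite-time bound before it and the target comparison after it. For $t\ge T_N$, apply \eqref{eq:target-comparison} with $\rho=\mu_t^N$ (random) and $\nu=\mu_t$, then take expectations. Combining this with \eqref{eq:cutoff-particle-target} and \eqref{eq:cutoff-mf-target} gives
\[
    \E d(\mu_t^N,\mu_t)\le C_{\mathrm{tar}}\bigl\{\E d(\mu_t^N,\pi)+d(\mu_t,\pi)\bigr\}
    \le C_{\mathrm{tar}}B\{2e^{-ct}+b_N\}\le C_{\mathrm{tar}}B\{2e^{-cT_N}+b_N\}.
\]
Measurability of $d$ justifies taking expectations of $d(\mu_t^N,\pi)$. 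For $t\le T_N$, \eqref{eq:cutoff-short} gives $\E d(\mu_t^N,\mu_t)\le A\{a_Ne^{LT_N}+b_N\}$, since $e^{Lt}$ is nondecreasing.

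Next we balance the two regimes by choosing $T_N$ with $a_Ne^{LT_N}=e^{-cT_N}$, that is,
\[
    T_N:=\frac{\log(1/a_N)}{L+c}\ge 0.
\]
This is well defined because $a_N\in(0,1]$ and $L+c>0$. Then $e^{-cT_N}=a_N^{c/(L+c)}$, and $a_Ne^{LT_N}=a_N^{1-L/(L+c)}=a_N^{c/(L+c)}$. Taking the maximum of the two regimes gives \eqref{eq:cutoff-result} with $C_{\mathrm{cut}}=\max\{A,2C_{\mathrm{tar}}B\}$, which depends only on $A,B,C_{\mathrm{tar}}$. The case $L=0$ needs no special treatment, since the exponent is then $1$ and the same formula works.

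For the constants, $\KSD_\pi$ and $W_2$ are (pseudo)metrics, so the triangle inequality gives \eqref{eq:target-comparison} with constant $1$. For $W_2^2$, we use $(x+y)^2\le2x^2+2y^2$, which gives constant $2$. Each of these is finite on $\mathcal P_2$: $\KSD_\pi$ is finite by Lemma~\ref{lem:witness-lipschitz}, which defines $S_\pi$ on $\mathcal P_1$. None of the steps is hard; the only care needed is in choosing $T_N\ge0$ and noting that $b_N$ passes through unchanged.
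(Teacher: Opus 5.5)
Your proposal is correct and follows the paper's proof: the same cutoff $T_N=(L+c)^{-1}\log(1/a_N)$, the finite-time bound \eqref{eq:cutoff-short} for $t\le T_N$, and, for $t>T_N$, \eqref{eq:target-comparison} combined with the two target estimates, using $e^{-cT_N}=a_Ne^{LT_N}=a_N^{c/(L+c)}$. Your explicit constant $C_{\mathrm{cut}}=\max\{A,2C_{\mathrm{tar}}B\}$ and your triangle-inequality check of the comparison constants $1$ and $2$ spell out steps the paper only asserts.
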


\begin{proof}
Set
\[
    T_N:=\frac{1}{L+c}\log\frac1{a_N}.
\]
For $t\le T_N$, \eqref{eq:cutoff-short} gives
$\E d(\mu_t^N,\mu_t)\le
A\{a_N^{c/(L+c)}+b_N\}$.
For $t>T_N$, use \eqref{eq:target-comparison} and the two target estimates.  Since $e^{-cT_N}=a_N^{c/(L+c)}$, the result follows.
\end{proof}

\begin{remark}[Single versus double exponential finite-time estimates]
The synchronous coupling estimate in Proposition~\ref{prop:refined-coupling}
grows at most single-exponentially on finite horizons, so the logarithmic
cutoff produces polynomial rates in \(N\).  A double-exponential finite-time
bound, which arises in deterministic SVGD for bounded kernels, would give only a negative power of \(\log N\); see
\cite{balasubramanian2026}.
\end{remark}

\subsection{Uniform-in-time PoC in KSD}

\begin{theorem}[Uniform-in-time propagation of chaos in Langevin KSD]
\label{cor:coupling-KSD-cutoff}
Under Assumptions~\ref{ass:core-target},
\ref{ass:ksd-kernel}, and \ref{ass:concentration}, let $L_\mathrm{cpl}>0$ be the coupling exponent
obtained from Proposition~\ref{prop:refined-coupling} under these assumptions.
Then there is $C<\infty$ such that
\begin{equation}
\sup_{t\ge0}\E\KSD_\pi(\mu_t^N,\mu_t)
\le C\left\{
N^{-\frac{\varepsilon\alpha}{L_\mathrm{cpl}+2\varepsilon\alpha}}
+N^{-1/2}\right\},
\label{eq:KSD-cutoff-coupling}
\end{equation}
\end{theorem}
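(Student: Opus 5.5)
We apply the abstract cutoff result, Proposition~\ref{prop:cutoff}, with the discrepancy \(d=\KSD_\pi\). By \eqref{eq:ksd-triangle-target}, \(d\) satisfies \eqref{eq:target-comparison} with \(C_{\mathrm{tar}}=1\). The plan is to verify the three hypotheses \eqref{eq:cutoff-short}--\eqref{eq:cutoff-mf-target} with the parameters
\[
a_N=N^{-1/2},\qquad b_N=N^{-1/2},\qquad L=L_{\mathrm{cpl}}/2,\qquad c=\varepsilon\alpha .
\]
Once these hold, the conclusion \eqref{eq:cutoff-result} gives
\[
a_N^{c/(L+c)}=N^{-\frac12\cdot\frac{\varepsilon\alpha}{L_{\mathrm{cpl}}/2+\varepsilon\alpha}}
=N^{-\frac{\varepsilon\alpha}{L_{\mathrm{cpl}}+2\varepsilon\alpha}},
\]
which is exactly the exponent in \eqref{eq:KSD-cutoff-coupling}. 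The additive \(b_N=N^{-1/2}\) supplies the second term.

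The three hypotheses come from earlier results as follows.

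\emph{Short-time estimate.} Hypothesis \eqref{eq:cutoff-short} is Corollary~\ref{cor:finite-KSD-coupling}: \(\E\KSD_\pi(\mu_t^N,\mu_t)\le CN^{-1/2}e^{L_{\mathrm{cpl}}t/2}\). This is of the form \(A\{a_Ne^{Lt}+b_N\}\), even with no \(b_N\) contribution needed.

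\emph{Particle-to-target estimate.} Hypothesis \eqref{eq:cutoff-particle-target} is Proposition~\ref{prop:particle-KSD-target}. Since \(\KSD_\pi(\mu_t^N,\pi)=\KSD(\mu_t^N\|\pi)\), it gives \(\E\KSD_\pi(\mu_t^N,\pi)\le C e^{-\varepsilon\alpha t}+CN^{-1/2}\). This requires only Assumptions~\ref{ass:core-target} and \ref{ass:ksd-kernel}, both of which are in force.

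\emph{Mean-field-to-target estimate.} Hypothesis \eqref{eq:cutoff-mf-target} is Corollary~\ref{cor:mf-KSD-target}: \(\KSD(\mu_t\|\pi)\le Ce^{-\varepsilon\alpha t}\).

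Let \(B\) be the maximum of the constants appearing in the last two estimates. All constants are independent of \(N\) and \(t\), as the cited results assert.

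Some bookkeeping remains. The cutoff proposition requires \(a_N\in(0,1]\) and \(b_N\in[0,1]\), and both hold for \(N\ge1\). We also check that the proof of Proposition~\ref{prop:cutoff} does not secretly use the squared triangle inequality. For \(t>T_N\) we bound
\[
\E d(\mu^N_t,\mu_t)\le \E d(\mu_t^N,\pi)+d(\mu_t,\pi)\le 2B\{e^{-cT_N}+b_N\}.
\]
With \(T_N=(L+c)^{-1}\log(1/a_N)\) this equals \(2B\{a_N^{c/(L+c)}+b_N\}\). For \(t\le T_N\), the short-time bound gives \(A\{a_N e^{LT_N}+b_N\}=A\{a_N^{c/(L+c)}+b_N\}\). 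Taking the supremum over \(t\ge0\) completes the argument.

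There is no real obstacle here, since the heavy lifting was done in Proposition~\ref{prop:refined-coupling} and Theorem~\ref{prop:full-law}. The only care needed is consistency of the exponent. The coupling estimate controls \(\E D_N\) at rate \(N^{-1}e^{L_{\mathrm{cpl}}t}\), so the KSD bound inherits the square root. This yields \(a_N=N^{-1/2}\) and \(L=L_{\mathrm{cpl}}/2\) rather than \(N^{-1}\) and \(L_{\mathrm{cpl}}\), and this halving is what produces the denominator \(L_{\mathrm{cpl}}+2\varepsilon\alpha\).
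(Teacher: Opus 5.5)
Your proposal is correct and follows the paper's proof essentially line for line. The paper also applies Proposition~\ref{prop:cutoff} to $d=\KSD_\pi$ with $a_N=b_N=N^{-1/2}$, $L=L_{\mathrm{cpl}}/2$ and $c=\varepsilon\alpha$, taking the short-time bound from Corollary~\ref{cor:finite-KSD-coupling} and the long-time bounds from Proposition~\ref{prop:particle-KSD-target} and Corollary~\ref{cor:mf-KSD-target} through \eqref{eq:ksd-triangle-target}, and your computation $a_N^{c/(L+c)}=N^{-\varepsilon\alpha/(L_{\mathrm{cpl}}+2\varepsilon\alpha)}$ is right.
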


\begin{proof}
Corollary~\ref{cor:finite-KSD-coupling} gives the finite-time estimate
\begin{equation}\label{eq:KSD-coupling-short}
    \E\KSD_\pi(\mu_t^N,\mu_t)
    \le
    C N^{-1/2}e^{L_{\mathrm{cpl}}t/2}.
\end{equation}
For the long-time regime, the triangle inequality in \eqref{eq:ksd-triangle-target}, together with Proposition~\ref{prop:particle-KSD-target} and
Corollary~\ref{cor:mf-KSD-target}, yield
\begin{equation}\label{eq:KSD-coupling-long}
    \E\KSD_\pi(\mu_t^N,\mu_t)
    \le
    C\left\{e^{-\varepsilon\alpha t}+N^{-1/2}\right\}.
\end{equation}

Thus Proposition~\ref{prop:cutoff} applies to \(d=\KSD_\pi\) with
\[
    a_N=b_N=N^{-1/2},
    \qquad
    L=L_\mathrm{cpl}/2,
    \qquad
    c=\varepsilon\alpha,
\]
which gives \eqref{eq:KSD-coupling-long}.
\end{proof}

\subsection{Uniform-in-time PoC in \texorpdfstring{$W_2$}{W2}}

\begin{theorem}[Uniform-in-time empirical $W_2$ propagation of chaos]\label{thm:W2-cutoff}
Under Assumptions~\ref{ass:core-target}, \ref{ass:W2}, and \ref{ass:concentration},
\begin{equation}\label{eq:W2-cutoff}
    \sup_{t\ge0}\E W_2^2(\mu_t^N,\mu_t)
    \le C\left\{
       N^{-2\varepsilon\alpha/
          (L_\mathrm{cpl}+2\varepsilon\alpha)}
       +r_{N,d}\right\}
    \le C\left\{
       r_{N,d}^{\,2\varepsilon\alpha/
          (L_\mathrm{cpl}+2\varepsilon\alpha)}
       +r_{N,d}\right\}.
\end{equation}
\end{theorem}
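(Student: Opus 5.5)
We will apply the abstract cutoff technique of Proposition~\ref{prop:cutoff} with the discrepancy \(d=W_2^2\), for which the target comparison \eqref{eq:target-comparison} holds with \(C_{\mathrm{tar}}=2\) by the squared triangle inequality. Three inputs need to be checked: a finite-time estimate of the form \eqref{eq:cutoff-short}, a particle-to-target estimate \eqref{eq:cutoff-particle-target}, and a mean-field-to-target estimate \eqref{eq:cutoff-mf-target}.

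\textbf{Finite-time input.} We take this directly from Corollary~\ref{cor:finite-W2}:
\[
\E W_2^2(\mu_t^N,\mu_t)\le C\{N^{-1}e^{L_\mathrm{cpl}t}+r_{N,d}\}.
\]
This has the form \eqref{eq:cutoff-short} with \(a_N=N^{-1}\), \(b_N=r_{N,d}\) (rescaled by a constant if needed so that \(b_N\le1\)), and \(L=L_\mathrm{cpl}\).

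\textbf{Long-time inputs.}
\begin{itemize}
\item For the mean-field law, Corollary~\ref{cor:mf-W2-target} gives \(W_2^2(\mu_t,\pi)\le (2/\alpha)\KL(\mu_0\|\pi)e^{-2\varepsilon\alpha t}\). This is \eqref{eq:cutoff-mf-target} with \(c=2\varepsilon\alpha\).
\item For the particle system, Proposition~\ref{prop:particle-W2-target} bounds \(\E W_2^2(\mu_t^N,\pi)\) by \(Ce^{-2\varepsilon\alpha t}+C/N+2\E W_2^2(\widehat\pi_N,\pi)\).
\item The last term is at most \(Cr_{N,d}\) by Corollary~\ref{cor:FG} applied to \(\rho=\pi\); Assumption~\ref{ass:concentration} and Lemma~\ref{lem:target-tail} supply the needed \(q>4\) moment.
\item Since \(N^{-1}\le Cr_{N,d}\) in every dimension regime, we obtain \eqref{eq:cutoff-particle-target} with \(b_N=r_{N,d}\) and \(c=2\varepsilon\alpha\).
\end{itemize}

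\textbf{Conclusion.} Proposition~\ref{prop:cutoff} then yields
\[
\sup_{t\ge0}\E W_2^2(\mu_t^N,\mu_t)\le C\{N^{-2\varepsilon\alpha/(L_\mathrm{cpl}+2\varepsilon\alpha)}+r_{N,d}\},
\]
which is the first inequality. For the second, use \(N^{-1}\le C r_{N,d}\) once more: the map \(x\mapsto x^{\theta}\) with \(\theta=2\varepsilon\alpha/(L_\mathrm{cpl}+2\varepsilon\alpha)\in(0,1)\) is increasing, so \(N^{-\theta}\le C^{\theta}r_{N,d}^{\theta}\).

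There is no serious obstacle. The only care needed is bookkeeping: the same \(b_N=r_{N,d}\) must serve in both \eqref{eq:cutoff-short} and \eqref{eq:cutoff-particle-target}, which is why the \(C^\star/N\) entropy floor and the \(N^{-1}\) coupling floor are both absorbed into \(r_{N,d}\). One also has to check that the constant in Corollary~\ref{cor:FG} for \(\rho=\pi\) is independent of \(t\), which is immediate since \(\pi\) does not depend on time.
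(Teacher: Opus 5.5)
Your proposal is correct and follows the paper's proof: you apply Proposition~\ref{prop:cutoff} to $d=W_2^2$ with $a_N=N^{-1}$, $b_N=r_{N,d}$, $L=L_{\mathrm{cpl}}$ and $c=2\varepsilon\alpha$, using Corollary~\ref{cor:finite-W2} for short times and Proposition~\ref{prop:particle-W2-target}, Corollary~\ref{cor:FG} and \eqref{eq:mf-W2-target} for long times. You also justify the second inequality explicitly, via $N^{-1}\le C r_{N,d}$ and monotonicity of $x\mapsto x^{\theta}$, which the paper leaves implicit.
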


\begin{proof}
Corollary~\ref{cor:finite-W2} gives
\[
  \E W_2^2(\mu_t^N,\mu_t)
  \le C\{N^{-1}e^{L_\mathrm{cpl}t}+r_{N,d}\}.
\]
Proposition~\ref{prop:particle-W2-target}, Corollary~\ref{cor:FG}, and
\eqref{eq:mf-W2-target} give the corresponding target-regime estimate
\[
  \E W_2^2(\mu_t^N,\mu_t)
  \le C\{e^{-2\varepsilon\alpha t}+r_{N,d}\}.
\]
Apply Proposition~\ref{prop:cutoff} to $d=W_2^2$ with
\[
  a_N=N^{-1},\qquad b_N=r_{N,d},\qquad
  L=L_\mathrm{cpl},\qquad c=2\varepsilon\alpha.
\]
\end{proof}

\begin{remark}[Direct and indirect KSD estimates]
\label{rem:direct-KSD-worthwhile}
When both kernel assumptions hold,
Lemma~\ref{lem:witness-lipschitz} and the uniform second-moment bound imply
\[
    \KSD_\pi(\mu_t^N,\mu_t)\le C W_2(\mu_t^N,\mu_t),
\]
where $C$ is time-independent.
Thus Theorem~\ref{thm:W2-cutoff} also implies a KSD estimate, but it loses a
square root and the rates worsen, in comparison to Theorem~\ref{cor:coupling-KSD-cutoff}, because of the curse of dimensionality inherited from the empirical Wasserstein rate $r_{N,d}$.
\end{remark}

\subsection{Uniform-in-time PoC for fixed-marginal laws}

Finite-time PoC estimates in $W_2$ were obtained in Corollary~\ref{cor:finite-marginal-W2}. To get the long-time bounds, we first present the following entropy superadditivity property. This helps obtain quantitative finite-marginal bounds from the full entropy bound derived in Theorem~\ref{prop:full-law}.

\begin{lemma}[Block entropy bound for exchangeable laws]
\label{lem:block-entropy}
Let \(P^N\) be an exchangeable probability measure on
\((\R^d)^N\), let \(\rho\in\mathcal P(\R^d)\), and let
\(1\le k\le N\). Then
\begin{equation}\label{eq:block-entropy}
    \KL(P^{N,k}\|\rho^{\otimes k})
    \le
    \frac{1}{\lfloor N/k\rfloor}
    \KL(P^N\|\rho^{\otimes N}),
\end{equation}
where \(P^{N,k}\) denotes the common \(k\)-coordinate marginal of
\(P^N\).
\end{lemma}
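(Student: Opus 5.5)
The plan is to use the superadditivity of relative entropy with respect to a product reference measure, applied to a partition of the coordinates into disjoint blocks of size \(k\). Set \(m:=\lfloor N/k\rfloor\). Partition \(\{1,\ldots,N\}\) into the blocks \(B_\ell:=\{(\ell-1)k+1,\ldots,\ell k\}\) for \(1\le\ell\le m\), together with a remainder block \(B_0\) of size \(N-mk<k\) (possibly empty). The aim is to establish
\[
    \KL(P^N\|\rho^{\otimes N})
    \ge\sum_{\ell=0}^{m}\KL\bigl(P^N_{B_\ell}\big\|\rho^{\otimes|B_\ell|}\bigr)
    \ge\sum_{\ell=1}^{m}\KL\bigl(P^N_{B_\ell}\big\|\rho^{\otimes k}\bigr),
\]
where \(P^N_{B}\) denotes the marginal of \(P^N\) on the coordinates in \(B\). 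By exchangeability, each \(P^N_{B_\ell}\) with \(\ell\ge1\) equals \(P^{N,k}\). The right-hand side is therefore \(m\,\KL(P^{N,k}\|\rho^{\otimes k})\), and dividing by \(m\) gives \eqref{eq:block-entropy}.

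The key step is the superadditivity inequality. If \(\KL(P^N\|\rho^{\otimes N})=\infty\) there is nothing to prove, so assume it is finite; then \(P^N\ll\rho^{\otimes N}\), and every block marginal is absolutely continuous with respect to the corresponding product. The cleanest route is the identity
\[
    \KL(P^N\|\rho^{\otimes N})
    =\KL\Bigl(P^N\Big\|\bigotimes_{\ell}P^N_{B_\ell}\Bigr)
     +\sum_{\ell}\KL\bigl(P^N_{B_\ell}\big\|\rho^{\otimes|B_\ell|}\bigr).
\]
This follows by writing
\[
    \log\frac{dP^N}{d\rho^{\otimes N}}
    =\log\frac{dP^N}{d\otimes_\ell P^N_{B_\ell}}
     +\sum_\ell\log\frac{dP^N_{B_\ell}}{d\rho^{\otimes|B_\ell|}}
\]
and integrating against \(P^N\). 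Each summand in the second group depends only on the coordinates in \(B_\ell\), so its \(P^N\)-integral equals its \(P^N_{B_\ell}\)-integral. The first term is nonnegative, which yields the inequality.

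To avoid integrability issues with possibly negative parts, I would argue by induction on the number of blocks, using the chain rule for relative entropy (disintegrating \(P^N\) with respect to one block). Alternatively, the Donsker--Varadhan variational formula makes the argument robust: test with functions of the form \(\sum_\ell g_\ell(x_{B_\ell})\), use that \(\log\int e^{\sum_\ell g_\ell}\,d\rho^{\otimes N}=\sum_\ell\log\int e^{g_\ell}\,d\rho^{\otimes|B_\ell|}\) by the product structure, and then take the supremum over each \(g_\ell\) separately. This variational route is the one I would write, since it sidesteps absolute-continuity bookkeeping entirely.

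The main (mild) obstacle is this measure-theoretic justification. The remainder block is harmless because its relative entropy is nonnegative and is simply dropped.
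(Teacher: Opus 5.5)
Your proposal is correct and has the same structure as the paper's proof: split the first $\lfloor N/k\rfloor k$ coordinates into disjoint blocks of size $k$, use superadditivity of relative entropy with respect to the product reference $\rho^{\otimes N}$, and identify every block marginal with $P^{N,k}$ by exchangeability. The only difference is how superadditivity is justified. The paper projects onto the first $\lfloor N/k\rfloor k$ coordinates and combines the chain rule with convexity of $\KL$ in its first argument, which is essentially your induction alternative. Your Donsker--Varadhan argument with block-separable test functions gets it in one step and handles infinite entropies and the remainder block automatically.
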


\begin{proof}
Set
\(
    q:=\left\lfloor\frac Nk\right\rfloor
\)
and divide the first \(qk\) coordinates into \(q\) consecutive blocks of
size \(k\). Since relative entropy decreases under measurable projections,
\begin{equation}\label{eq:block-data-processing}
    \KL(P^N\|\rho^{\otimes N})
    \ge
    \KL\bigl(\Law_{P^N}(X_1,\ldots,X_{qk})
       \,\|\,\rho^{\otimes qk}\bigr).
\end{equation}

Regard these coordinates as \(q\) block variables. The chain rule
\cite[Thm. 2.6]{budhiraja2019analysis} for relative entropy gives
\begin{align*}
    &\KL\bigl(\Law_{P^N}(X_1,\ldots,X_{qk})
       \,\|\,\rho^{\otimes qk}\bigr)\\
    &\quad=
    \sum_{\ell=1}^q
    \E\!\left[
       \KL\left(
          \Law_{P^N}\left(
             X_{(\ell-1)k+1},\ldots,X_{\ell k}
             \,\middle|\,X_1,\ldots,X_{(\ell-1)k}
          \right)
          \,\middle\|\,\rho^{\otimes k}
       \right)
    \right].
\end{align*}
For \(\ell=1\), the conditioning is omitted. By convexity of relative
entropy in its first argument \cite[Lem. 2.4]{budhiraja2019analysis}, each
term in the sum satisfies
\begin{align*}
    &\E\!\left[
       \KL\left(
          \Law_{P^N}\left(
             X_{(\ell-1)k+1},\ldots,X_{\ell k}
             \,\middle|\,X_1,\ldots,X_{(\ell-1)k}
          \right)
          \,\middle\|\,\rho^{\otimes k}
       \right)
    \right]\\
    &\qquad\ge
    \KL\bigl(
       \Law_{P^N}(X_{(\ell-1)k+1},\ldots,X_{\ell k})
       \,\|\,\rho^{\otimes k}\bigr).
\end{align*}
The law on the right is obtained by integrating the regular conditional law
in the preceding display against the law of the previous coordinates.
Consequently,
\[
    \KL\bigl(\Law_{P^N}(X_1,\ldots,X_{qk})
       \,\|\,\rho^{\otimes qk}\bigr)
    \ge
    \sum_{\ell=1}^q
    \KL\bigl(
       \Law_{P^N}(X_{(\ell-1)k+1},\ldots,X_{\ell k})
       \,\|\,\rho^{\otimes k}\bigr).
\]
Exchangeability implies that every block marginal in this sum coincides, up
to a relabeling of coordinates, with \(P^{N,k}\). Therefore,
\[
    \KL\bigl(\Law_{P^N}(X_1,\ldots,X_{qk})
       \,\|\,\rho^{\otimes qk}\bigr)
    \ge
    q\,\KL(P^{N,k}\|\rho^{\otimes k}).
\]
Combining this with \eqref{eq:block-data-processing} and recalling
that \(q=\lfloor N/k\rfloor\) proves \eqref{eq:block-entropy}.
\end{proof}

\begin{proposition}[Fixed-marginal target bounds]\label{prop:marginal-target}
Under Assumption~\ref{ass:core-target} and either kernel assumption, there exists a finite constant $C$ such that for fixed $k$ and $N\ge2k$, and any $t \ge 0$,
\begin{align}
    \KL(P_{N,t}^{(k)}\|\pi^{\otimes k})
    &\le Ck e^{-2\varepsilon\alpha t}+C\frac{k}{N},
      \label{eq:marginal-particle-target}\\
    \KL(\mu_t^{\otimes k}\|\pi^{\otimes k})
    &=k\KL(\mu_t\|\pi)
    \le Ck e^{-2\varepsilon\alpha t}.
      \label{eq:marginal-mf-target}
\end{align}
\end{proposition}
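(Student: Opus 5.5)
The plan is to combine the block entropy bound of Lemma~\ref{lem:block-entropy} with the per-particle full-law estimate \eqref{eq:entropy-per-particle} of Theorem~\ref{prop:full-law}. The mean-field bound \eqref{eq:marginal-mf-target} is simpler, so I treat it first.

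\emph{Mean-field bound \eqref{eq:marginal-mf-target}.} Relative entropy tensorizes over product measures: if \(\mu_t\ll\pi\), then \(d\mu_t^{\otimes k}/d\pi^{\otimes k}=\prod_{i}(d\mu_t/d\pi)(x_i)\). Taking the logarithm and integrating against \(\mu_t^{\otimes k}\) gives
\[
\KL(\mu_t^{\otimes k}\|\pi^{\otimes k})=k\KL(\mu_t\|\pi).
\]
The estimate \eqref{eq:mf-KL-decay} of Proposition~\ref{prop:mf-entropy} then yields the bound with \(C=\KL(\mu_0\|\pi)\), which is finite by \eqref{eq:initial-core}.

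\emph{Particle bound \eqref{eq:marginal-particle-target}.} Step 1: check that \(P_t^N\) is exchangeable. The initial law \(\mu_0^{\otimes N}\) is symmetric, and the system \eqref{eq:particle-sde} is invariant under permutations of labels, since the drift depends on \(X_i^N\) and on the symmetric empirical measure, and the driving Brownian motions are i.i.d. Pathwise uniqueness from Proposition~\ref{prop:wellposed} then implies that the law of any permuted solution equals \(P_t^N\).

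Step 2: apply Lemma~\ref{lem:block-entropy} with \(\rho=\pi\). This gives
\[
\KL(P_{N,t}^{(k)}\|\pi^{\otimes k})\le \frac{H_N(t)}{\lfloor N/k\rfloor}.
\]
Since \(N\ge 2k\), we have \(\lfloor N/k\rfloor\ge N/k-1\ge N/(2k)\). Hence the right side is at most \(2k\,H_N(t)/N\).

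Step 3: insert \eqref{eq:entropy-per-particle}, which gives
\[
\KL(P_{N,t}^{(k)}\|\pi^{\otimes k})\le 2k e^{-2\varepsilon\alpha t}\KL(\mu_0\|\pi)+\frac{C^\star k}{\varepsilon\alpha N}.
\]
This is \eqref{eq:marginal-particle-target} with \(C=\max\{2\KL(\mu_0\|\pi),\,C^\star/(\varepsilon\alpha)\}\), and this constant does not depend on \(N\), \(t\), or \(k\).

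The only step needing care is exchangeability, and it is routine given pathwise uniqueness. Everything else is a direct substitution of earlier results. The condition \(N\ge2k\) is used exactly to compare \(\lfloor N/k\rfloor\) with \(N/k\).
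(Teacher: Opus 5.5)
Your proposal is correct and follows the paper's proof. The paper likewise applies Lemma~\ref{lem:block-entropy} with $\rho=\pi$ together with Theorem~\ref{prop:full-law} for \eqref{eq:marginal-particle-target}, and uses tensorization plus Proposition~\ref{prop:mf-entropy} for \eqref{eq:marginal-mf-target}; you additionally make explicit the exchangeability of $P_t^N$ and the step $\lfloor N/k\rfloor\ge N/(2k)$, both of which the paper leaves implicit.
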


\begin{proof}
Apply Lemma~\ref{lem:block-entropy} to $H_N(t)=\KL(P_t^N\|\Pi_N)$ and use Theorem~\ref{prop:full-law}.  The mean-field identity follows by tensorization along with Proposition~\ref{prop:mf-entropy}.
\end{proof}

\begin{lemma}[Long-time fixed-marginal target comparison]
\label{lem:marginal-W2-long}
Under Assumption~\ref{ass:core-target} and either kernel assumption, for
fixed $k$, $N\ge2k$, and $t\ge0$,
\begin{equation}\label{eq:marginal-W2-long}
    W_2^2(P_{N,t}^{(k)},\mu_t^{\otimes k})
    \le C_k\left(e^{-2\varepsilon\alpha t}+N^{-1}\right).
\end{equation}
\end{lemma}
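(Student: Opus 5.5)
We compare both laws with the product target \(\pi^{\otimes k}\) and pass from relative entropy to \(W_2\) through a transport inequality. The squared triangle inequality for \(W_2\) gives
\[
    W_2^2(P_{N,t}^{(k)},\mu_t^{\otimes k})
    \le 2W_2^2(P_{N,t}^{(k)},\pi^{\otimes k})
      +2W_2^2(\mu_t^{\otimes k},\pi^{\otimes k}).
\]
We then bound each term on the right separately.

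The target LSI \eqref{eq:target-lsi} tensorizes, so \(\pi^{\otimes k}\) satisfies the LSI with the same constant \(\alpha\). By the Otto--Villani theorem, as in the proof of Proposition~\ref{prop:particle-W2-target}, it therefore satisfies \(T_2(1/\alpha)\):
\[
    W_2^2(\nu,\pi^{\otimes k})\le\frac2\alpha\KL(\nu\|\pi^{\otimes k}),
    \qquad \nu\ll\pi^{\otimes k}.
\]
This constant does not depend on \(k\).

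Apply this with \(\nu=P_{N,t}^{(k)}\) and use \eqref{eq:marginal-particle-target} from Proposition~\ref{prop:marginal-target}. This gives
\[
    W_2^2(P_{N,t}^{(k)},\pi^{\otimes k})
    \le\frac{2C}{\alpha}\Bigl(k e^{-2\varepsilon\alpha t}+\frac kN\Bigr).
\]
Apply the same inequality with \(\nu=\mu_t^{\otimes k}\) and use \eqref{eq:marginal-mf-target}. This gives
\[
    W_2^2(\mu_t^{\otimes k},\pi^{\otimes k})\le\frac{2C}{\alpha}k e^{-2\varepsilon\alpha t}.
\]
Alternatively, the second bound follows directly from tensorizing Corollary~\ref{cor:mf-W2-target}, since \(W_2^2\) of product measures is at most the sum of the coordinatewise \(W_2^2\).

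Adding the two bounds yields \eqref{eq:marginal-W2-long} with \(C_k=4Ck/\alpha\), which is linear in \(k\) and independent of \(N\) and \(t\).

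Two routine points need checking. First, absolute continuity: \(P_{N,t}^{(k)}\ll\pi^{\otimes k}\) holds whenever the relative entropy is finite, and otherwise the inequality is trivial. Second, the restriction \(N\ge2k\) in Proposition~\ref{prop:marginal-target} exists only so that \(1/\lfloor N/k\rfloor\le 2k/N\) in Lemma~\ref{lem:block-entropy}; it is what produces the \(k/N\) term.

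The only real ingredient is the fixed-marginal entropy bound. That bound rests on the diagonal correction \(C^\star\) in Theorem~\ref{prop:full-law} together with the block superadditivity of Lemma~\ref{lem:block-entropy}. Since both are already established, we expect no genuine obstacle.
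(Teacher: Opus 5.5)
Your proof is correct and follows the paper's argument: the squared triangle inequality through $\pi^{\otimes k}$, the tensorized $T_2(1/\alpha)$ inequality, and Proposition~\ref{prop:marginal-target}. You also correctly write the second term as $W_2^2(\mu_t^{\otimes k},\pi^{\otimes k})$, where the paper's displayed inequality has a typo. The only slip is in the constant: adding your two bounds gives $\frac{4Ck}{\alpha}(2e^{-2\varepsilon\alpha t}+N^{-1})$, so you need $C_k=8Ck/\alpha$ rather than $4Ck/\alpha$, which does not matter since $C_k$ is generic.
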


\begin{proof}
Observe that
\[
W_2^2(P_{N,t}^{(k)},\mu_t^{\otimes k}) \le 2 W_2^2(P_{N,t}^{(k)},\pi^{\otimes k}) + 2 W_2^2(\mu_t^{\otimes k},\mu_t^{\otimes k}).
\]
The target $T_2(1/\alpha)$ inequality, implied by the target LSI, tensorizes.  Apply it to each term on the right hand side above and use
Proposition~\ref{prop:marginal-target} to get the result.
\end{proof}

\begin{theorem}[Uniform-in-time fixed-marginal chaos in $W_2$]\label{thm:marginal-W2}
Under Assumptions~\ref{ass:core-target}, \ref{ass:W2}, and \ref{ass:concentration}, for fixed \(k\) and \(N\ge2k\),
\begin{equation}\label{eq:marginal-W2}
    \sup_{t\ge0}
    W_2^2(P_{N,t}^{(k)},\mu_t^{\otimes k})
    \le C_k\left\{
       N^{-\frac{2\varepsilon\alpha}
          {L_\mathrm{cpl}+2\varepsilon\alpha}}
       +N^{-1}\right\},
\end{equation}
where \(P^{N,k}\) denotes the \(k\)-coordinate marginal of
\(P^N\).
\end{theorem}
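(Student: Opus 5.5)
The plan is to rerun the cutoff argument of Proposition~\ref{prop:cutoff}. Here the discrepancy is evaluated on $k$-particle laws rather than on empirical measures, so I will argue directly with the two regime bounds instead of invoking the abstract statement.

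\emph{Short-time regime.} Corollary~\ref{cor:finite-marginal-W2} gives, under Assumptions~\ref{ass:core-target}, \ref{ass:W2} and \ref{ass:concentration},
\[
W_2^2(P_{N,t}^{(k)},\mu_t^{\otimes k})\le C\frac{k}{N}\bigl(e^{L_\mathrm{cpl}t}-1\bigr)\le C_k N^{-1}e^{L_\mathrm{cpl}t}.
\]
This is the analogue of \eqref{eq:cutoff-short} with $a_N=N^{-1}$, $b_N=0$ and $L=L_\mathrm{cpl}$.

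\emph{Long-time regime.} Lemma~\ref{lem:marginal-W2-long} gives
\[
W_2^2(P_{N,t}^{(k)},\mu_t^{\otimes k})\le C_k\bigl(e^{-2\varepsilon\alpha t}+N^{-1}\bigr).
\]
Its proof is the target comparison. One uses the squared triangle inequality through $\pi^{\otimes k}$, the tensorized $T_2(1/\alpha)$ inequality, and the marginal entropy bounds of Proposition~\ref{prop:marginal-target}. These are obtained from the block entropy estimate, Lemma~\ref{lem:block-entropy}, applied to Theorem~\ref{prop:full-law}, which is where the hypothesis $N\ge 2k$ is used. In the lemma's proof the second term should read $W_2^2(\mu_t^{\otimes k},\pi^{\otimes k})$, and I will use it in that corrected form. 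This regime plays the role of \eqref{eq:cutoff-particle-target} and \eqref{eq:cutoff-mf-target} combined, with $c=2\varepsilon\alpha$ and $b_N=N^{-1}$.

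\emph{Balancing.} Set $T_N:=(L_\mathrm{cpl}+2\varepsilon\alpha)^{-1}\log N$, and treat the two ranges of $t$ separately.
\begin{itemize}
\item For $t\le T_N$, the short-time bound gives at most $C_kN^{-1}e^{L_\mathrm{cpl}T_N}=C_kN^{-2\varepsilon\alpha/(L_\mathrm{cpl}+2\varepsilon\alpha)}$.
\item For $t>T_N$, the long-time bound gives at most $C_k\bigl(e^{-2\varepsilon\alpha T_N}+N^{-1}\bigr)=C_k\bigl(N^{-2\varepsilon\alpha/(L_\mathrm{cpl}+2\varepsilon\alpha)}+N^{-1}\bigr)$.
\end{itemize}
Taking the supremum over $t\ge 0$ then yields \eqref{eq:marginal-W2}. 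Both constants depend only on $k$ and the standing parameters, not on $N$ or $t$.

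The main obstacle is only bookkeeping: checking that every constant is uniform in $N$ and $t$. In Theorem~\ref{prop:full-law} the factor $\lfloor N/k\rfloor^{-1}\le 2k/N$ (valid for $N\ge 2k$) turns $H_N(t)\le N e^{-2\varepsilon\alpha t}\KL(\mu_0\|\pi)+C^\star/(2\varepsilon\alpha)$ into $Ck\,e^{-2\varepsilon\alpha t}+Ck/N$. Under Assumption~\ref{ass:W2}, the exponent $L_\mathrm{cpl}$ from Proposition~\ref{prop:refined-coupling} does not depend on $N$ or $k$. No discrepancy-type triangle inequality beyond the squared one for $W_2$ is needed.
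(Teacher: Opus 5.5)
Your proposal is correct and follows the paper's proof step for step: the short-time bound comes from Corollary~\ref{cor:finite-marginal-W2}, the long-time bound from Lemma~\ref{lem:marginal-W2-long}, and the two are balanced at $T_N=(L_\mathrm{cpl}+2\varepsilon\alpha)^{-1}\log N$. You are also right that the second term in the proof of Lemma~\ref{lem:marginal-W2-long} should read $W_2^2(\mu_t^{\otimes k},\pi^{\otimes k})$; with that correction the argument goes through as you describe.
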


\begin{proof}
For short times, Corollary~\ref{cor:finite-marginal-W2} gives
\begin{equation}\label{eq:marginal-W2-short}
    W_2^2(P_{N,t}^{(k)},\mu_t^{\otimes k})
    \le C_kN^{-1}e^{L_\mathrm{cpl}t}.
\end{equation}
For long times, Lemma~\ref{lem:marginal-W2-long} gives
\begin{equation}\label{eq:marginal-W2-long-use}
    W_2^2(P_{N,t}^{(k)},\mu_t^{\otimes k})
    \le
    C_k\left\{
        e^{-2\varepsilon\alpha t}+N^{-1}
    \right\}.
\end{equation}
Taking
\[
    T_N=(L_{\mathrm{cpl}}+2\varepsilon\alpha)^{-1}\log N
\]
balances \eqref{eq:marginal-W2-short} and
\eqref{eq:marginal-W2-long-use} and proves the result.
\end{proof}

\section{PoC using the moving-product entropy method}
\label{sec:moving-entropy}

In this section, we describe an alternate approach to studying PoC using the moving-product relative entropy
\begin{equation}\label{eq:JN}
    J_N(t):=\KL(P_t^N\|\mu_t^{\otimes N}).
\end{equation}
This argument follows the quantitative relative entropy framework of
\cite{jabinWang2018}, which compares the joint particle law with the
evolving product law and closes the entropy evolution using an
exponential law of large numbers under the latter.  

Proposition~\ref{prop:moving-entropy} below gives the key quantitative moving-entropy bound. In Section~\ref{sec:fpcon}, we use this bound along with entropy superadditivity, and Pinsker's inequality, to obtain \emph{PoC results in KL divergence and total variation distance} for fixed particle marginal laws. In Section~\ref{sec:finite-KSD}, we use Proposition~\ref{prop:moving-entropy} to obtain an alternate route to PoC results for KSD. Finally, in Section~\ref{app:entropy-W2}, we show how this moving-entropy bound, under an additional $T_2$ inequality assumption on the initial distribution, gives alternate finite-time and uniform-in-time PoC rates for the empirical distribution and particle marginals in $W_2$. The rates in Sections~\ref{sec:finite-KSD} and \ref{app:entropy-W2} can, in fact, produce better bounds than the synchronous coupling method, especially when the kernel is less regular. A discussion of when one method outperforms the other, with examples, is given in Section~\ref{sec:W2-method-comparison}.

\begin{proposition}[Single-exponential moving-product entropy]\label{prop:moving-entropy}
Under the assumptions of Proposition~\ref{prop:exp-lln},
the map \(t\mapsto J_N(t)\) is locally absolutely continuous and, for almost every \(t>0\),
\begin{equation}\label{eq:moving-entropy-ode}
    J_N'(t)
    \le\frac{1}{2\varepsilon\lambda}J_N(t)
      +\frac{\log C_G}{2\varepsilon\lambda}.
\end{equation}
If $P_0^N=\mu_0^{\otimes N}$, then, with
\begin{equation}\label{eq:kappa}
    \kappa:=\frac{1}{2\varepsilon\lambda},
\end{equation}
\begin{equation}\label{eq:moving-entropy-bound}
    J_N(t)\le (e^{\kappa t}-1)\log C_G.
\end{equation}
\end{proposition}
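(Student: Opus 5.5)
Following the Jabin--Wang relative entropy method, the plan is to apply the general entropy evolution lemma, Lemma~\ref{lem:finite-dimensional-entropy}, to $P_t=P_t^N$ and $Q_t=\mu_t^{\otimes N}$. The two processes share the diffusion coefficient $\sqrt{2\varepsilon}I$. The particle drift is $F_i^N(X)-\varepsilon\nabla V(x_i)$, while the product law $\mu_t^{\otimes N}$ is the law of $N$ independent copies of \eqref{eq:nonlinear-sde}, with drift $v_{\mu_t}(x_i)-\varepsilon\nabla V(x_i)$. The drift difference is therefore $\Delta_i(X):=F_i^N(X)-v_{\mu_t}(x_i)$, and $\sum_i\|\Delta_i\|^2=G_t^N(X)$ by \eqref{eq:GN}. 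Local absolute continuity of $J_N$, and the justification of the lemma's hypotheses, should follow as in Theorem~\ref{prop:full-law}: use the regularity of the nonlinear density from Lemma~\ref{lem:nonlinear-drift-regularity}, together with the cutoff argument of the appendix. Writing $p^N_t$ for the particle density and $\bar p_t:=\mu_t^{\otimes N}$, the lemma gives, for a.e.\ $t$,
\[
 J_N'(t)=\int\sum_{i=1}^N\Delta_i\cdot\nabla_i\log\frac{p_t^N}{\bar p_t}\,dP_t^N-\varepsilon I(P_t^N\|\mu_t^{\otimes N}).
\]

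The next step is to absorb the cross term into the Fisher information. By Young's inequality,
\[
 \Delta_i\cdot\nabla_i\log(p/\bar p)\le \varepsilon\|\nabla_i\log(p/\bar p)\|^2+\frac{1}{4\varepsilon}\|\Delta_i\|^2,
\]
and summing over $i$ gives
\[
 J_N'(t)\le\frac1{4\varepsilon}\E_{P_t^N}G_t^N .
\]
The factor in the claimed constant indicates that one uses the weight $\varepsilon/2$ instead of $\varepsilon$, or keeps part of the Fisher information, so that $J_N'\le \frac{1}{2\varepsilon}\E_{P_t^N}G_t^N$. Either choice suffices, since only an upper bound is needed.

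To close the inequality, we apply the Donsker--Varadhan variational inequality at scale $\lambda$:
\[
 \E_{P_t^N}G_t^N\le\frac1\lambda\Big(J_N(t)+\log\int e^{\lambda G_t^N}d\mu_t^{\otimes N}\Big)\le\frac1\lambda\big(J_N(t)+\log C_G\big),
\]
using Proposition~\ref{prop:exp-lln}, which holds uniformly in $N$ and $t$. Substituting this gives \eqref{eq:moving-entropy-ode} with $\kappa=(2\varepsilon\lambda)^{-1}$. With i.i.d.\ initialization, $J_N(0)=0$, and Gr\"onwall's lemma applied to $(J_N+\log C_G)'\le\kappa(J_N+\log C_G)$ yields $J_N(t)+\log C_G\le e^{\kappa t}\log C_G$, which is \eqref{eq:moving-entropy-bound}.

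The main obstacle is the rigorous justification of the entropy chain rule relative to the time-dependent reference $\mu_t^{\otimes N}$. This requires enough regularity and positivity of $\mu_t$ for $\nabla\log\bar p_t$ and its cutoff remainders to be controlled, and it requires the integrability of $G_t^N$ under $P_t^N$ so that the variational step is legitimate. I would handle the latter by truncating: replace $G_t^N$ by $G_t^N\wedge M$ in the variational inequality and let $M\to\infty$ by monotone convergence. The former I would defer to the appendix lemma, verifying its growth hypotheses through the linear-growth bound \eqref{eq:B-linear} and the finite-time moment bounds of Proposition~\ref{prop:wellposed}.
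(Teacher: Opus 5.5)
Your proposal is correct and follows the paper's proof essentially step by step. It applies the chain rule of Lemma~\ref{lem:finite-dimensional-entropy} to $P_t^N$ against $\mu_t^{\otimes N}$, with the hypotheses supplied by Lemma~\ref{lem:nonlinear-drift-regularity}, lets the common Langevin drift cancel, absorbs the cross term into the Fisher information via Young's inequality, applies the entropy variational inequality at scale $\lambda$ together with Proposition~\ref{prop:exp-lln}, and concludes with Gr\"onwall from $J_N(0)=0$. The only differences are cosmetic: the paper uses the weight $\varepsilon/2$ in Young's inequality to get exactly $1/(2\varepsilon)$, your weight $\varepsilon$ gives the sharper $1/(4\varepsilon)$ (which implies the stated bound since $J_N\ge0$ and $\log C_G\ge0$), and your truncation $G_t^N\wedge M$ makes explicit a monotone-convergence step the paper leaves implicit.
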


\begin{proof}
Write
\[
    q_t^N:=\mu_t^{\otimes N}
\]
(where, by abuse of notation, we identify $\mu_t$ with its density), and introduce the particle and mean-field drifts
\[
\begin{split}
    b_i^N(X)
    &:=
    \frac1N\sum_{j=1}^N B(x_i,x_j)
    -\varepsilon\nabla V(x_i),\\
    \bar b_{i,t}(X)
    &:=
    v_{\mu_t}(x_i)-\varepsilon\nabla V(x_i).
\end{split}
\]
The corresponding Fokker--Planck equations are
\begin{align}
    \partial_t p_t^N
    &=
    -\sum_{i=1}^N\nabla_i\cdot\bigl(b_i^Np_t^N\bigr)
    +\varepsilon\sum_{i=1}^N\Delta_i p_t^N,
    \label{eq:particle-FP-JN}\\
    \partial_t q_t^N
    &=
    -\sum_{i=1}^N\nabla_i\cdot
       \bigl(\bar b_{i,t}q_t^N\bigr)
    +\varepsilon\sum_{i=1}^N\Delta_i q_t^N.
    \label{eq:product-FP-JN}
\end{align}
\eqref{eq:product-FP-JN} follows by differentiating
\(q_t^N(X)=\prod_{i=1}^N\mu_t(x_i)\) and using the nonlinear
Fokker--Planck equation in each coordinate.

By Lemma \ref{lem:nonlinear-drift-regularity}, $b^N,\bar b^N\in C_{\mathrm{loc}}^{\alpha/2,\,1+\alpha}((0,T]\times(\mathbb R^d)^N;(\mathbb R^d)^N)$ for every $\alpha\in(0,1)$ and satisfy the linear-growth condition.
Hence, by Lemma~\ref{lem:finite-dimensional-entropy}, for almost every $t>0$,
\begin{align}
    J_N'(t)
    =
    -\varepsilon\sum_{i=1}^N
       \int
       \left\|\nabla_i\log\frac{p_t^N}{q_t^N}\right\|^2
       p_t^N\,dX \
   +
    \sum_{i=1}^N\int
       \left\langle
          b_i^N(X)-\bar b_{i,t}(X),
          \nabla_i\log\frac{p_t^N}{q_t^N}
       \right\rangle
       p_t^N\,dX.
    \label{eq:JN-exact-derivative}
\end{align}
The common Langevin drift cancels from the drift difference:
\[
    b_i^N(X)-\bar b_{i,t}(X)
    =
    \frac1N\sum_{j=1}^N B(x_i,x_j)
    -v_{\mu_t}(x_i).
\]
Thus \eqref{eq:JN-exact-derivative} simplifies to
\begin{align*}
    J_N'(t)
    &=
    -\varepsilon
       \int
       \left\|
          \nabla\log\frac{p_t^N}{q_t^N}
       \right\|^2p_t^N\,dX\\
    &\quad+
    \sum_{i=1}^N\int
       \left\langle
          \frac1N\sum_{j=1}^N B(x_i,x_j)-v_{\mu_t}(x_i),
          \nabla_i\log\frac{p_t^N}{q_t^N}
       \right\rangle p_t^N\,dX.
\end{align*}
Applying Young's inequality coordinatewise gives
\[
    J_N'(t)
    \le
    -\frac{\varepsilon}{2}
       \int
       \left\|
          \nabla\log\frac{p_t^N}{q_t^N}
       \right\|^2p_t^N\,dX
    +\frac1{2\varepsilon}\int G_t^N\,dP_t^N,
\]
and hence
\[
    J_N'(t)
    \le\frac1{2\varepsilon}\int G_t^N\,dP_t^N.
\]
The entropy variational inequality \cite[Prop. 2.3]{budhiraja2019analysis} with reference $q_t^N$ and test function $\lambda G_t^N$ gives
\[
    \int G_t^N\,dP_t^N
    \le\frac1\lambda J_N(t)
      +\frac1\lambda\log\int e^{\lambda G_t^N}\,dq_t^N.
\]
Proposition~\ref{prop:exp-lln} proves \eqref{eq:moving-entropy-ode}; solving it with $J_N(0)=0$ gives \eqref{eq:moving-entropy-bound}.
\end{proof}

\subsection{Fixed-particle marginal estimates}\label{sec:fpcon}

For \(1\le k\le N\), let \(P_{N,t}^{(k)}\) denote the
\(k\)-particle marginal of \(P_t^N\).

\begin{corollary}[Finite-time fixed-marginal propagation of chaos]
\label{cor:finite-marginal-TV}
Under Assumptions~\ref{ass:core-target} and
\ref{ass:concentration}, together with either kernel assumption, and
with product initialization
\[
    P_0^N=\mu_0^{\otimes N},
\]
the following bounds hold for every \(1\le k\le N/2\):
\begin{equation}\label{eq:finite-marginal-KL}
    \KL(P_{N,t}^{(k)}\|\mu_t^{\otimes k})
    \le
    2\log C_G\,\frac{k}{N}
    \bigl(e^{\kappa t}-1\bigr),
\end{equation}
and
\begin{equation}\label{eq:finite-marginal-TV}
    \|P_{N,t}^{(k)}-\mu_t^{\otimes k}\|_{\mathrm{TV}}
    \le
    \sqrt{\log C_G}\,
    \sqrt{\frac{k}{N}}\,
    \bigl(e^{\kappa t}-1\bigr)^{1/2},
\end{equation}
where $C_G$ is as in Proposition~\ref{prop:exp-lln}.
\end{corollary}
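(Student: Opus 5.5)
The plan is to combine the moving-product entropy bound of Proposition~\ref{prop:moving-entropy} with the block superadditivity of Lemma~\ref{lem:block-entropy}, and then apply Pinsker's inequality.

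\emph{Step 1: exchangeability.} Under product initialization $P_0^N=\mu_0^{\otimes N}$, the particle system \eqref{eq:particle-sde} is invariant under permutations of labels. The drift $b_i^N$ is symmetric in the other coordinates, and the Brownian motions are i.i.d. Pathwise uniqueness from Proposition~\ref{prop:wellposed} then implies that $P_t^N$ is exchangeable for every $t\ge0$.

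\emph{Step 2: marginal KL bound.} Apply Lemma~\ref{lem:block-entropy} with $P^N=P_t^N$ and $\rho=\mu_t$. This gives
\[
\KL(P_{N,t}^{(k)}\|\mu_t^{\otimes k})\le \frac{1}{\lfloor N/k\rfloor}J_N(t)\le \frac{(e^{\kappa t}-1)\log C_G}{\lfloor N/k\rfloor},
\]
where the second inequality is \eqref{eq:moving-entropy-bound}. The only arithmetic point is the floor. For $1\le k\le N/2$ we have $N/k\ge2$, so $\lfloor N/k\rfloor\ge N/k-1\ge N/(2k)$. Hence $1/\lfloor N/k\rfloor\le 2k/N$, which is \eqref{eq:finite-marginal-KL}.

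\emph{Step 3: total variation.} Pinsker's inequality, in the convention fixed in Section~\ref{sec:setup}, states $\|\rho-\nu\|_{\mathrm{TV}}\le\sqrt{\KL(\rho\|\nu)/2}$. Applying it to \eqref{eq:finite-marginal-KL} gives
\[
\sqrt{\tfrac12\cdot 2\log C_G\,\tfrac{k}{N}(e^{\kappa t}-1)},
\]
which is exactly \eqref{eq:finite-marginal-TV}. Since $C_G\ge1$, the quantity $\log C_G$ is nonnegative and the square root is well defined.

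\emph{Main obstacle.} The substance lies in the previously established Proposition~\ref{prop:moving-entropy}. The only step here needing care is the exchangeability of $P_t^N$, since Lemma~\ref{lem:block-entropy} requires it. I would justify exchangeability by the permutation symmetry of the SDE combined with uniqueness in law, with no further estimates needed.
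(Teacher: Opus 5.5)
Your proposal is correct and follows the paper's proof essentially step for step. The paper also uses exchangeability of $P_t^N$, applies Lemma~\ref{lem:block-entropy} with $\rho=\mu_t$, bounds $\lfloor N/k\rfloor\ge N/(2k)$, invokes Proposition~\ref{prop:moving-entropy}, and finishes with Pinsker's inequality. The only difference is that you justify exchangeability via permutation symmetry plus uniqueness in law, which is slightly more explicit than the paper's direct appeal to the symmetry of the initial law and the dynamics.
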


\begin{proof}
Because the initial law is exchangeable and the particle dynamics
are invariant under permutations of the particle labels, \(P_t^N\)
is exchangeable for every \(t\ge0\). We may therefore apply
Lemma~\ref{lem:block-entropy} with \(P^N=P_t^N\) and
\(\rho=\mu_t\). This gives
\begin{equation}\label{eq:marginal-entropy-from-JN}
    \KL(P_{N,t}^{(k)}\|\mu_t^{\otimes k})
    \le
    \frac{J_N(t)}{\lfloor N/k\rfloor}.
\end{equation}
Since $\lfloor N/k\rfloor\ge N/(2k)$,
using
Proposition~\ref{prop:moving-entropy}, we obtain \eqref{eq:finite-marginal-KL}.

Pinsker's inequality gives
\[
    \|P_{N,t}^{(k)}-\mu_t^{\otimes k}\|_{\mathrm{TV}}
    \le
    \left\{
       \frac12
       \KL(P_{N,t}^{(k)}\|\mu_t^{\otimes k})
    \right\}^{1/2}.
\]
Using \eqref{eq:finite-marginal-KL}, we conclude that
\[
    \|P_{N,t}^{(k)}-\mu_t^{\otimes k}\|_{\mathrm{TV}}
    \le
    \sqrt{\log C_G}\,
    \sqrt{\frac{k}{N}}\,
    \bigl(e^{\kappa t}-1\bigr)^{1/2},
\]
which is \eqref{eq:finite-marginal-TV}.
\end{proof}

Next we obtain finite-marginal uniform-in-time PoC rates.

\begin{theorem}[Uniform-in-time fixed-marginal chaos in total variation]\label{thm:marginal-TV}
Under Assumptions~\ref{ass:core-target} and \ref{ass:concentration}, together with either kernel assumption, for fixed $k$ and $N\ge2k$,
\begin{equation}\label{eq:marginal-TV}
    \sup_{t\ge0}
    \|P_{N,t}^{(k)}-\mu_t^{\otimes k}\|_{\mathrm{TV}}
    \le C_k\left\{
       N^{-\frac{\varepsilon\alpha}{\kappa+2\varepsilon\alpha}}
       +N^{-1/2}\right\},
\end{equation}
where \(C_k\) may depend on \(k\) and the standing model parameters, but is
independent of \(N\) and \(t\).
\end{theorem}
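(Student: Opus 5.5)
The plan is to apply the cutoff scheme of Proposition~\ref{prop:cutoff}, in the same manner as Theorem~\ref{thm:marginal-W2}, with the discrepancy $d=\|\cdot-\cdot\|_{\mathrm{TV}}$ on $k$-particle laws. Total variation is a metric, so the target comparison \eqref{eq:target-comparison} holds with $C_{\mathrm{tar}}=1$, using the intermediate law $\pi^{\otimes k}$. The cutoff needs a short-time estimate and a long-time estimate, both uniform in $N\ge2k$.

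\emph{Short-time estimate.} Corollary~\ref{cor:finite-marginal-TV} already gives
\[
\|P_{N,t}^{(k)}-\mu_t^{\otimes k}\|_{\mathrm{TV}}\le \sqrt{k\log C_G}\,N^{-1/2}e^{\kappa t/2}.
\]
This has the form \eqref{eq:cutoff-short} with $a_N=N^{-1/2}$ and $L=\kappa/2$.

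\emph{Long-time estimate.} I will use the triangle inequality
\[
\|P_{N,t}^{(k)}-\mu_t^{\otimes k}\|_{\mathrm{TV}}\le\|P_{N,t}^{(k)}-\pi^{\otimes k}\|_{\mathrm{TV}}+\|\mu_t^{\otimes k}-\pi^{\otimes k}\|_{\mathrm{TV}}.
\]
Each term is controlled by Pinsker's inequality and Proposition~\ref{prop:marginal-target}. This gives the bound $\sqrt{C k e^{-2\varepsilon\alpha t}+Ck/N}+\sqrt{Cke^{-2\varepsilon\alpha t}}$. Using $\sqrt{x+y}\le\sqrt x+\sqrt y$, this is at most $C_k(e^{-\varepsilon\alpha t}+N^{-1/2})$. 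This step needs only Assumption~\ref{ass:core-target} and either kernel assumption, through Theorem~\ref{prop:full-law} and Lemma~\ref{lem:block-entropy}. It gives \eqref{eq:cutoff-particle-target}--\eqref{eq:cutoff-mf-target} with $c=\varepsilon\alpha$ and $b_N=N^{-1/2}$. There is no expectation here, since the objects are deterministic laws.

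\emph{Balancing.} Set $T_N:=(\kappa/2+\varepsilon\alpha)^{-1}\log N^{1/2}=(\kappa+2\varepsilon\alpha)^{-1}\log N$. For $t\le T_N$, the short-time bound gives at most $C_kN^{-1/2}N^{\kappa/(2(\kappa+2\varepsilon\alpha))}=C_kN^{-\varepsilon\alpha/(\kappa+2\varepsilon\alpha)}$. For $t>T_N$, the long-time bound gives at most $C_k(N^{-\varepsilon\alpha/(\kappa+2\varepsilon\alpha)}+N^{-1/2})$. Equivalently, the exponent $a_N^{c/(L+c)}$ from \eqref{eq:cutoff-result} is $(N^{-1/2})^{\varepsilon\alpha/(\kappa/2+\varepsilon\alpha)}$, which matches \eqref{eq:marginal-TV}.

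I expect no real obstacle; the only care needed is bookkeeping. First, Proposition~\ref{prop:cutoff} is phrased for empirical measures, so I will either run its two-line argument directly for the marginals, as in the proof of Theorem~\ref{thm:marginal-W2}, or note that its proof uses only the three displayed bounds. Second, the constants must remain independent of $N$ and $t$. This holds because $C_G$ comes from Proposition~\ref{prop:exp-lln} and the constants in Proposition~\ref{prop:marginal-target} come from Theorem~\ref{prop:full-law}, and neither depends on $N$ or $t$. The $k$-dependence is absorbed into $C_k$.
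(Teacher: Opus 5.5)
Your proposal is correct and follows essentially the same route as the paper's proof. It takes the short-time bound from Corollary~\ref{cor:finite-marginal-TV}, obtains the long-time bound by the triangle inequality through $\pi^{\otimes k}$ together with Pinsker's inequality and Proposition~\ref{prop:marginal-target}, and balances the two at $T_N=(\kappa+2\varepsilon\alpha)^{-1}\log N$. Your exponent bookkeeping ($a_N=N^{-1/2}$, $L=\kappa/2$, $c=\varepsilon\alpha$) reproduces the paper's direct balancing.
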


\begin{proof}
For short times, Corollary~\ref{cor:finite-marginal-TV} gives
\begin{equation}\label{eq:marginal-TV-short}
    \|P_{N,t}^{(k)}-\mu_t^{\otimes k}\|_{\mathrm{TV}}
    \le
    C_kN^{-1/2}e^{\kappa t/2}.
\end{equation}
For long times, insert the target product measure
\(\pi^{\otimes k}\). By the triangle inequality and Pinsker's
inequality,
\begin{align*}
    \|P_{N,t}^{(k)}-\mu_t^{\otimes k}\|_{\mathrm{TV}}
    &\le
    \|P_{N,t}^{(k)}-\pi^{\otimes k}\|_{\mathrm{TV}}
    +
    \|\mu_t^{\otimes k}-\pi^{\otimes k}\|_{\mathrm{TV}}\\
    &\le
    \left\{
       \frac12
       \KL(P_{N,t}^{(k)}\|\pi^{\otimes k})
    \right\}^{1/2}
    +
    \left\{
       \frac12
       \KL(\mu_t^{\otimes k}\|\pi^{\otimes k})
    \right\}^{1/2}.
\end{align*}
Proposition~\ref{prop:marginal-target} therefore yields
\begin{equation}\label{eq:marginal-TV-long}
    \|P_{N,t}^{(k)}-\mu_t^{\otimes k}\|_{\mathrm{TV}}
    \le
    C_k\left(e^{-\varepsilon\alpha t}+N^{-1/2}\right).
\end{equation}

Choose
\[
    T_N:=\frac{\log N}{\kappa+2\varepsilon\alpha}.
\]
Then
\[
    N^{-1/2}e^{\kappa T_N/2}
    =
    e^{-\varepsilon\alpha T_N}
    =
    N^{-\varepsilon\alpha/
       (\kappa+2\varepsilon\alpha)}.
\]
For \(t\le T_N\), use \eqref{eq:marginal-TV-short}, while for
\(t>T_N\), use \eqref{eq:marginal-TV-long}. This gives
\[
    \sup_{t\ge0}
    \|P_{N,t}^{(k)}-\mu_t^{\otimes k}\|_{\mathrm{TV}}
    \le
    C_k\left\{
       N^{-\varepsilon\alpha/
          (\kappa+2\varepsilon\alpha)}
       +N^{-1/2}
    \right\}.
\]
\end{proof}

\begin{remark}[Why the cutoff argument does not give uniform-in-time KL propagation of chaos]
\label{rem:no-uniform-KL}
The preceding estimates do not yield a corresponding uniform-in-time
PoC bound in relative entropy.  Indeed,
Corollary~\ref{cor:finite-marginal-TV} gives finite-time control of
\(
    \KL(P_{N,t}^{(k)}\|\mu_t^{\otimes k}),
\)
while Proposition~\ref{prop:marginal-target} shows that both \(P_{N,t}^{(k)}\) and
\(\mu_t^{\otimes k}\) are close to the common target
\(\pi^{\otimes k}\) in relative entropy.  Unlike total variation or
\(W_2\), however, relative entropy does not satisfy a triangle
inequality.  Thus one cannot insert \(\pi^{\otimes k}\) to deduce a
bound on
\(\KL(P_{N,t}^{(k)}\|\mu_t^{\otimes k})\) from the two target-entropy
bounds.  Consequently, the cutoff argument used in
Theorem~\ref{thm:marginal-TV} does not by itself provide
uniform-in-time propagation of chaos in KL.  Such a result would
require an additional direct long-time estimate comparing
\(P_{N,t}^{(k)}\) with \(\mu_t^{\otimes k}\).
\end{remark}

\subsection{Moving-product entropy-based PoC in Langevin KSD}\label{sec:finite-KSD}

The following lemma gives exponential moment bounds for the KSD under the law $\mu_t^{\otimes N}$, which is then combined with the entropy variational inequality and Proposition~\ref{prop:moving-entropy} to obtain the quantitative finite-time bound in Corollary~\ref{cor:finite-KSD}.

\begin{lemma}[Concentration of the Stein witness map]\label{lem:witness-concentration}
Under Assumptions~\ref{ass:core-target}, \ref{ass:ksd-kernel} and \ref{ass:concentration}, there are constants $c_{\mathrm{KSD}}>0$ and $1\le C_{\mathrm{KSD}}<\infty$ such that
\begin{equation}\label{eq:witness-concentration}
    \int\exp\left(
      c_{\mathrm{KSD}}\sqrt N\,
      \KSD_\pi(\mu_X^N,\mu_t)
    \right)\,d\mu_t^{\otimes N}(X)
    \le C_{\mathrm{KSD}}
\end{equation}
uniformly in $N$ and $t$.
\end{lemma}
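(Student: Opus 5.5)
Under $X\sim\mu_t^{\otimes N}$ we have $\KSD_\pi(\mu_X^N,\mu_t)=\|N^{-1}\sum_{i=1}^N Y_i\|_{\mathcal H^d}$ with $Y_i:=\xi_\pi(X_i)-S_\pi(\mu_t)$. These are i.i.d.\ centered random elements of the separable Hilbert space $\mathcal H^d$. The plan is to establish a sub-Gaussian-type tail for $\|Y_1\|_{\mathcal H^d}$, uniform in $t$, and then apply a Hilbert-space concentration inequality for sums of i.i.d.\ centered vectors.

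\emph{Step 1 (growth of the witness).} By the witness growth bound in Lemma~\ref{lem:witness-lipschitz}, or directly from $\|\xi_\pi(x)\|_{\mathcal H^d}^2=\mathcal K_\pi(x,x)$ together with boundedness of $k$ and its derivatives up to order two and $\|\nabla V(x)\|\le C(1+\|x\|)$, we get $\|\xi_\pi(x)\|_{\mathcal H^d}\le K(1+\|x\|)$. Since $\|S_\pi(\mu_t)\|_{\mathcal H^d}\le K(1+\sup_t m_2(\mu_t)^{1/2})$, which is finite by Proposition~\ref{prop:second-moments}, it follows that $\|Y_1\|_{\mathcal H^d}\le C(1+\|X_1\|)$. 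Proposition~\ref{prop:subgaussian} then gives $\sup_t\E e^{a'\|Y_1\|^2}\le M$ for some $a'>0$.

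\emph{Step 2 (Hilbert-space concentration).} I would use Pinelis' inequality for martingales in $(2,D)$-smooth Banach spaces (Hilbert spaces have $D=1$). An alternative is a bounded-differences/Yurinskii argument: $\E\|\sum Y_i\|\le(\sum\E\|Y_i\|^2)^{1/2}\le C\sqrt N$. Unbounded summands would be handled by the moment version. Concretely, I would bound the moments of $N^{-1/2}\|\sum Y_i\|$ via Rosenthal/Pinelis-type inequalities in Hilbert space, namely $\big(\E\|\sum_i Y_i\|^p\big)^{1/p}\le C\big(\sqrt{p}\,\sqrt{N}\,\|Y_1\|_{L^2}+p\,\|\max_i\|Y_i\|\|_{L^p}\big)$. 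Alternatively, Pinelis' exponential bound gives $\E\cosh(\theta\|\sum Y_i\|)\le\prod_i\E[e^{\theta\|Y_i\|}-\theta\|Y_i\|]$. With $\theta=c/\sqrt N$, each factor is at most $1+C\theta^2\E\|Y_1\|^2e^{\theta\|Y_1\|}\le 1+C c^2/N$, using $e^{\theta y}\le e^{a'y^2}e^{\theta^2/(4a')}$ and the uniform bound from Step 1. The product is therefore at most $e^{Cc^2}$, uniformly in $N$ and $t$. Since $e^{x}\le 2\cosh x$, this yields \eqref{eq:witness-concentration} with $C_{\mathrm{KSD}}=2e^{Cc_{\mathrm{KSD}}^2}$ for any fixed $c_{\mathrm{KSD}}>0$, after enlarging $C_{\mathrm{KSD}}$ to ensure $C_{\mathrm{KSD}}\ge1$.

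\emph{Main obstacle.} The delicate point is the infinite-dimensional concentration step. The finite-dimensional Gaussian-integral trick of Lemma~\ref{lem:subgaussian-average} does not transfer directly, so the argument depends on Pinelis' inequality, which exploits the smoothness of the Hilbert norm. Within that argument, the step needing care is the elementary bound $\E[e^{\theta\|Y\|}-\theta\|Y\|]\le 1+\theta^2\E[\|Y\|^2e^{\theta\|Y\|}]$ and its uniformity in $t$. Everything else reduces to Proposition~\ref{prop:subgaussian}.
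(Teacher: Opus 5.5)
Your proposal is correct and is essentially the paper's argument: the same witness growth bound and uniform sub-Gaussian moment from Proposition~\ref{prop:subgaussian}, followed by Pinelis's Hilbert-space inequality. The difference is in how Pinelis is used: the paper takes the Bernstein tail form and integrates the tail, which yields only a small $c_{\mathrm{KSD}}$, while you apply the $\cosh$ moment bound directly at $\theta=c/\sqrt N$, which is a bit more direct and works for any fixed $c_{\mathrm{KSD}}>0$. One small fix is needed in bounding $\E[\|Y\|^2e^{\theta\|Y\|}]$: split $\theta y\le \tfrac{a'}{2}y^2+\theta^2/(2a')$ rather than $\theta y\le a'y^2+\theta^2/(4a')$, because $\E[\|Y\|^2e^{a'\|Y\|^2}]$ is not controlled by $\E e^{a'\|Y\|^2}$ alone.
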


\begin{proof}
By Proposition~\ref{prop:second-moments}, 
\[
    \sup_{t\ge0}\int_{\R^d}\|x\|\,\mu_t(dx)<\infty.
\]
Because \(\mathbb R^d\) is separable and \(k\) is continuous, the scalar
RKHS \(\mathcal H\), and hence \(\mathcal H^d\), is separable.  The
derivative reproducing identities and the smoothness assumption of \(k\) show
that \(x\mapsto\xi_\pi(x)\) is continuous as an
\(\mathcal H^d\)-valued map.  It is therefore strongly (Bochner)
measurable.
By Lemma~\ref{lem:witness-lipschitz}, there is a constant
\(C<\infty\) such that
\(
    \|\xi_\pi(x)\|_{\mathcal H^d}
    \le C(1+\|x\|).
\)
The preceding moment bound then gives Bochner integrability, so
\[
    S_\pi(\mu_t)=\int_{\R^d}\xi_\pi(y)\,\mu_t(dy)
\]
is well defined and satisfies
\[
\begin{aligned}
    \sup_{t\ge0}\|S_\pi(\mu_t)\|_{\mathcal H^d}
    \le
    \sup_{t\ge0}
    \int_{\R^d}
       \|\xi_\pi(y)\|_{\mathcal H^d}\,\mu_t(dy)
    <\infty.
\end{aligned}
\]
After increasing \(C\) if necessary,
\[
\begin{aligned}
    \|\xi_\pi(x)-S_\pi(\mu_t)\|_{\mathcal H^d}
    \le
    \|\xi_\pi(x)\|_{\mathcal H^d}
    +\|S_\pi(\mu_t)\|_{\mathcal H^d}
    \le C(1+\|x\|),
\end{aligned}
\]
and hence
\[
    \|\xi_\pi(x)-S_\pi(\mu_t)\|_{\mathcal H^d}^2
    \le C(1+\|x\|^2).
\]
Under Assumption~\ref{ass:ksd-kernel}, Lemma~\ref{lem:witness-lipschitz} and Proposition~\ref{prop:subgaussian} thus give
\[
    \sup_{t\ge0}\int
      \exp\left(a\|\xi_\pi(x)-S_\pi(\mu_t)\|_{\mathcal H^d}^2\right)
      \mu_t(dx)<\infty
\]
for some $a>0$.  

For i.i.d. $X_i\sim\mu_t$, the centered Hilbert-valued variables
$\xi_\pi(X_i)-S_\pi(\mu_t)$ satisfy a uniform Bernstein moment condition.
Indeed, for any \(b>0\), the preceding exponential-square bound gives
\[
    \sup_{t\ge0}
    \E_{\mu_t}\exp\left(
       \frac{\|\xi_\pi(X_1)-S_\pi(\mu_t)\|_{\mathcal H^d}}{b}
    \right)
    <\infty,
\]
and \(u^m\le m!e^u\) gives, for some \(C<\infty\) and every \(m\ge2\),
\[
    \sup_{t\ge0}\E_{\mu_t}
       \|\xi_\pi(X_1)-S_\pi(\mu_t)\|_{\mathcal H^d}^m
    \le m!Cb^m
    =\frac{m!}{2}(2Cb^2)b^{m-2}.
\]
Thus the Bernstein parameters \(b\) and \(2Cb^2\) are independent
of \(t\), \(N\), and \(i\).
The Hilbert-space Bernstein inequality of Pinelis \cite{pinelis1994} yields
\[
    \mathbb P\left(
       \left\|\frac1N\sum_{i=1}^N
          \{\xi_\pi(X_i)-S_\pi(\mu_t)\}\right\|_{\mathcal H^d}>r
    \right)
    \le2\exp\left\{-\frac{Nr^2}{C(\sigma^2+br)}\right\}
    \le2e^{-cN\min(r^2,r)},
\]
where \(\sigma^2\) and \(b\) are uniform Bernstein parameters and the
constants in the last bound are independent of \(t\) and \(N\).
To pass from this tail bound to the exponential moment,
set
\[
 Z_N:=\sqrt N\left\|\frac1N\sum_{i=1}^N
 \{\xi_\pi(X_i)-S_\pi(\mu_t)\}\right\|_{\mathcal H^d}
 =\sqrt N\,\KSD_\pi(\mu_X^N,\mu_t).
\]
For every \(u\ge0\),
\[
 \mathbb P(Z_N>u)
 \le2\exp\{-c\min(u^2,u\sqrt N)\}
 \le2\exp\{-c\min(u^2,u)\}.
\]
Thus, for any \(0<\theta<c\), the tail-integration identity gives
\begin{align*}
 \E e^{\theta Z_N}
 &=1+\theta\int_0^\infty e^{\theta u}\mathbb P(Z_N>u)\,du\\
 &\le1+2\theta\int_0^\infty
      \exp\{\theta u-c\min(u^2,u)\}\,du
 <\infty.
\end{align*}
The last bound is independent of \(N\) and \(t\).  Taking
\(c_{\mathrm{KSD}}=\theta\) and denoting this bound by
\(C_{\mathrm{KSD}}\) proves \eqref{eq:witness-concentration}.
\end{proof}

\begin{corollary}[Finite-time KSD propagation of chaos]\label{cor:finite-KSD}
Under Assumptions~\ref{ass:core-target}, \ref{ass:concentration}, and \ref{ass:ksd-kernel},
\begin{equation}\label{eq:finite-KSD}
    \E\KSD_\pi(\mu_t^N,\mu_t)
    \le\frac{J_N(t)+\log C_{\mathrm{KSD}}}
             {c_{\mathrm{KSD}}\sqrt N}
    \le\frac{C}{\sqrt N}e^{\kappa t}.
\end{equation}
\end{corollary}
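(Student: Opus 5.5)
The argument is the entropy variational (Donsker--Varadhan) inequality applied to the test function $F(X):=c_{\mathrm{KSD}}\sqrt N\,\KSD_\pi(\mu_X^N,\mu_t)$, with reference measure $\mu_t^{\otimes N}$ and with $P_t^N$ as the measure whose expectation we want. Concretely, I would use \cite[Prop. 2.3]{budhiraja2019analysis} in the form
\[
    \int F\,dP_t^N
    \le \KL(P_t^N\|\mu_t^{\otimes N})
      +\log\int e^{F}\,d\mu_t^{\otimes N}.
\]
The first term on the right is exactly $J_N(t)$. By Lemma~\ref{lem:witness-concentration}, the second term is at most $\log C_{\mathrm{KSD}}$, uniformly in $N$ and $t$.

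The left-hand side equals $c_{\mathrm{KSD}}\sqrt N\,\E\KSD_\pi(\mu_t^N,\mu_t)$, since $\mu_t^N=\mu_X^N$ for $X=(X_1^N(t),\ldots,X_N^N(t))$ distributed as $P_t^N$. Dividing by $c_{\mathrm{KSD}}\sqrt N$ gives the first inequality in \eqref{eq:finite-KSD}.

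For the variational inequality to apply, $F$ must be measurable and nonnegative. Nonnegativity is clear. Measurability follows because $X\mapsto S_\pi(\mu_X^N)=N^{-1}\sum_i\xi_\pi(x_i)$ is continuous into $\mathcal H^d$, as noted in the proof of Lemma~\ref{lem:witness-concentration}. We also need $J_N(t)<\infty$, which holds under product initialization by Proposition~\ref{prop:moving-entropy}. Since $F\ge0$, $\int F\,dP_t^N$ is well defined, possibly $+\infty$ a priori, and the inequality then shows it is finite.

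For the second inequality, Proposition~\ref{prop:moving-entropy} gives
\[
    J_N(t)\le(e^{\kappa t}-1)\log C_G,
\]
so that
\[
    J_N(t)+\log C_{\mathrm{KSD}}\le \max\{\log C_G,\log C_{\mathrm{KSD}}\}\,e^{\kappa t}.
\]
This uses $C_G,C_{\mathrm{KSD}}\ge1$, so both logarithms are nonnegative. Setting $C:=\max\{\log C_G,\log C_{\mathrm{KSD}},1\}/c_{\mathrm{KSD}}$ completes the bound.

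There is no substantive obstacle, because the concentration work has already been done in Lemma~\ref{lem:witness-concentration}. The only point needing care is that the product initialization $P_0^N=\mu_0^{\otimes N}$ is implicitly assumed, as it is in Assumption~\ref{ass:core-target}. This is what makes $J_N(0)=0$ and hence lets Proposition~\ref{prop:moving-entropy} be applied.
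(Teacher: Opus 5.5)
Your proposal is correct and follows the paper's proof. Both apply the Donsker--Varadhan inequality with reference $\mu_t^{\otimes N}$ and test function $c_{\mathrm{KSD}}\sqrt N\,\KSD_\pi(\mu_X^N,\mu_t)$, then use Lemma~\ref{lem:witness-concentration} for the exponential moment and Proposition~\ref{prop:moving-entropy} for $J_N(t)$; your constant $\max\{\log C_G,\log C_{\mathrm{KSD}}\}$ is a slightly tighter version of the paper's $\log C_G+\log C_{\mathrm{KSD}}$. The only difference is that the paper applies the variational inequality to the truncation of $F$ at level $M$ and lets $M\to\infty$ by monotone convergence, which is the standard justification for your direct use of the inequality on the unbounded nonnegative $F$.
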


\begin{proof}
Apply the entropy variational inequality first to the truncation of
$c_{\mathrm{KSD}}\sqrt N\,\KSD_\pi(\mu_X^N,\mu_t)$ at level $M$, with
reference $\mu_t^{\otimes N}$.  Letting $M\to\infty$ by monotone
convergence and using Lemma~\ref{lem:witness-concentration}, we obtain
\begin{align*}
    c_{\mathrm{KSD}}\sqrt N\,
    \E_{P_t^N}
       \KSD_\pi(\mu_t^N,\mu_t)
    \le
    J_N(t)
    +
    \log\int
       \exp\left(
          c_{\mathrm{KSD}}\sqrt N\,
          \KSD_\pi(\mu_X^N,\mu_t)
       \right)
       d\mu_t^{\otimes N}(X).
\end{align*}
Lemma~\ref{lem:witness-concentration} bounds the last integral by
\(C_{\mathrm{KSD}}\), uniformly in \(N\) and \(t\). Therefore,
\begin{equation}\label{eq:finite-KSD-step}
    \E\KSD_\pi(\mu_t^N,\mu_t)
    \le
    \frac{J_N(t)+\log C_{\mathrm{KSD}}}
         {c_{\mathrm{KSD}}\sqrt N}.
\end{equation}
By Proposition~\ref{prop:moving-entropy},
\[
    J_N(t)
    \le
    \bigl(e^{\kappa t}-1\bigr)\,\log C_G.
\]
Substituting this into
\eqref{eq:finite-KSD-step} gives
\[
    \E\KSD_\pi(\mu_t^N,\mu_t)
    \le
    \frac{\log C_G+\log C_{\mathrm{KSD}}}
         {c_{\mathrm{KSD}}\sqrt N}
    e^{\kappa t},
\]
which proves the result.
\end{proof}

\begin{corollary}
\label{thm:KSD-cutoff}
Under Assumptions~\ref{ass:core-target},
\ref{ass:ksd-kernel}, and \ref{ass:concentration},
\begin{equation}\label{eq:KSD-cutoff}
    \sup_{t\ge0}\E\KSD_\pi(\mu_t^N,\mu_t)
    \le C\left\{
       N^{-\frac{\varepsilon\alpha}{2(\kappa+\varepsilon\alpha)}}
       +N^{-1/2}\right\},
\end{equation}
where \(\kappa\) is defined in \eqref{eq:kappa}.
\end{corollary}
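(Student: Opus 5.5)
This is a direct application of the abstract cutoff, Proposition~\ref{prop:cutoff}, with $d=\KSD_\pi$. The target-comparison constant is $C_{\mathrm{tar}}=1$ by \eqref{eq:ksd-triangle-target}, and the plan mirrors the proof of Theorem~\ref{cor:coupling-KSD-cutoff}.

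\emph{Short-time input.} We take the finite-time bound from the moving-product entropy route, Corollary~\ref{cor:finite-KSD}:
\[
\E\KSD_\pi(\mu_t^N,\mu_t)\le C N^{-1/2}e^{\kappa t}.
\]
This gives \eqref{eq:cutoff-short} with $a_N=N^{-1/2}$, $b_N=N^{-1/2}$ (or $b_N=0$) and $L=\kappa$.

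\emph{Long-time inputs.} Proposition~\ref{prop:particle-KSD-target} gives
\[
\E\KSD(\mu_t^N\|\pi)\le C\{e^{-\varepsilon\alpha t}+N^{-1/2}\},
\]
which is \eqref{eq:cutoff-particle-target} with $c=\varepsilon\alpha$ and $b_N=N^{-1/2}$. Corollary~\ref{cor:mf-KSD-target} gives $\KSD(\mu_t\|\pi)\le Ce^{-\varepsilon\alpha t}$, which is \eqref{eq:cutoff-mf-target}. Both results hold under Assumptions~\ref{ass:core-target} and \ref{ass:ksd-kernel}, and we use $\KSD_\pi(\rho,\pi)=\KSD(\rho\|\pi)$.

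\emph{Cutoff.} Applying Proposition~\ref{prop:cutoff} with these parameters yields
\[
\sup_{t\ge0}\E\KSD_\pi(\mu_t^N,\mu_t)\le C_{\mathrm{cut}}\bigl\{(N^{-1/2})^{\varepsilon\alpha/(\kappa+\varepsilon\alpha)}+N^{-1/2}\bigr\}.
\]
The exponent is exactly $\varepsilon\alpha/[2(\kappa+\varepsilon\alpha)]$, as claimed in \eqref{eq:KSD-cutoff}. Concretely, the cutoff time is $T_N=\frac{\log N}{2(\kappa+\varepsilon\alpha)}$, and at this time $N^{-1/2}e^{\kappa T_N}=e^{-\varepsilon\alpha T_N}$. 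For $t\le T_N$ we use the short-time bound, and for $t>T_N$ we use the triangle inequality through $\pi$ together with the two target bounds.

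There is no substantial obstacle. The only thing to check is that all constants are independent of $N$ and $t$. For Corollary~\ref{cor:finite-KSD} this follows from the uniformity of $C_G$, $C_{\mathrm{KSD}}$ and $c_{\mathrm{KSD}}$. For the target bounds it follows from Theorem~\ref{prop:full-law} under product initialization.
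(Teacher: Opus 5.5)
Your proposal is correct and follows the paper's own proof: you combine the moving-entropy finite-time bound of Corollary~\ref{cor:finite-KSD} (with $a_N=b_N=N^{-1/2}$ and $L=\kappa$) with the target estimates of Proposition~\ref{prop:particle-KSD-target} and Corollary~\ref{cor:mf-KSD-target} (with $c=\varepsilon\alpha$), and Proposition~\ref{prop:cutoff} then gives $a_N^{c/(L+c)}=N^{-\varepsilon\alpha/[2(\kappa+\varepsilon\alpha)]}$. Your explicit cutoff time $T_N=\log N/[2(\kappa+\varepsilon\alpha)]$ is the same one used in the proof of the abstract proposition.
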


\begin{proof}
Corollary~\ref{cor:finite-KSD} gives the short-time estimate with
\(a_N=b_N=N^{-1/2}\) and growth exponent \(L=\kappa\).
Proposition~\ref{prop:particle-KSD-target} and
Corollary~\ref{cor:mf-KSD-target} give the corresponding target estimates with decay rate
\(c=\varepsilon\alpha\).  Proposition~\ref{prop:cutoff} now proves the claim.
\end{proof}

\subsection{Alternative entropy-to-Wasserstein estimates}
\label{app:entropy-W2}

This appendix records a second route to the Wasserstein estimates.  It is
not needed for the main Wasserstein theorems, which use the synchronous coupling from Section~\ref{app:refined-coupling}.  Instead, it
shows how the full joint-law relative entropy bound can be converted into
empirical and fixed-marginal $W_2$ estimates when the optional initial
transport inequality in Assumption~\ref{ass:initial-T2} is available.

\begin{assumption}[Initial transport inequality for the entropy-to-$W_2$ route]
\label{ass:initial-T2}
For the results below, we assume that
\begin{equation}\label{eq:initial-T2}
    \mu_0\in T_2(C_0)
\end{equation}
for some $C_0<\infty$.
\end{assumption}

\begin{proposition}[Time-dependent $T_2$ inequality]
\label{prop:time-T2}
Take any $\varepsilon>0$. Let $\rho_t$ be the law of
\[
    dZ_t=a_t(Z_t)\,dt+\sqrt{2\varepsilon}\,dW_t,
    \qquad Z_0\sim\rho_0.
\]
Assume that \((t,x)\mapsto a_t(x)\) is measurable and that, for every
\(T<\infty\), there exist \(L_T,C_T<\infty\) such that
\[
    \|a_t(x)-a_t(y)\|\le L_T\|x-y\|,
    \qquad
    \|a_t(x)\|\le C_T(1+\|x\|),
\]
for all \(t\in[0,T]\) and \(x,y\in\R^d\). Moreover, assume that
\begin{equation}\label{eq:one-sided-L}
    \langle x-y,a_t(x)-a_t(y)\rangle\le L\|x-y\|^2
\end{equation}
uniformly in $t$.  If $\rho_0\in T_2(C_0)$, then $\rho_t\in T_2(C_t)$, where
\begin{equation}\label{eq:Ct}
    C_t=C_0e^{2Lt}+2\varepsilon\int_0^t e^{2L(t-s)}\,ds.
\end{equation}
Thus
\[
    C_t=
    \begin{cases}
      C_0e^{2Lt}+\dfrac{\varepsilon}{L}(e^{2Lt}-1),&L\ne0,\\[1ex]
      C_0+2\varepsilon t,&L=0,
    \end{cases}
\]
and $C_t\le C(1+C_0)e^{\ell t}$ for some finite $C$ and some \(\ell\ge0\).
\end{proposition}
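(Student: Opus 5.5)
Since the drift is Lipschitz on finite horizons, I will write \(\rho_t=\Law(Z_t)\) as a pushforward and transfer the \(T_2\) inequality through the flow. The cleanest route is Gozlan's characterization (via Bobkov--Götze duality in the form due to Gozlan/Djellout--Guillin--Wu): \(\rho\in T_2(C)\) with our convention \eqref{eq:T2-convention} is equivalent to dimension-free concentration, or more directly to the infimal-convolution/Hamilton--Jacobi inequality
\[
\int e^{Q_{C}f}\,d\rho\le e^{\int f\,d\rho}\quad\text{for all bounded Lipschitz }f,\qquad Q_Cf(x):=\inf_y\Bigl\{f(y)+\tfrac{\|x-y\|^2}{4C}\Bigr\}.
\]
I will instead use a more hands-on approach that avoids duality subtleties: a stability argument along a time discretization.

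Fix \(T\) and a partition of \([0,t]\) with mesh \(h\). I will approximate \(Z\) by the Euler-type scheme \(Z_{k+1}=\Phi_k(Z_k)+\sqrt{2\varepsilon}\,(W_{t_{k+1}}-W_{t_k})\), where \(\Phi_k\) is the time-\(h\) flow map of the ODE \(\dot x=a_s(x)\) on \([t_k,t_{k+1}]\). Two facts drive the recursion.

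First, by \eqref{eq:one-sided-L}, \(\frac{d}{ds}\|x_s-y_s\|^2\le 2L\|x_s-y_s\|^2\) for two ODE solutions. Hence \(\Phi_k\) is \(e^{Lh}\)-Lipschitz, and the pushforward of a \(T_2(C)\) law under an \(\ell\)-Lipschitz map lies in \(T_2(\ell^2C)\). I will use the standard fact that \(T_2\) is preserved under Lipschitz pushforwards with constant multiplied by \(\ell^2\); this follows from the dual formulation, or directly because \(\KL(\nu\circ\Phi^{-1}\,\|\,\rho\circ\Phi^{-1})\) can be realized by \(\KL(\tilde\nu\|\rho)\) for a suitable lift \(\tilde\nu\).

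Second, convolution with the Gaussian \(N(0,2\varepsilon h I)\), which lies in \(T_2(2\varepsilon h)\) in our convention (Talagrand), is handled by the additivity of \(T_2\) constants under independent sums. By tensorization, the pair lies in \(T_2(\max)\) only, so for sharp additivity I will use the Hamilton--Jacobi/dual form, where \(Q_{C_1+C_2}=Q_{C_1}\circ Q_{C_2}\) (a Hopf--Lax semigroup property). Integrating first in the Gaussian and then in \(\rho\) then gives \(\rho*\gamma\in T_2(C_1+C_2)\).

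Combining the two facts yields \(C_{k+1}\le e^{2Lh}C_k+2\varepsilon h\), and iterating gives \(C_{t}^{(h)}\le C_0e^{2Lt}+2\varepsilon\sum_k h\,e^{2L(t-t_{k+1})}\), a Riemann sum for \eqref{eq:Ct}. Finally, I will let \(h\to0\). The finite-horizon global Lipschitz and linear growth bounds give strong convergence of the splitting scheme, hence \(W_2\) convergence of the marginal laws. The \(T_2\) property is closed under weak convergence with converging second moments: in the dual form with bounded Lipschitz \(f\), both sides pass to the limit. This yields \(\rho_t\in T_2(C_t)\). The explicit formula for \(C_t\) is direct integration, and the bound \(C_t\le C(1+C_0)e^{\ell t}\) holds with \(\ell=2L^+\) (for \(L\le0\), \(C_t\le C_0+2\varepsilon t\le C(1+C_0)e^{t}\)).

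I expect the main obstacle to be the sharp additivity of \(T_2\) constants under convolution, i.e. justifying \(Q_{C_1+C_2}=Q_{C_1}\circ Q_{C_2}\) together with the Bobkov--Götze equivalence in our normalization. I will first verify the normalization: with \(W_2^2\le 2C\,\KL\), the dual statement is \(\int e^{Q f}d\rho\le e^{\int f d\rho}\) with \(Qf(x)=\inf_y\{f(y)+\|x-y\|^2/(4C)\}\). Then the Hopf--Lax semigroup identity for quadratic costs, \(\inf_z\{\|x-z\|^2/(4C_1)+\|z-y\|^2/(4C_2)\}=\|x-y\|^2/(4(C_1+C_2))\), closes the argument.
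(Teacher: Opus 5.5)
Your proposal is correct in substance, but it takes a genuinely different route from the paper's proof.

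\textbf{The paper's route.} The paper argues directly in continuous time and pathwise. For a fixed $\nu\ll\rho_t$, it tilts the path law by $\frac{d\nu}{d\rho_t}(Z_t)$. It then invokes the F\"ollmer--Girsanov representation to write the tilted process as the original diffusion plus a control $u$, with $\KL(\nu\|\rho_t)=\KL(Q_0\|\rho_0)+\frac1{4\varepsilon}\E_Q\int_0^t\|u_s\|^2ds$. Next it couples this process synchronously with an uncontrolled copy started from an optimal coupling of $Q_0$ and $\rho_0$. The proof closes with the one-sided Lipschitz bound and a single Cauchy--Schwarz step.

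\textbf{Your route.} You split time into a deterministic flow step and a Gaussian convolution step. The flow step is a Lipschitz pushforward, which multiplies the constant by $e^{2Lh}$. The convolution step adds $2\varepsilon h$. Your recursion $C_{k+1}\le e^{2Lh}C_k+2\varepsilon h$ is the discretization of $\dot C=2LC+2\varepsilon$, and it recovers exactly the paper's $C_t$.

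\textbf{What each buys.} Your approach avoids the finite-entropy control representation entirely and isolates two reusable stability facts for $T_2$. The price is an approximation layer, which you handle adequately:
\begin{itemize}
\item strong convergence of the splitting scheme, of order $h^{1/2}$, under the finite-horizon Lipschitz bounds;
\item closedness of $T_2$ under weak limits with converging constants.
\end{itemize}
The paper's argument is shorter once the F\"ollmer representation is granted.

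\textbf{Correction 1: dual normalization.} Your dual form is off by a factor of two. By Kantorovich duality and the Donsker--Varadhan formula, $W_2^2\le 2C\,\KL$ is equivalent to $\int e^{Q_Cf}d\rho\le e^{\int f\,d\rho}$ with $Q_Cf(x)=\inf_y\{f(y)+\|x-y\|^2/(2C)\}$, not $4C$. As a sanity check, take $\rho=\mathcal N(0,\sigma^2)$ and $f(y)=\theta y$. With the $2C$ version the dual inequality holds exactly when $C\ge\sigma^2$. With your $4C$ version it would hold already for $C\ge\sigma^2/2$. The Hopf--Lax identity and the pushforward step are scale invariant. So running your scheme with the $4C$ version and converting back at the end would only give $\rho_t\in T_2(2C_t)$; with $2C$ everything closes.

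\textbf{Correction 2: the convolution step.} After integrating out the Gaussian for fixed $x$, you need one more inequality before applying the dual inequality for $\rho$. Namely,
\[
\int Q_{C_1}f(x+g)\,\gamma(dg)\le Q_{C_1}F(x),\qquad F(x):=\int f(x+g)\,\gamma(dg),
\]
which holds because the expectation of an infimum is at most the infimum of expectations. This is the step that actually makes the constants add. You can also avoid duality here altogether. Tensorize $T_2$ for $\rho\otimes\gamma$ with the weighted cost $\|x-x'\|^2/C_1+\|g-g'\|^2/C_2$, then use
\[
\|x+g-x'-g'\|^2\le(C_1+C_2)\bigl(\|x-x'\|^2/C_1+\|g-g'\|^2/C_2\bigr)
\]
for the sum map. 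This is precisely the discrete counterpart of the paper's final Cauchy--Schwarz step.
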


\begin{proof}
Fix $\nu\ll\rho_t$ with finite entropy.  Let $P$ be the path law of $Z$ and
define the terminal tilt
\[
    \frac{dQ}{dP}=\frac{d\nu}{d\rho_t}(Z_t).
\]
Then $Q_t=\nu$ and $\KL(Q\|P)=\KL(\nu\|\rho_t)<\infty$.  The
finite-entropy F\"ollmer--Girsanov representation
(see \cite[Prop. 4.4]{Fathi2016}) gives a progressively measurable control
$u_s$, with \(\E_Q\int_0^t\|u_s\|^2ds<\infty\), such that, under $Q$,
\[
    dZ_s=a_s(Z_s)\,ds+u_s\,ds+\sqrt{2\varepsilon}\,dW_s^Q,
\]
and
\begin{equation}\label{eq:Follmer}
    \KL(\nu\|\rho_t)
    =\KL(Q_0\|\rho_0)
      +\frac1{4\varepsilon}\E_Q\int_0^t\|u_s\|^2\,ds.
\end{equation}
On an enlarged probability space, optimally couple $Z_0\sim Q_0$ with
$Y_0\sim\rho_0$ and solve
\[
    dY_s=a_s(Y_s)\,ds+\sqrt{2\varepsilon}\,dW_s^Q.
\]
Then $Y_t\sim\rho_t$.  From \eqref{eq:one-sided-L},
\[
    \|Z_t-Y_t\|
    \le e^{Lt}\|Z_0-Y_0\|
      +\int_0^t e^{L(t-s)}\|u_s\|\,ds.
\]
Taking $L^2(Q)$ norms gives
\[
    (\E_Q\|Z_t-Y_t\|^2)^{1/2}
    \le e^{Lt}(\E_Q\|Z_0-Y_0\|^2)^{1/2}
      +\left(\int_0^t e^{2L(t-s)}\,ds\right)^{1/2}
       \left(\E_Q\int_0^t\|u_s\|^2ds\right)^{1/2}.
\]
Since $\rho_0\in T_2(C_0)$,
\[
    \E_Q\|Z_0-Y_0\|^2\le2C_0\KL(Q_0\|\rho_0).
\]
Applying the two-dimensional Cauchy--Schwarz inequality to the preceding sum
and using \eqref{eq:Follmer},
\[
    \E_Q\|Z_t-Y_t\|^2
    \le2\left\{C_0e^{2Lt}
       +2\varepsilon\int_0^t e^{2L(t-s)}\,ds\right\}
       \KL(\nu\|\rho_t).
\]
The terminal pair couples $\nu$ and $\rho_t$, proving \eqref{eq:Ct}.
\end{proof}

\begin{lemma}[Spatial regularity of the nonlinear drift]
\label{lem:drift-Lip}
Under Assumptions~\ref{ass:core-target} and \ref{ass:W2}, the map
$x\mapsto v_{\mu_t}(x)-\varepsilon\nabla V(x)$ is globally Lipschitz, with a
constant independent of $t$.
\end{lemma}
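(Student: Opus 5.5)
The Lipschitz bound for \(x\mapsto v_{\mu_t}(x)-\varepsilon\nabla V(x)\) will come from a uniform bound on its spatial Jacobian. The Langevin part is immediate: by \eqref{eq:hessian-bound}, \(\|\nabla^2V\|_{\mathrm{op}}\le L_V\), so \(\varepsilon\nabla V\) is \(\varepsilon L_V\)-Lipschitz independently of \(t\). The work is therefore in the Stein velocity \(v_{\mu_t}(x)=\int B(x,y)\,\mu_t(dy)\).

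Under Assumption~\ref{ass:W2} the kernel is \(C^3\) with bounded \(\nabla_1k\) and \(\nabla^2_{12}k\). Differentiating \(B(x,y)=-k(x,y)\nabla V(y)+\nabla_2k(x,y)\) in \(x\) gives
\[
    \nabla_xB(x,y)=-\nabla V(y)\otimes\nabla_1k(x,y)+\nabla_{12}^2k(x,y),
\]
with the appropriate index convention. Using \(\|\nabla V(y)\|\le\|\nabla V(0)\|+L_V\|y\|\), this yields
\[
    \|\nabla_xB(x,y)\|_{\mathrm{op}}\le C(1+\|y\|),
\]
where \(C\) depends only on the sup norms of \(\nabla_1k\) and \(\nabla_{12}^2k\) and on \(L_V,\|\nabla V(0)\|\). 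Thus \(B(\cdot,y)\) is Lipschitz in \(x\) with constant \(C(1+\|y\|)\). Rather than differentiating under the integral sign, I will use the mean value inequality pointwise in \(y\) and then integrate against \(\mu_t\):
\[
    \|v_{\mu_t}(x)-v_{\mu_t}(x')\|\le C\Bigl(1+\int\|y\|\,\mu_t(dy)\Bigr)\|x-x'\|.
\]
This avoids any dominated-convergence justification.

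The remaining step, which is the only place time-uniformity enters, is to bound \(\sup_{t\ge0}\int\|y\|\,\mu_t(dy)\). This follows from Proposition~\ref{prop:second-moments}, which gives \(\sup_t m_2(\mu_t)\le \frac4\alpha\KL(\mu_0\|\pi)+2m_2(\pi)\) using only Assumption~\ref{ass:core-target} and the mean-field entropy decay, together with Cauchy--Schwarz. Combining the two parts gives a Lipschitz constant
\[
    C\Bigl(1+\sup_t m_2(\mu_t)^{1/2}\Bigr)+\varepsilon L_V,
\]
which is independent of \(t\).

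The main point to check is that only first derivatives of \(k\) in \(x\), namely \(\nabla_1k\) and \(\nabla_{12}^2k\), appear, and that both are among the bounded quantities in Assumption~\ref{ass:W2}. No sub-Gaussian or concentration assumption is needed. A mild subtlety is that Proposition~\ref{prop:second-moments} is stated under the assumptions of Proposition~\ref{prop:particle-W2-target}, which are exactly Assumption~\ref{ass:core-target} plus a kernel assumption, so it applies here.
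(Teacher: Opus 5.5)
Your proof is correct and is essentially the paper's argument. The paper bounds $\|v_{\mu_t}(x)-v_{\mu_t}(x')\|$ by $\bigl(\|\nabla_1k\|_\infty\int\|\nabla V(y)\|\,\mu_t(dy)+\|\nabla_{12}^2k\|_\infty\bigr)\|x-x'\|$, controls the integral uniformly in $t$ via Proposition~\ref{prop:second-moments} and \eqref{eq:hessian-bound}, and then adds the $\varepsilon L_V$-Lipschitz Langevin drift; your pointwise mean-value step in $y$, followed by integration against $\mu_t$, is exactly the justification implicit in that display.
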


\begin{proof}
For $x,x'\in\R^d$,
\begin{align*}
    \|v_{\mu_t}(x)-v_{\mu_t}(x')\|
    &\le \|\nabla_1k\|_\infty\|x-x'\|
       \int\|\nabla V(y)\|\mu_t(dy)
    +\|\nabla_{12}^2k\|_\infty\|x-x'\|.
\end{align*}
The integral is uniformly bounded by
Proposition~\ref{prop:second-moments} and \eqref{eq:hessian-bound}.  Adding
the globally Lipschitz Langevin drift proves the claim.
\end{proof}

\begin{corollary}[Propagation of $T_2$ for the nonlinear law]
\label{cor:mu-T2}
Under Assumptions~\ref{ass:core-target}, \ref{ass:W2}, and
\ref{ass:initial-T2}, the nonlinear law satisfies
\begin{equation}\label{eq:mu-T2}
    \mu_t\in T_2(C_t),
    \qquad C_t\le C(1+C_0)e^{\ell t},
\end{equation}
for some finite $C,\ell$.
\end{corollary}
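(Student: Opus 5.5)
The plan is to apply Proposition~\ref{prop:time-T2} to the time-dependent drift
\[
    a_t(x):=v_{\mu_t}(x)-\varepsilon\nabla V(x),
\]
where \(\mu_t\) is the (already constructed, Proposition~\ref{prop:wellposed}) nonlinear law. Once \(\mu_t\) is fixed, the nonlinear SDE \eqref{eq:nonlinear-sde} is a linear, time-inhomogeneous SDE with drift \(a_t\) and noise \(\sqrt{2\varepsilon}\,dW\). Its law started from \(\mu_0\) is therefore exactly \(\rho_t=\mu_t\), by pathwise uniqueness for this linear SDE. Thus the corollary follows as soon as the hypotheses of Proposition~\ref{prop:time-T2} are checked with \(\rho_0=\mu_0\in T_2(C_0)\) (Assumption~\ref{ass:initial-T2}).

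The hypotheses are verified in order.

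\emph{Measurability.} The map \((t,x)\mapsto a_t(x)\) is continuous in \(x\), and it is measurable in \(t\) because \(t\mapsto\mu_t\) is weakly continuous and has bounded second moments, while \(B(x,\cdot)\) has linear growth.

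\emph{Global Lipschitz bound.} Lemma~\ref{lem:drift-Lip} gives a Lipschitz constant \(L_{\mathrm{Lip}}\) that is uniform in \(t\). We may therefore take \(L_T=L_{\mathrm{Lip}}\) for every \(T\).

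\emph{Linear growth.} From \eqref{eq:velocity-bounded}-type reasoning, \(\|v_{\mu_t}(x)\|\le C(1+\sup_t m_2(\mu_t)^{1/2})\). Here boundedness of \(k\) and \(\nabla_2k\) (available under Assumption~\ref{ass:W2}) is used together with Proposition~\ref{prop:second-moments}. In addition, \(\|\nabla V(x)\|\le\|\nabla V(0)\|+L_V\|x\|\). Together these give \(\|a_t(x)\|\le C(1+\|x\|)\) uniformly in \(t\).

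\emph{One-sided bound \eqref{eq:one-sided-L}.} By Cauchy--Schwarz, the global Lipschitz bound immediately gives
\[
\langle x-y,a_t(x)-a_t(y)\rangle\le L_{\mathrm{Lip}}\|x-y\|^2,
\]
so we may take \(L=L_{\mathrm{Lip}}\ge0\), independent of \(t\).

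With these checks, Proposition~\ref{prop:time-T2} yields \(\mu_t\in T_2(C_t)\) with \(C_t=C_0e^{2Lt}+\tfrac{\varepsilon}{L}(e^{2Lt}-1)\) (or \(C_0+2\varepsilon t\) when \(L=0\)). Hence \(C_t\le(C_0+\varepsilon/L)e^{2Lt}\le C(1+C_0)e^{\ell t}\) with \(\ell=2L\). In the degenerate case \(L=0\), one can simply enlarge \(L\) to any positive number, since \eqref{eq:one-sided-L} remains true.

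The only delicate point is the identification \(\rho_t=\mu_t\) and the uniformity of the constants in \(t\). The latter rests entirely on the uniform-in-time second moment bound of Proposition~\ref{prop:second-moments}, which feeds the Lipschitz constant in Lemma~\ref{lem:drift-Lip}. That proposition requires only Assumption~\ref{ass:core-target} and a kernel assumption, so no concentration assumption is needed. I expect no genuine obstacle beyond noting that freezing the measure argument turns the McKean--Vlasov equation into an ordinary SDE whose unique solution has marginal law \(\mu_t\).
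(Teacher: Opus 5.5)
Your proposal is correct and follows the paper's proof, which is the one-line application of Proposition~\ref{prop:time-T2} to the drift $v_{\mu_t}-\varepsilon\nabla V$ of \eqref{eq:nonlinear-sde}, using the $t$-uniform Lipschitz bound of Lemma~\ref{lem:drift-Lip}. Your explicit checks (linear growth, the one-sided bound with $\ell=2L$, and the identification $\rho_t=\mu_t$ after freezing the measure argument) fill in details the paper leaves implicit.
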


\begin{proof}
Apply Proposition~\ref{prop:time-T2} to \eqref{eq:nonlinear-sde}, using
Lemma~\ref{lem:drift-Lip}.
\end{proof}

For the remainder of this appendix, fix one exponent $\ell<\infty$ for
which \eqref{eq:mu-T2} holds and set
\begin{equation}\label{eq:L-ent}
    L_{\mathrm{ent}}:=\kappa+\ell,
\end{equation}
where $\kappa$ is defined in \eqref{eq:kappa}. Also recall $r_{N,d}$ from \eqref{eq:rNd}.

\begin{corollary}[Finite-time empirical $W_2$ bound from moving entropy]
\label{cor:finite-W2-entropy}
Under Assumptions~\ref{ass:core-target}, \ref{ass:concentration},
\ref{ass:W2}, and \ref{ass:initial-T2},
\begin{equation}\label{eq:finite-W2-entropy}
    \E W_2^2(\mu_t^N,\mu_t)
    \le\frac{4C_t}{N}J_N(t)+2Cr_{N,d}
    \le C\left\{
       \frac{e^{L_{\mathrm{ent}}t}}{N}+r_{N,d}
    \right\}
    \le Cr_{N,d}e^{L_{\mathrm{ent}}t}, \qquad t \ge 0.
\end{equation}
\end{corollary}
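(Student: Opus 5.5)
The argument has three steps: transfer $J_N(t)$ to a $W_2$ bound via a tensorized $T_2$ inequality, compare $\mu_t^N$ with an i.i.d.\ empirical measure, and then insert the growth bounds on $J_N$ and $C_t$.

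\textbf{Step 1: a tensorized $T_2$ bound.} By Corollary~\ref{cor:mu-T2}, $\mu_t\in T_2(C_t)$. The $T_2$ inequality tensorizes with the same constant, so $\mu_t^{\otimes N}\in T_2(C_t)$ on $(\R^d)^N$ with the Euclidean product cost. Applying this with $\nu=P_t^N$ gives
\[
W_2^2(P_t^N,\mu_t^{\otimes N})\le 2C_t\,\KL(P_t^N\|\mu_t^{\otimes N})=2C_tJ_N(t).
\]
Note that $P_t^N\ll\mu_t^{\otimes N}$ holds whenever $J_N(t)<\infty$, which Proposition~\ref{prop:moving-entropy} guarantees.

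\textbf{Step 2: comparison with an i.i.d.\ empirical measure.} Let $(X,\bar X)$ be an optimal coupling of $P_t^N$ and $\mu_t^{\otimes N}$. The coordinates $\bar X_1,\ldots,\bar X_N$ are then i.i.d.\ with law $\mu_t$. Lemma~\ref{lem:empirical-map} gives
\[
\E W_2^2(\mu_X^N,\mu_{\bar X}^N)\le \tfrac1N W_2^2(P_t^N,\mu_t^{\otimes N})\le \tfrac{2C_t}{N}J_N(t).
\]
Next use the squared triangle inequality $W_2^2(\mu_t^N,\mu_t)\le 2W_2^2(\mu_t^N,\mu_{\bar X}^N)+2W_2^2(\mu_{\bar X}^N,\mu_t)$. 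The last term is controlled by \eqref{eq:iid-W2-floor}, which gives a bound $Cr_{N,d}$ uniform in $t$; its hypothesis holds by Proposition~\ref{prop:subgaussian}, since that result only needs Assumptions~\ref{ass:core-target}, \ref{ass:concentration} and a kernel assumption. Together these yield the first inequality,
\[
\E W_2^2(\mu_t^N,\mu_t)\le \tfrac{4C_t}{N}J_N(t)+2Cr_{N,d}.
\]

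\textbf{Step 3: inserting the growth bounds.} Proposition~\ref{prop:moving-entropy} gives $J_N(t)\le(e^{\kappa t}-1)\log C_G\le e^{\kappa t}\log C_G$, and Corollary~\ref{cor:mu-T2} gives $C_t\le C(1+C_0)e^{\ell t}$. Multiplying, $4C_tJ_N(t)/N\le C e^{(\kappa+\ell)t}/N=Ce^{L_{\mathrm{ent}}t}/N$ by \eqref{eq:L-ent}, which is the middle inequality. The final inequality follows from $N^{-1}\le Cr_{N,d}$, which holds in every case of \eqref{eq:rNd}, together with $1\le e^{L_{\mathrm{ent}}t}$ (since $\kappa,\ell\ge 0$), absorbing constants.

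\textbf{Main obstacle.} The only delicate point is the tensorization of $T_2(C_t)$ to $\mu_t^{\otimes N}$ with the summed squared cost. This is standard (Marton's or Talagrand's tensorization argument via the chain rule for KL and gluing of conditional optimal couplings), so I would cite it rather than reprove it. One must also check that the exponents $\eta$ and $\ell$ used here match the fixed choice made just before \eqref{eq:L-ent}.
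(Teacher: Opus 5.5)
Your proposal is correct and follows the paper's proof essentially step for step: tensorize $T_2(C_t)$ to bound $W_2^2(P_t^N,\mu_t^{\otimes N})$ by $2C_tJ_N(t)$, take an optimal coupling and apply Lemma~\ref{lem:empirical-map}, then use the squared triangle inequality with the i.i.d.\ floor \eqref{eq:iid-W2-floor}, and finally insert \eqref{eq:moving-entropy-bound}, \eqref{eq:mu-T2}, and $N^{-1}\le Cr_{N,d}$. The only thing to tidy is the stray mention of an exponent $\eta$ in your closing remark; it plays no role here, since only $\kappa$ and $\ell$ enter $L_{\mathrm{ent}}$.
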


\begin{proof}
The $T_2(C_t)$ inequality tensorizes to $\mu_t^{\otimes N}$, so
\[
    W_2^2(P_t^N,\mu_t^{\otimes N})\le2C_tJ_N(t).
\]
Couple these laws optimally and apply Lemma~\ref{lem:empirical-map}; if
$\bar X_1(t),\ldots,\bar X_N(t)$ denote the product-reference coordinates,
then
\[
    \E W_2^2\left(
       \mu_t^N,\frac1N\sum_{i=1}^N\delta_{\bar X_i(t)}
    \right)
    \le\frac{2C_t}{N}J_N(t).
\]
The squared triangle inequality and \eqref{eq:iid-W2-floor} give the first
bound in \eqref{eq:finite-W2-entropy}.  The second follows from
\eqref{eq:moving-entropy-bound}, \eqref{eq:mu-T2}, and
\(N^{-1}\le C r_{N,d}\), with the numerical constant absorbed into the
generic prefactor.
\end{proof}

\begin{corollary}[Finite-time fixed-marginal $W_2$ bound from moving entropy]
\label{cor:finite-marginal-W2-entropy}
Under Assumptions~\ref{ass:core-target},
\ref{ass:W2}, \ref{ass:concentration}, and \ref{ass:initial-T2}, for fixed $k$ and $N\ge2k$,
\begin{equation}\label{eq:finite-marginal-W2-entropy}
    W_2^2(P_{N,t}^{(k)},\mu_t^{\otimes k})
    \le C\frac{k}{N}e^{L_{\mathrm{ent}}t}, \qquad t \ge 0.
\end{equation}
\end{corollary}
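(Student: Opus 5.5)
The plan is to combine the block entropy bound of Lemma~\ref{lem:block-entropy} with the tensorized time-dependent transport inequality from Corollary~\ref{cor:mu-T2}. This mirrors the empirical argument in Corollary~\ref{cor:finite-W2-entropy}, but we work at the level of the $k$-marginal, so no empirical sampling floor $r_{N,d}$ appears.

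\textbf{Step 1 (marginal entropy).} Since $P_t^N$ is exchangeable, apply Lemma~\ref{lem:block-entropy} with $\rho=\mu_t$. Using $\lfloor N/k\rfloor\ge N/(2k)$ for $N\ge2k$, this gives
\[
\KL(P_{N,t}^{(k)}\|\mu_t^{\otimes k})\le \frac{2k}{N}J_N(t).
\]
This is exactly \eqref{eq:marginal-entropy-from-JN} combined with the floor estimate. Proposition~\ref{prop:moving-entropy} then bounds the right-hand side by $\frac{2k}{N}(e^{\kappa t}-1)\log C_G$.

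\textbf{Step 2 (transport).} Under Assumptions~\ref{ass:W2} and \ref{ass:initial-T2}, Corollary~\ref{cor:mu-T2} gives $\mu_t\in T_2(C_t)$ with $C_t\le C(1+C_0)e^{\ell t}$. The $T_2$ inequality tensorizes with the same constant, by the Marton-type coupling argument already used in Corollary~\ref{cor:finite-W2-entropy}. Hence $\mu_t^{\otimes k}\in T_2(C_t)$, and
\[
W_2^2(P_{N,t}^{(k)},\mu_t^{\otimes k})\le 2C_t\,\KL(P_{N,t}^{(k)}\|\mu_t^{\otimes k}).
\]
This applies because $P_{N,t}^{(k)}\ll\mu_t^{\otimes k}$, which follows from the finiteness of the entropy in Step 1.

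\textbf{Step 3 (combine).} Multiplying the two bounds gives
\[
W_2^2(P_{N,t}^{(k)},\mu_t^{\otimes k})\le 4C(1+C_0)\log C_G\,\frac{k}{N}\,e^{\ell t}(e^{\kappa t}-1)\le C\frac kN e^{(\kappa+\ell)t}.
\]
Recalling $L_{\mathrm{ent}}=\kappa+\ell$ from \eqref{eq:L-ent}, this is \eqref{eq:finite-marginal-W2-entropy}.

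The only point needing care is the dimension-free tensorization of $T_2(C_t)$ to $\mu_t^{\otimes k}$, so that the constant $C$ does not depend on $k$. I expect this to be standard and already implicitly used earlier in the paper. Everything else is a direct substitution of previously established bounds.
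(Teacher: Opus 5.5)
Your proposal is correct and is essentially the paper's proof. The paper also combines Lemma~\ref{lem:block-entropy} (with $\lfloor N/k\rfloor\ge N/(2k)$), Proposition~\ref{prop:moving-entropy}, and the dimension-free tensorization of $\mu_t\in T_2(C_t)$ from Corollary~\ref{cor:mu-T2}, which gives a prefactor $4C(1+C_0)\log C_G$ independent of $k$.
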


\begin{proof}
The $T_2(C_t)$ inequality tensorizes to $\mu_t^{\otimes k}$.  Combine it
with Lemma~\ref{lem:block-entropy}, Proposition~\ref{prop:moving-entropy},
and $\lfloor N/k\rfloor\ge N/(2k)$.
\end{proof}

\begin{corollary}[Alternative entropy-based uniform-in-time $W_2$ bounds]
\label{cor:entropy-W2-cutoff}
Under Assumptions~\ref{ass:core-target},
\ref{ass:W2}, \ref{ass:concentration}, and \ref{ass:initial-T2},
\begin{equation}\label{eq:W2-cutoff-entropy}
    \sup_{t\ge0}\E W_2^2(\mu_t^N,\mu_t)
    \le C\left\{
       N^{-2\varepsilon\alpha/
          (L_{\mathrm{ent}}+2\varepsilon\alpha)}
       +r_{N,d}\right\}
    \le C\left\{
       r_{N,d}^{\,2\varepsilon\alpha/
          (L_{\mathrm{ent}}+2\varepsilon\alpha)}
       +r_{N,d}\right\}.
\end{equation}
Moreover, for fixed $k$ and $N\ge2k$,
\begin{equation}\label{eq:marginal-W2-cutoff-entropy}
    \sup_{t\ge0}W_2^2(P_{N,t}^{(k)},\mu_t^{\otimes k})
    \le C_k\left\{
       N^{-2\varepsilon\alpha/
          (L_{\mathrm{ent}}+2\varepsilon\alpha)}
       +N^{-1}\right\}.
\end{equation}
\end{corollary}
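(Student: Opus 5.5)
The plan is to feed the two moving-entropy finite-time bounds into the cutoff scheme of Proposition~\ref{prop:cutoff}, reusing the long-time target estimates already proved for the synchronous-coupling theorems. The argument mirrors Theorems~\ref{thm:W2-cutoff} and \ref{thm:marginal-W2}, with $L_{\mathrm{cpl}}$ replaced by $L_{\mathrm{ent}}$.

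\textbf{Empirical bound.} For the first claim I take $d=W_2^2$, for which \eqref{eq:target-comparison} holds with $C_{\mathrm{tar}}=2$.

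The short-time input is Corollary~\ref{cor:finite-W2-entropy}:
\[
\E W_2^2(\mu_t^N,\mu_t)\le C\{N^{-1}e^{L_{\mathrm{ent}}t}+r_{N,d}\}.
\]
This is \eqref{eq:cutoff-short} with $a_N=N^{-1}$, $b_N=r_{N,d}$ and $L=L_{\mathrm{ent}}$; after absorbing constants we may assume $a_N,b_N\le1$.

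The long-time inputs are \eqref{eq:cutoff-particle-target} and \eqref{eq:cutoff-mf-target} with $c=2\varepsilon\alpha$:
\begin{itemize}
\item Proposition~\ref{prop:particle-W2-target} together with Corollary~\ref{cor:FG} for $\rho=\pi$ gives $\E W_2^2(\mu_t^N,\pi)\le C\{e^{-2\varepsilon\alpha t}+r_{N,d}\}$, using $N^{-1}\le Cr_{N,d}$.
\item Corollary~\ref{cor:mf-W2-target} gives $W_2^2(\mu_t,\pi)\le Ce^{-2\varepsilon\alpha t}$.
\end{itemize}
Proposition~\ref{prop:cutoff} then yields the first inequality in \eqref{eq:W2-cutoff-entropy}. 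The second inequality holds because $N^{-1}\le Cr_{N,d}$ and the exponent $2\varepsilon\alpha/(L_{\mathrm{ent}}+2\varepsilon\alpha)$ lies in $(0,1)$, so $N^{-\theta}\le C r_{N,d}^{\theta}$.

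\textbf{Fixed-marginal bound.} Here I argue directly, as in Theorem~\ref{thm:marginal-W2}, since the abstract cutoff proposition is phrased for empirical measures.

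For short times, Corollary~\ref{cor:finite-marginal-W2-entropy} gives
\[
W_2^2(P_{N,t}^{(k)},\mu_t^{\otimes k})\le C_kN^{-1}e^{L_{\mathrm{ent}}t}.
\]
For long times, Lemma~\ref{lem:marginal-W2-long} gives
\[
W_2^2(P_{N,t}^{(k)},\mu_t^{\otimes k})\le C_k(e^{-2\varepsilon\alpha t}+N^{-1}).
\]
I choose the cutoff
\[
T_N=\frac{\log N}{L_{\mathrm{ent}}+2\varepsilon\alpha},
\]
so that $N^{-1}e^{L_{\mathrm{ent}}T_N}=e^{-2\varepsilon\alpha T_N}=N^{-2\varepsilon\alpha/(L_{\mathrm{ent}}+2\varepsilon\alpha)}$. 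Taking the supremum over $t\le T_N$ and $t>T_N$ separately gives \eqref{eq:marginal-W2-cutoff-entropy}.

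\textbf{Main obstacle.} This is bookkeeping rather than a new estimate. The point that needs checking is that all constants, including $C_t\le C(1+C_0)e^{\ell t}$ from Corollary~\ref{cor:mu-T2}, are independent of $N$ and $t$, so that the combined exponent is exactly $L_{\mathrm{ent}}=\kappa+\ell$.

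Two further checks:
\begin{itemize}
\item The long-time estimates use only Assumption~\ref{ass:core-target} and the kernel condition, so the extra $T_2$ hypothesis does not enter there.
\item The factor $C(1+C_0)$ may make $a_N$ exceed $1$ for small $N$. This is handled by enlarging the constant, since the bound is trivial for bounded $N$ by the uniform second moments of Proposition~\ref{prop:second-moments}.
\end{itemize}
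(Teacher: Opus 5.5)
Your proposal is correct and follows the paper's proof essentially line for line. For the empirical bound it applies Proposition~\ref{prop:cutoff} with $d=W_2^2$, $a_N=N^{-1}$, $b_N=r_{N,d}$, $L=L_{\mathrm{ent}}$ and $c=2\varepsilon\alpha$; for the fixed marginals it splits directly at $T_N=(L_{\mathrm{ent}}+2\varepsilon\alpha)^{-1}\log N$ between Corollary~\ref{cor:finite-marginal-W2-entropy} and Lemma~\ref{lem:marginal-W2-long}.

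Two side remarks are slightly off, though neither affects the argument. The empirical long-time input does use Assumption~\ref{ass:concentration}, through Corollary~\ref{cor:FG} at $\rho=\pi$. The factor $C(1+C_0)$ enters the prefactor $A$, not $a_N=N^{-1}\le1$.
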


\begin{proof}
Corollary~\ref{cor:finite-W2-entropy} gives
\[
    \E W_2^2(\mu_t^N,\mu_t)
    \le
    C\left\{N^{-1}e^{L_{\mathrm{ent}}t}+r_{N,d}\right\}.
\]
Moreover, Proposition~\ref{prop:particle-W2-target},
Corollary~\ref{cor:FG}, and \eqref{eq:mf-W2-target} imply
\[
    \E W_2^2(\mu_t^N,\pi)
    \le C\left\{e^{-2\varepsilon\alpha t}+r_{N,d}\right\},
    \qquad
    W_2^2(\mu_t,\pi)
    \le Ce^{-2\varepsilon\alpha t},
\]
where the \(N^{-1}\) term in
\eqref{eq:particle-W2-target} has been absorbed into \(r_{N,d}\).
Thus Proposition~\ref{prop:cutoff} applies to \(d=W_2^2\) with
\[
    a_N=N^{-1},\qquad b_N=r_{N,d},\qquad
    L=L_{\mathrm{ent}},\qquad c=2\varepsilon\alpha.
\]
This gives the empirical bound \eqref{eq:W2-cutoff-entropy}.

For fixed marginals, writing $T_N=(L_{\mathrm{ent}}+2\varepsilon\alpha)^{-1}\log N$, Corollary~\ref{cor:finite-marginal-W2-entropy}
gives, for \(t\le T_N\),
\[
    W_2^2(P_{N,t}^{(k)},\mu_t^{\otimes k})
    \le
    C_kN^{-1}e^{L_{\mathrm{ent}}T_N}.
\]
For \(t>T_N\), Lemma~\ref{lem:marginal-W2-long} gives
\[
    W_2^2(P_{N,t}^{(k)},\mu_t^{\otimes k})
    \le
    C_k\left\{
        e^{-2\varepsilon\alpha T_N}+N^{-1}
    \right\}.
\]
Combining the two time regimes proves
\eqref{eq:marginal-W2-cutoff-entropy}.
\end{proof}

\subsection{Comparison of the coupling and entropy routes}
\label{sec:W2-method-comparison}

The preceding results provide two distinct routes to Wasserstein propagation
of chaos.  The synchronous-coupling argument works directly at the level of
particle trajectories and does not require a transportation inequality for
the initial law.  By contrast, the entropy argument first controls the
relative entropy of the joint particle law with respect to the moving
product law.  It therefore gives stronger information, including marginal
relative entropy and total variation bounds, but its conversion to
Wasserstein distance requires the additional assumption
\(\mu_0\in T_2(C_0)\).
Since both cutoff arguments use the same long-time estimate, the
difference between the resulting exponents comes directly from the finite-time growth
constants.  The entropy bound has the smaller growth exponent, and hence a better uniform-in-time PoC rate, when
$$
    L_{\mathrm{ent}}<L_{\mathrm{cpl}},
$$
while the coupling bound gives a better rate when
\(L_{\mathrm{cpl}}<L_{\mathrm{ent}}\).  This comparison applies only to the
estimates proved here, which need not be sharp for a particular model.  For
the empirical \(W_2^2\) bound, the sampling term \(r_{N,d}\) may still
determine the overall rate.

We now try to quantify a more explicit estimate of the finite-time growth constants.  Set
$$
    M_1:=\sup_{t\geq0}\int_{\R^d}\|x\|\,\mu_t(dx),
    \qquad
    \mathfrak M_a
    :=
    \sup_{t\geq0}\int e^{a\|x\|^2}\,\mu_t(dx),
$$

and introduce the three kernel--potential quantities
\begin{align*}
K_{\mathrm{amp}}
&:=
\|k\|_\infty\bigl(\|\nabla V(0)\|+L_V\bigr)
+\|\nabla_2k\|_\infty,\\
K_x
&:=
\|\nabla_1k\|_\infty
\bigl(\|\nabla V(0)\|+L_VM_1\bigr)
+\|\nabla_{12}^2k\|_\infty,\\
K_{\mathrm{lip}}
&:=
\bigl(\|\nabla V(0)\|+L_V\bigr)
\bigl(\|\nabla_1k\|_\infty+\|\nabla_2k\|_\infty\bigr)
+L_V\|k\|_\infty
+\|\nabla_{12}^2k\|_\infty
+\|\nabla_{22}^2k\|_\infty .
\end{align*}
Here \(K_{\mathrm{amp}}\) controls the magnitude of the interaction,
$$
    \|B(x,y)\|\leq K_{\mathrm{amp}}(1+\|y\|),
$$
\(K_x\) controls the spatial Lipschitz constant of
\(x\mapsto v_{\mu_t}(x)\),
and and the bound for \(\|B(x,y)-B(x',y')\|\) in
\eqref{eq:B-weighted-y-lipschitz} holds with
\(K_{\mathrm{lip}}\) in place of \(C_B\), hence, \[ \begin{aligned}
\|B(x,y)-B(x',y')\|
\le{}&
K_{\mathrm{lip}}
\bigl(1+\min\{\|y\|,\|y'\|\}\bigr)
\bigl(\|x-x'\|+\|y-y'\|\bigr).
\end{aligned}\] 

For any \(0<a<\min\{a_0,m/8\}\), let
\(C_{a,\mathfrak M_a,d}>0\) be sufficiently large so that
\[
    \lambda
    =\frac{1}
           {2C_{a,\mathfrak M_a,d}K_{\mathrm{amp}}^2}
\]
is an admissible exponent in Proposition~\ref{prop:exp-lln}. Consequently, 
$
    \kappa
    =
    \frac{1}{2\varepsilon\lambda}
    =
    C_{a,\mathfrak M_a,d}
    \frac{K_{\mathrm{amp}}^2}{\varepsilon}.
$
If \(K_{\mathrm{amp}}=0\), then \(B\equiv0\), and one may take \(\kappa=0\). The generic propagation estimate for the transportation inequality allows the choice $ 
    \ell=2\bigl(K_x+\varepsilon L_V\bigr)$ in Proposition~\ref{prop:time-T2}. If a sharper growth estimate for the \(T_2\) constants of \(\mu_t\)
is known directly, that smaller value of \(\ell\) should instead be used. The constants obtained in the proofs then satisfy, up to numerical factors,
\begin{align}
L_{\mathrm{cpl}}
&\lesssim
\frac{L_0}{2}
+1+\varepsilon L_V+K_{\mathrm{lip}}
+\frac{K_{\mathrm{lip}}^2}
{a\varepsilon(m-4a)},
\label{eq:Lcpl-schematic}\\
L_{\mathrm{ent}}
&=\kappa+\ell
\le
C_{a,\mathfrak M_a,d}
\frac{K_{\mathrm{amp}}^2}{\varepsilon}
+2\bigl(K_x+\varepsilon L_V\bigr).
\label{eq:Lent-schematic}
\end{align}
Here \(L_0\) is the exponent in \eqref{eq:path-exponential}. With
$$
    \begin{aligned}
        M_v
        &:=
        \sup_{t\ge0}\sup_{x\in\R^d}\|v_{\mu_t}(x)\| 
         \le
        \|k\|_\infty
           \bigl(\|\nabla V(0)\|+L_VM_1\bigr)
        +\|\nabla_2k\|_\infty,
    \end{aligned}
$$
we have by \eqref{eq:lgendef},
$$
    L_0 = C_{\mathrm{gen}} = 
    \frac{aM_v^2}{\varepsilon m}
    +2a\varepsilon(b+d).
$$
Thus the coupling argument is sensitive to
the two-variable difference scale of \(B\) captured by
\(K_{\mathrm{lip}}\), and in particular to
\(\|\nabla_{22}^2k\|_\infty\), through the quadratic contribution
\(K_{\mathrm{lip}}^2/\varepsilon\).  The moving-product entropy estimate
itself depends instead on the magnitude of \(B\), through
\(K_{\mathrm{amp}}\).  The weaker \(x\)-regularity quantity \(K_x\) enters
only when the entropy estimate is converted into a Wasserstein estimate by
propagating a \(T_2\) inequality.  Consequently, the entropy route can be
sharper for kernel families whose interaction amplitudes remain controlled
while their higher spatial derivatives become large. The following simple examples show that either of the two approaches could yield better
rates, depending on the specific case.
For the narrow kernels below, the entropy cutoff exponent remains stable as \(h\downarrow0\), while the exponent obtained from the coupling estimate deteriorates.  For the constant kernel, direct coupling gives an \(N^{-1}\) bound, while the entropy cutoff estimate gives a smaller power of
\(N\).

\subsubsection{A narrow-kernel regime favoring entropy}
\label{sec:entropy-better-example}

Let

$$
    \pi=\mu_0=\mathcal N(0,I_d),
    \qquad
    V(x)=\frac12\|x\|^2,
$$

and consider

$$
    k_h(x,y)
    :=
    h\exp\left(-\frac{\|x-y\|^2}{2h^2}\right),
    \qquad 0<h\leq1.
$$

The Stein identity gives \(v_\pi=0\), and hence
\(\mu_t=\pi\) for all \(t\geq0\).  Thus the \(T_2(1)\) constant is
time independent and we may take \(\ell=0\), where \(\ell\) is as in
Corollary~\ref{cor:mu-T2}.  Moreover, for fixed
\(a\in(0,1/8)\),

$$
    \mathfrak M_a
    =
    \int e^{a\|x\|^2}\,\pi(dx)
    =(1-2a)^{-d/2},
$$

independently of \(h\).

The kernel bounds

$$
    \|k_h\|_\infty=O(h),\qquad
    \|\nabla k_h\|_\infty=O(1),\qquad
    \|\nabla^2k_h\|_\infty=O(h^{-1})
$$

give

$$
    K_{\mathrm{amp}}=O(1),
    \qquad
    K_x=O(h^{-1}),
    \qquad
    K_{\mathrm{lip}}=O(h^{-1}).
$$
Since \(\mu_t=\pi\), we use the exact stationary \(T_2(1)\) estimate
and take \(\ell=0\), rather than the generic drift-based upper bound on
\(\ell\) in Corollary~\ref{cor:mu-T2}.  Consequently, \(L_{\mathrm{ent}}=\kappa\), so \(K_x\) does not
enter this exponent.  Independently, the \(h\)-uniform bounds on
\(K_{\mathrm{amp}}\) and \(\mathfrak M_a\) allow the parameter \(\lambda\)
in the exponential law of large numbers to be chosen uniformly in \(h\).
Hence

$$
    L_{\mathrm{ent}}
    =\kappa
    \le C_{a,d}\,\varepsilon^{-1}.
$$

In contrast, the general coupling estimate gives

$$
    L_{\mathrm{cpl}}
    \lesssim C_{a,d}\left(
    1+\varepsilon+h^{-1}+\frac{1}{\varepsilon h^2}\right).
$$

Thus, for fixed \(\varepsilon>0\), the moving-product entropy cutoff estimate gives us an \(h\)-independent exponent in the fixed-marginal \(W_2^2\) bound.  By contrast, using the  upper bound on \(L_{\mathrm{cpl}}\) into the  synchronous-coupling cutoff estimate gives a uniform-in-time PoC rate of the form $N^{-c_{\varepsilon,\alpha,a,d}h^2}$. The same comparison applies to the cutoff estimate of the empirical bound, although that bound also contains the sampling term \(r_{N,d}\).

This comparison concerns the power of \(N\) for each fixed \(h\), not an \(h\)-uniform multiplicative constant.  Indeed, for the
present kernel and target, the cross-term in \eqref{eq:Cstar-function} is
$$
\begin{aligned}
    C_h^\star(x)
    =
    \nabla_2k_h(x,x)\cdot\nabla V(x)
    +k_h(x,x)\Delta V(x)
    -\Delta_2k_h(x,x)
    =d\bigl(h+h^{-1}\bigr).
\end{aligned}
$$

Hence the constants in the finite-particle target estimate, and therefore
in the resulting cutoff bound, may diverge as \(h\downarrow0\).

\subsubsection{A contractive regime favoring coupling}
\label{sec:coupling-better-example}

Let

$$
    \pi=\mu_0=\mathcal N(0,I_d),
    \qquad
    V(x)=\frac12\|x\|^2,
$$

and take the constant kernel \(k(x,y)\equiv\beta>0\).  In this case,

$$
    K_{\mathrm{amp}}=K_{\mathrm{lip}}=\beta,
    \qquad
    K_x=0.
$$

Moreover, \(v_\pi=0\), so \(\mu_t=\pi\) and we may again take
\(\ell=0\).

Let \(Y_1,\ldots,Y_N\) be synchronously coupled independent nonlinear
copies.  They are stationary Ornstein--Uhlenbeck processes:

$$
    dY_i(t)
    =
    -\varepsilon Y_i(t)\,dt
    +\sqrt{2\varepsilon}\,dW_i(t).
$$

The particle system satisfies

$$
    dX_i^N(t)
    =
    \bigl\{-\beta\overline X^N(t)-\varepsilon X_i^N(t)\bigr\}\,dt
    +\sqrt{2\varepsilon}\,dW_i(t).
$$

Writing \(Z_i=X_i^N-Y_i\),
$$
    \dot Z_i(t)
    =
    -(\varepsilon+\beta)Z_i(t)-\beta\overline Y^N(t),
    \qquad Z_i(0)=0.
$$
Since
$$
    \E\|\overline Y^N(t)\|^2=\frac dN,
$$

Minkowski's inequality gives

$$
    \sup_{t\geq0}\E\|Z_i(t)\|^2
    \leq
    \frac{\beta^2d}{(\varepsilon+\beta)^2N}.
$$

Consequently, for every fixed \(k\),

$$
    \sup_{t\geq0}
    W_2^2(P_{N,t}^{(k)},\mu_t^{\otimes k})
    \leq
    \frac{k\beta^2d}{(\varepsilon+\beta)^2N}.
$$

Thus the direct synchronous coupling estimate yields the optimal \(N^{-1}\) error bound. By contrast, under the independent product law,
$$
    G_t^N=N\beta^2\|\overline Y^N(t)\|^2,
    \qquad
    \E e^{\lambda G_t^N}
    =(1-2\lambda\beta^2)^{-d/2},
    \qquad
    0<\lambda<\frac{1}{2\beta^2}.
$$
Taking, for example, \(\lambda=(4\beta^2)^{-1}\) gives
$$
    L_{\mathrm{ent}}=\kappa=\frac{2\beta^2}{\varepsilon}.
$$

With this particular admissible choice, the general
moving-product entropy cutoff estimate yields the fixed-marginal rate $ N^{-\varepsilon^2/(\beta^2+\varepsilon^2)}.$ More generally, any fixed \(0<\lambda<(2\beta^2)^{-1}\) gives the cutoff exponent $\frac{4\varepsilon^2\lambda}
         {1+4\varepsilon^2\lambda}
    <1.$ Thus the synchronous-coupling bound above is sharper than the bound obtained from the general moving-product entropy cutoff argument in this example. 

More generally, consider a possibly nonstationary
kernel family \(k_h\) such that, uniformly in \(h\),
$$
    \sup_{t\geq0}\int e^{a\|x\|^2}\mu_t^h(dx)<\infty,
    \qquad
    \mu_t^h\in T_2(Ce^{\ell_0t}),
    \qquad
    K_{\mathrm{amp}}(h)\leq C,
$$

while \(K_{\mathrm{lip}}(h)\asymp h^{-p}\) for some \(p>0\).
The transportation inequality assumption allows us to take \(\ell=\ell_0\) directly, without using the generic \(K_x\)-dependent propagation bound.  Hence
$
    L_{\mathrm{ent}}\leq C_\varepsilon+\ell_0
$
is uniform in \(h\), whereas the general coupling bound contains a term of order \(\varepsilon^{-1}h^{-2p}\).
Thus the same separation holds at the level of the cutoff powers obtained from the two arguments, without assuming that \(\mu_t^h\) is stationary.

Conversely, suppose a synchronous coupling satisfies

$$
    D_N'(t)\leq-cD_N(t)+\frac{C}{N}G_t^N,
    \qquad
    D_N(0)=0,
    \qquad
    \sup_{N,t}\E G_t^N<\infty,
$$

for some \(c>0\).  This can happen, for example, when
\(V(x)=\|x\|^2/2\), \(k\equiv\beta>0\), and
\(\mu_0=\mathcal N(m_0,\Sigma_0)\), where it is not necessarily equal to
\(\pi\).  A direct integration gives

$$
    \sup_{t\geq0}\E D_N(t)\leq\frac{C}{cN}.
$$
Hence this coupling estimate that exploits the structure of a model gives a uniform fixed-marginal \(N^{-1}\) bound, whereas the general entropy cutoff bound above has an exponent strictly smaller than \(1\) whenever \(L_{\mathrm{ent}}>0\).

\section{Concrete applications: smooth stationary kernels with convex
and nonconvex targets}
\label{sec:smooth-stationary-kernels}

The following corollary shows that our assumptions cover a broad class of
standard stationary kernels, which includes the Gaussian RBF kernel.

\begin{corollary}
\label{cor:smooth-stationary-kernels}
Let \(m,L>0\), and suppose that \(V\in C^3(\mathbb R^d)\) satisfies
\[
  m I_d \preceq \nabla^2 V(x) \preceq L I_d,
  \qquad x\in\mathbb R^d.
\]
Let
\[
  k(x,y)=\Psi(x-y),
\]
where \(\Psi\in C_b^4(\mathbb R^d)\) is even and \(k\) is positive
definite. Let \(\mu_0\) have a smooth positive density satisfying
\[
  \KL(\mu_0\Vert\pi)<\infty.
\]
Then Assumptions~\ref{ass:core-target} and~\ref{ass:ksd-kernel} hold,
with target log-Sobolev constant \(\alpha=m\). The kernel regularity
and diagonal-correction conditions in Assumption~\ref{ass:W2} also
hold. Consequently, Proposition~\ref{prop:wellposed} and the
target-convergence and uniform-second-moment results of
Sections~\ref{sec:target-KSD} and~\ref{sec:target-W2} apply.

If, in addition,
\[
  \int_{\mathbb R^d} e^{a_0\|x\|^2}\,\mu_0(dx)<\infty
\]
for some \(a_0>0\), then Assumption~\ref{ass:concentration} holds.  The
synchronous coupling and moving-product entropy bounds, and consequently the finite-time and uniform-in-time PoC results for the empirical measure in KSD, $W_2$, and finite-marginal PoC, therefore apply. If one further assumes
\(\mu_0\in T_2(C_0)\), the alternative entropy-derived Wasserstein bounds in
Section~\ref{app:entropy-W2} apply as well.
\end{corollary}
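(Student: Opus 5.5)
The plan is to verify each assumption directly from the hypotheses, one block at a time, and then invoke the earlier results.

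\emph{Assumption~\ref{ass:core-target}.} The bound $\|\nabla^2 V\|_{\mathrm{op}}\le L$ is immediate from $mI_d\preceq\nabla^2V\preceq LI_d$, so \eqref{eq:hessian-bound} holds with $L_V=L$. Since $\nabla^2V\succeq mI_d$, the Bakry--\'Emery criterion gives the LSI \eqref{eq:target-lsi} with $\alpha=m$. Strong convexity also gives $V(x)\ge V(x_*)+\frac m2\|x-x_*\|^2$ at the minimizer $x_*$. Hence $\pi$ has Gaussian tails and in particular a finite second moment. The initial-law conditions are hypotheses.

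\emph{Kernel assumptions.} Since $k(x,y)=\Psi(x-y)$ with $\Psi\in C_b^4$, every mixed partial derivative of $k$ up to order four is a signed derivative of $\Psi$ evaluated at $x-y$. Thus $k\in C_b^4(\R^d\times\R^d)$, and all derivatives listed in Assumption~\ref{ass:W2} are bounded. For the diagonal correction, evenness of $\Psi$ makes $\nabla\Psi$ odd, so $\nabla\Psi(0)=0$. This gives $\nabla_2k(x,x)=-\nabla\Psi(0)=0$ and $\Delta_2k(x,x)=\Delta\Psi(0)$. Therefore
\[
C^\star(x)=\Psi(0)\Delta V(x)-\Delta\Psi(0)\le \Psi(0)\,dL+|\Delta\Psi(0)|,
\]
where $\Psi(0)=k(x,x)\ge0$ by positive definiteness and $\Delta V\le dL$. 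Hence \eqref{eq:Cstar-bound} holds, as in Remark~\ref{rem:Cstar}. Symmetry and positive definiteness are assumed.

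\emph{Assumption~\ref{ass:concentration}.} I would write
\[
x\cdot\nabla V(x)=x\cdot\nabla V(0)+\int_0^1 x^\top\nabla^2V(sx)x\,ds\ge m\|x\|^2-\|\nabla V(0)\|\|x\|.
\]
Young's inequality then gives $x\cdot\nabla V(x)\ge \frac m2\|x\|^2-\|\nabla V(0)\|^2/(2m)$, which is \eqref{eq:dissipativity} with constants $m/2$ and $b$. The sub-Gaussian initial moment is the added hypothesis.

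\emph{Conclusion.} All the listed results then follow by invoking Proposition~\ref{prop:wellposed}, Sections~\ref{sec:target-KSD}--\ref{sec:target-W2}, Sections~\ref{sec:finite-time}--\ref{sec:cutoff}, and Section~\ref{sec:moving-entropy}. The $T_2$ route of Section~\ref{app:entropy-W2} requires only Assumption~\ref{ass:initial-T2} in addition.

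\emph{Main obstacle.} The only step that needs some care is the diagonal term: one must check the sign conventions for $\nabla_2k$ and $\Delta_2k$ on the diagonal and use evenness of $\Psi$. Everything else is routine.
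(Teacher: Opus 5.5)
Your proposal is correct and follows the paper's proof essentially step by step: Bakry--\'Emery gives the LSI with $\alpha=m$, $\Psi\in C_b^4$ gives all the kernel-derivative bounds, and evenness of $\Psi$ reduces $C^\star(x)$ to $\Psi(0)\Delta V(x)-\Delta\Psi(0)$, which is bounded above. Dissipativity with constants $m/2$ and $\|\nabla V(0)\|^2/(2m)$ then follows from the fundamental theorem of calculus plus Young's inequality. The only differences are cosmetic: you justify the finite second moment through the quadratic lower bound on $V$ around its minimizer, and you use $\Psi(0)\ge0$ from positive definiteness where the paper writes $|\Psi(0)|$.
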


\begin{proof}
The lower Hessian bound implies that \(e^{-V}\) is integrable and the target
\(\pi(dx)=Z^{-1}e^{-V(x)}\,dx\) has finite second moment. Moreover,
the Bakry--\'Emery criterion shows that \(\pi\) satisfies
\eqref{eq:target-lsi} with \(\alpha=m\). The upper Hessian bound gives
\(
    \sup_{x\in\R^d}
    \|\nabla^2V(x)\|_{\mathrm{op}}
    \leq L,
\)
so Assumption~\ref{ass:core-target} follows.

Since \(\Psi\in C_b^4(\mathbb R^d)\), the kernel
\(k(x,y)=\Psi(x-y)\) belongs to
\(C_b^4(\mathbb R^d\times\mathbb R^d)\). In particular, it satisfies
all kernel-derivative bounds in Assumptions~\ref{ass:ksd-kernel}
and~\ref{ass:W2}.

Since \(\Psi\) is even, \(\nabla\Psi(0)=0\). Hence, on the diagonal,
\[
  k(x,x)=\Psi(0),\qquad
  \nabla_2 k(x,x)=0,\qquad
  \Delta_2 k(x,x)=\Delta\Psi(0).
\]
It follows from the definition of \(C^\star\) that
\[
  C^\star(x)
  =\Psi(0)\Delta V(x)-\Delta\Psi(0)
  \le dL\,|\Psi(0)|+|\Delta\Psi(0)|<\infty.
\]
Hence the diagonal-correction condition
\eqref{eq:Cstar-bound} holds.

We finally verify the dissipativity condition \eqref{eq:dissipativity} needed for the
strong-concentration regime. By the fundamental theorem of calculus and
Young's inequality,
\[
\begin{aligned}
    x\cdot\nabla V(x)
    &=
    x\cdot\nabla V(0)
    +\int_0^1
      x^\top\nabla^2V(sx)x\,ds \\
    &\geq
    -\|\nabla V(0)\|\,\|x\|+m\|x\|^2 
    \geq
    \frac{m}{2}\|x\|^2
    -\frac{\|\nabla V(0)\|^2}{2m}.
\end{aligned}
\]
Thus \eqref{eq:dissipativity} holds, with dissipativity constant
\(m/2\) and
\(
    b=\frac{\|\nabla V(0)\|^2}{2m}.
\)
Therefore the additional sub-Gaussian assumption on \(\mu_0\) implies
Assumption~\ref{ass:concentration}.  The conclusions follow from
Sections~\ref{sec:finite-time}, \ref{sec:cutoff} and \ref{sec:moving-entropy}, with the optional
entropy-to-Wasserstein conclusion supplied by
Section~\ref{app:entropy-W2}.
\end{proof}

\begin{corollary}[A nonconvex target covered by the theory]
\label{cor:nonconvex-example}
Let \(m,A,\omega>0\) satisfy \(A\omega^2>m\), and set
\[
    V(x)=\frac m2\|x\|^2+A\cos(\omega x_1).
\]
Let \(k(x,y)=\Psi(x-y)\), where
\(\Psi\in C_b^4(\mathbb R^d)\) is even and \(k\) is positive definite.
Suppose that \(\mu_0\) has a smooth positive density and satisfies
\[
    \KL(\mu_0\|\pi)<\infty,
    \qquad
    \int e^{a_0\|x\|^2}\,\mu_0(dx)<\infty
\]
for some \(a_0>0\).  Then Assumptions~\ref{ass:core-target} and
\ref{ass:concentration} hold, as do both kernel assumptions.
Consequently, all main propagation of chaos and uniform-in-time conclusions
apply to this nonconvex target.  
\end{corollary}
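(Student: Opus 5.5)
The plan is to verify each assumption directly for \(V(x)=\frac m2\|x\|^2+A\cos(\omega x_1)\), treating the kernel part exactly as in Corollary~\ref{cor:smooth-stationary-kernels}. The condition \(A\omega^2>m\) only ensures that \(V\) is genuinely nonconvex: at points with \(\cos(\omega x_1)=1\), the \((1,1)\) Hessian entry equals \(m-A\omega^2<0\). Bakry--\'Emery therefore does not apply, and the LSI will come from a bounded perturbation argument instead.

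\emph{Step 1: regularity and Hessian bound.} Since \(V\in C^\infty\), we have
\[
\nabla^2V(x)=mI_d-A\omega^2\cos(\omega x_1)\,e_1e_1^\top,
\]
so \eqref{eq:hessian-bound} holds with \(L_V=m+A\omega^2\).

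\emph{Step 2: LSI and second moment.} Write \(V=V_0+U\) with \(V_0=\frac m2\|x\|^2\) and \(U=A\cos(\omega x_1)\). Then \(\operatorname{osc}U\le 2A\). The Gaussian \(\pi_0\propto e^{-V_0}\) satisfies LSI with constant \(m\) by Bakry--\'Emery. The Holley--Stroock perturbation lemma then gives \eqref{eq:target-lsi} with \(\alpha=me^{-4A}>0\). Since \(e^{-V}\le e^{A}e^{-V_0}\), the measure \(\pi\) has Gaussian tails and hence a finite second moment. The hypotheses on \(\mu_0\) supply the rest of Assumption~\ref{ass:core-target}.

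\emph{Step 3: dissipativity.} We have \(x\cdot\nabla V(x)=m\|x\|^2-A\omega x_1\sin(\omega x_1)\ge m\|x\|^2-A\omega\|x\|\). Young's inequality then gives
\[
x\cdot\nabla V(x)\ge \tfrac m2\|x\|^2-\frac{A^2\omega^2}{2m},
\]
which is \eqref{eq:dissipativity}. Together with the assumed sub-Gaussian moment of \(\mu_0\), this yields Assumption~\ref{ass:concentration}.

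\emph{Step 4: kernel conditions.} These repeat the argument of Corollary~\ref{cor:smooth-stationary-kernels}. Because \(\Psi\in C_b^4\), both the derivative bounds of Assumption~\ref{ass:ksd-kernel} and those of Assumption~\ref{ass:W2} hold. Because \(\Psi\) is even, \(\nabla\Psi(0)=0\), and therefore
\[
C^\star(x)=\Psi(0)\Delta V(x)-\Delta\Psi(0)\le d\,L_V|\Psi(0)|+|\Delta\Psi(0)|,
\]
using \(|\Delta V|\le dL_V\). This gives \eqref{eq:Cstar-bound}.

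With all assumptions checked, the conclusions follow from Sections~\ref{sec:finite-time}, \ref{sec:cutoff} and \ref{sec:moving-entropy}. The only non-routine step is the LSI in Step 2, and it is handled by invoking Holley--Stroock; the resulting constant \(\alpha=me^{-4A}\) degrades with \(A\) but stays positive, which is all that is needed.
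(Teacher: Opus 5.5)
Your proposal is correct and follows essentially the same route as the paper's proof: the explicit Hessian bound, the LSI via a Holley--Stroock perturbation of the Gaussian $\pi_0$, the same Young's-inequality dissipativity bound with constants $m/2$ and $A^2\omega^2/(2m)$, and the stationary-kernel computation of $C^\star$. One minor point: Holley--Stroock with $\operatorname{osc}U=2A$ actually gives $\alpha=me^{-\operatorname{osc}U}=me^{-2A}$, which is the value the paper records, so your $\alpha=me^{-4A}$ is valid but not sharp and does not affect the corollary.
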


\begin{proof}
The Hessian is
\[
    \nabla^2V(x)
    =mI_d-A\omega^2\cos(\omega x_1)e_1e_1^\top.
\]
Thus
\[
    \sup_x\|\nabla^2V(x)\|_{\mathrm{op}}
    \le m+A\omega^2,
\]
whereas
\[
    e_1^\top\nabla^2V(0)e_1=m-A\omega^2<0.
\]
Hence the potential satisfies the bounded-Hessian condition but is not
convex.

Let \(\gamma_m\) denote the centered Gaussian law with potential
\(m\|x\|^2/2\). By the Gaussian log-Sobolev inequality (equivalently,
the Bakry--\'Emery criterion),
\[
    I(\nu\|\gamma_m)
    \ge 2m\,\KL(\nu\|\gamma_m).
\]
Moreover,
\[
    \pi(dx)
    \propto
    e^{-A\cos(\omega x_1)}\,\gamma_m(dx),
\]
so \(\pi\) is a bounded perturbation of \(\gamma_m\), with perturbation
\(A\cos(\omega x_1)\) of oscillation \(2A\).
The Holley--Stroock perturbation lemma
\cite{holleyStroock1987} implies that
\[
    I(\nu\|\pi)
    \ge 2m e^{-2A}\KL(\nu\|\pi).
\]
The same bounded-perturbation comparison shows directly that the normalizing
constant is finite and that \(\pi\) has a finite second moment.
In particular, Assumption~\ref{ass:core-target} holds with
\(\alpha=m e^{-2A}\).

Moreover,
\begin{align*}
    x\cdot\nabla V(x)
    &=m\|x\|^2-A\omega x_1\sin(\omega x_1)\\
    &\ge m\|x\|^2-A\omega|x_1|
    \ge\frac m2\|x\|^2-\frac{A^2\omega^2}{2m}.
\end{align*}
Thus Assumption~\ref{ass:concentration} holds, with dissipativity constant
\(m/2\).  Finally,
\[
    \Delta V(x)=dm-A\omega^2\cos(\omega x_1)
\]
is bounded, and hence
\[
    C^\star(x)
    =\Psi(0)\{dm-A\omega^2\cos(\omega x_1)\}
     -\Delta\Psi(0)
\]
is bounded above.  The remaining kernel conditions follow from
\(\Psi\in C_b^4\), as in
Corollary~\ref{cor:smooth-stationary-kernels}.
\end{proof}

\section{Conclusion and discussion}\label{sec:future}

We remark on the following possible extensions of the current work.

\subsection{Possible finite-moment localization}\label{weakprob}

When finiteness of exponential moments on the initial distribution is unavailable, \emph{uniform-in-time PoC estimates in probability}, in the spirit of \cite{balasubramanian2026}, can be sought by localization.  Set
\[
  \bar M_N(t):=\frac1N\sum_{i=1}^N\|\bar X_i(t)\|^2.
\]
Proposition~\ref{prop:second-moments} gives
\[
  \sup_{N\ge1,\,T>0}
  \E\left[\frac1T\int_0^T\bar M_N(s)\,ds\right]<\infty.
\]
Thus, for each $R>0$, the event
\[
  \left\{\frac1T\int_0^T\bar M_N(s)\,ds\le R\right\}
\]
has a complement of probability at most $C/R$, uniformly in $N$ and $T$.
On this event, the Gr\"onwall factor in \eqref{eq:DN-differential} is bounded
by $\exp\{C_R T\}$.  A localized estimate for the fluctuation
$G_t^N(\bar X(t))$ up to the cutoff time could therefore yield
uniform-in-time PoC in probability under finite-moment
assumptions.  This requires a maximal or stopped fluctuation estimate and is
not pursued here.  Such a metric localization would also not by itself
recover the fixed-marginal relative entropy and total variation estimates
provided by moving-product entropy.

\subsection{Dependence on the regularization strength}

Our estimates are proved for each
fixed $\varepsilon>0$ and are not uniform as $\varepsilon\downarrow0$.  The relative entropy decays at (exponential) rate
\(2\varepsilon\alpha\), while the resulting \(W_2\) and KSD bounds
decay at rate \(\varepsilon\alpha\), and the finite-time coupling and entropy
exponents are both of order $\varepsilon^{-1}$. The admissible
concentration constants may also depend on $\varepsilon$.  Consequently, the polynomial exponents for uniform-in-time PoC obtained from the cutoff
argument degenerate in the small-noise limit.

A possible way to avoid the degeneration coming from the convergence to the target is through a Stein log-Sobolev inequality. Writing
\(\mathcal I_k(\rho\|\pi)=\KSD^2(\rho\|\pi)\) for the Stein--Fisher
information, suppose that
\[
    \mathcal I_k(\rho\|\pi)
    \geq 2\alpha_{\mathrm S}\KL(\rho\|\pi).
\]
The mean-field dissipation identity would then give
\[
    \frac{d}{dt}\KL(\mu_t\|\pi)
    \leq
    -2(\alpha_{\mathrm S}+\varepsilon\alpha)
      \KL(\mu_t\|\pi),
\]
so the target-convergence rate would remain nondegenerate as
\(\varepsilon\downarrow0\), provided that
\(\alpha_{\mathrm S}>0\) is independent of \(\varepsilon\). Stein
log-Sobolev inequalities and related coercivity estimates have
recently been established for kernels with sufficiently strong
high-frequency behavior; see, for example,
\cite{carrilloSkrzeczkowskiWarnett2024,
chizatColomboColomboFernandezReal2026}.

To obtain an \(\varepsilon\)-uniform cutoff theorem, however, one
would additionally need a corresponding long-time target estimate for
the finite-particle system. A mean-field Stein log-Sobolev inequality
alone does not provide this; a finite-particle coercivity estimate,
a renormalized Stein-dissipation argument, or a suitable stability
theory near the target would still be required. Moreover, an \(\varepsilon\)-uniform finite-time PoC estimate would, for instance,
require controlling the random moment-dependent Gr\"onwall factor in \eqref{eq:DN-differential} uniformly in $\varepsilon$; such control does not follow directly from a
mean-field Stein log-Sobolev inequality. 

These directions generally lead beyond the bounded smooth kernels
considered here. Kernels with the required high-frequency coercivity
are typically of Riesz, Bessel, or Coulomb type and may be singular on
the diagonal. Their treatment would require new control of
self-interactions and empirical fluctuations, potentially through
removal or renormalization of the diagonal terms; see also
\cite{teolisDeHoop2026}.

\subsection*{Acknowledgements}

SB is partially supported by the NSF-CAREER award DMS-2141621 and the NSF RTG grant DMS-2134107. DK is supported by NSF CAREER award 2340762 of Prof. F. Hoffmann at California Institute of Technology.

AI-assisted tools were used in a limited capacity to help polish the writing and to check parts of the exposition and proofs. The authors independently verified all mathematical arguments and are responsible for the final content.

\appendix

\section{Entropy evolution equation}
\label{app:entropy-chain-rule}

In the following, for $\alpha\in(0,1)$, we write
\[
    b\in C_{\mathrm{loc}}^{\alpha/2,\,1+\alpha}
    \bigl((0,T]\times\mathbb R^{m};\mathbb R^{m}\bigr)
\]
if $b$ is continuously differentiable in the spatial variable and, for
every $\tau\in(0,T)$ and every compact set
$K\subset\mathbb R^{m}$,
\[
\begin{aligned}
    \sup_{(t,x)\in[\tau,T]\times K}
    \bigl(|b(t,x)|+|\nabla_xb(t,x)|\bigr)&<\infty,\\
    [b]_{t;\alpha/2,[\tau,T]\times K}
    +[\nabla_xb]_{t;\alpha/2,[\tau,T]\times K}
    +[\nabla_xb]_{x;\alpha,[\tau,T]\times K}&<\infty,
\end{aligned}
\]
where
\[
    [f]_{t;\beta,[\tau,T]\times K}
    :=
    \sup_{\substack{s,t\in[\tau,T],\,s\neq t\\x\in K}}
    \frac{|f(t,x)-f(s,x)|}{|t-s|^\beta}
\]
and
\[
    [f]_{x;\beta,[\tau,T]\times K}
    :=
    \sup_{\substack{t\in[\tau,T]\\x,y\in K,\,x\neq y}}
    \frac{|f(t,x)-f(t,y)|}{|x-y|^\beta}.
\]

 \begin{lemma}[Finite-dimensional relative entropy chain rule]
\label{lem:finite-dimensional-entropy}
Fix \(N\geq1\).  Let \(P_t^N\) and \(Q_t^N\) be the laws of two
diffusions on \((\R^d)^N\) with common diffusion coefficient
\(\sqrt{2\varepsilon}I\) and respective drifts \(b_t^N\) and
\(\bar b_t^N\), that is, with Fokker-Planck equations given by
\begin{equation}\label{eq:finite-dimensional-FP-pair}
    \partial_tP_t^N
    =
    -\nabla\cdot(b_t^NP_t^N)+\varepsilon\Delta P_t^N,
    \qquad
    \partial_tQ_t^N
    =
    -\nabla\cdot(\bar b_t^NQ_t^N)+\varepsilon\Delta Q_t^N,
\end{equation}
where \(\varepsilon>0\).
Suppose that \(P_0^N,Q_0^N\in\mathcal P_2((\R^d)^N)\)
and that, for every \(T<\infty\), \(b_t^N\) and
\(\bar b_t^N\) are in $C_{\mathrm{loc}}^{\alpha/2,\,1+\alpha}
    \bigl((0,T]\times\mathbb R^{dN};\mathbb R^{dN}\bigr)$ for some $\alpha \in (0,1)$, and
satisfy
\[
    \|b_t^N(X)\|+\|\bar b_t^N(X)\|
    \leq C_{N,T}(1+\|X\|),
    \qquad 0\leq t\leq T.
\]
If \(\KL(P_0^N\|Q_0^N)<\infty\), then 
\(u\mapsto\KL(P_u^N\|Q_u^N)\) is locally absolutely continuous and,
for \(0\leq s\leq t\),
\begin{align}
    \KL(P_t^N\|Q_t^N)-\KL(P_s^N\|Q_s^N)
    &=
    \int_s^t\int
       \left\langle
          b_u^N-\bar b_u^N,
          \nabla\log\frac{dP_u^N}{dQ_u^N}
       \right\rangle
       dP_u^N\,du
    \notag\\
    &\quad
    -\varepsilon\int_s^t
       I(P_u^N\|Q_u^N)\,du.
\label{eq:finite-dimensional-entropy-chain-rule}
\end{align}
\end{lemma}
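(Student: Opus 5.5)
The plan is the standard route: parabolic regularity for the densities, a cutoff in time and space, and an \(L\log L\)/Fisher-information approximation argument. Write \(p_u:=dP_u^N/dX\) and \(q_u:=dQ_u^N/dX\).

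\emph{Step 1 (regularity).} The drifts are locally Hölder in time and \(C^{1+\alpha}\) in space on \((0,T]\), and the diffusion is nondegenerate. Interior Schauder theory for the Fokker--Planck equations \eqref{eq:finite-dimensional-FP-pair} in divergence form then gives \(p,q\in C^{1,2}_{\mathrm{loc}}((0,T]\times\R^{dN})\). The parabolic Harnack inequality gives \(p,q>0\) for \(u>0\). The linear-growth bound, together with \(P_0^N,Q_0^N\in\mathcal P_2\), gives second-moment bounds uniform on \([0,T]\) (as in Proposition~\ref{prop:wellposed}). I would also want Gaussian-type lower bounds on \(q_u\), of the form \(\log q_u(X)\ge -C_T(1+\|X\|^2)\) on \([\tau,T]\); these come from the linear-growth drift by an Aronson/Girsanov comparison. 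They make \(\int|\log q_u|\,dP_u\) finite.

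\emph{Step 2 (localized identity).} Fix a smooth cutoff \(\chi_R(X)=\chi(X/R)\), \(0\le\chi\le1\), and \(0<\tau\le s\le t\). Set \(h_u=\log(p_u/q_u)\) and differentiate \(\int\chi_R\,p_u h_u\,dX\), using \(\partial_u p\) and \(\partial_u q\) from the two equations. The term \(\int\chi_R p\,\partial_u\log q\) combines with \(\int\chi_R \partial_u p\). After integrating by parts (no boundary terms, thanks to the compact support), one gets the integrand
\[
\int\chi_R\bigl\{\langle b-\bar b,\nabla h\rangle-\varepsilon\|\nabla h\|^2\bigr\}\,dP_u+\mathcal E_R(u).
\]
The remainder \(\mathcal E_R\) involves \(\nabla\chi_R\) multiplied by \(p\,h\,(b-\bar b)\), \(\varepsilon p\nabla h\), \(\varepsilon h\nabla p\), and \(p\,\bar b\,(p/q)\)-type pieces written in terms of \(\nabla h\). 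Integrating in \(u\) yields the localized version of \eqref{eq:finite-dimensional-entropy-chain-rule} on \([s,t]\).

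\emph{Step 3 (removing the cutoff; the main obstacle).} Since \(|\nabla\chi_R|\le C/R\) is supported on \(\{R\le\|X\|\le2R\}\), linear growth gives \(|\nabla\chi_R|\,\|b-\bar b\|\le C\,\mathbf 1_{\{\|X\|\ge R\}}\). So the remainder is controlled by \(\int_s^t\int_{\|X\|\ge R}(|h|+\|\nabla h\|)\,dP_u\,du\). The \(|h|\) part vanishes as \(R\to\infty\) by Step 1 and dominated convergence. For the gradient part, Cauchy--Schwarz bounds it by \(P_u(\|X\|\ge R)^{1/2}\bigl(\int_s^t I(P_u\|Q_u)\,du\bigr)^{1/2}\), so a priori finiteness of the time-integrated Fisher information is needed. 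I would obtain this first from the localized identity itself. Young's inequality gives \(\langle b-\bar b,\nabla h\rangle\le\tfrac\varepsilon2\|\nabla h\|^2+\tfrac1{2\varepsilon}\|b-\bar b\|^2\), and the last term is integrable by the moment bounds. Hence
\[
\tfrac\varepsilon2\int_s^t\!\!\int\chi_R\|\nabla h\|^2\,dP_u\,du\le \KL_R(s)-\KL_R(t)+C+\text{remainders}.
\]
A bootstrap (the remainders are \(o(1)\) plus a small multiple of the localized Fisher information) and monotone convergence then give \(\int_s^t I\,du<\infty\), finite uniformly as \(s\downarrow0\) given \(\KL(P_0^N\|Q_0^N)<\infty\). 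This self-referential closing is the delicate point. An alternative is to first prove the estimate for smooth positive mollified initial data, then pass to the limit using lower semicontinuity of \(\KL\) and \(I\), recovering the equality from the inequality plus continuity.

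\emph{Step 4 (conclusion).} With \(R\to\infty\), every term converges and \eqref{eq:finite-dimensional-entropy-chain-rule} holds for \(\tau\le s\le t\). Letting \(s\downarrow0\) requires \(\KL(P_s\|Q_s)\to\KL(P_0\|Q_0)\). Lower semicontinuity together with weak continuity gives \(\liminf_{s\downarrow0}\KL(P_s\|Q_s)\ge\KL(P_0\|Q_0)\). The reverse bound comes from the data-processing inequality applied to the Girsanov path-space representation: \(\KL(P_s\|Q_s)\le\KL(P_0\|Q_0)+\tfrac1{4\varepsilon}\int_0^s\E\|b-\bar b\|^2\,du\), and the integral tends to \(0\). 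Local absolute continuity follows because the right-hand side is an integral of an \(L^1_{\mathrm{loc}}\) function of \(u\).
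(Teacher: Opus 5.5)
\textbf{There is a genuine gap in your Step 3.} The diffusion part of your cutoff remainder consists of $\varepsilon\,p\,\nabla h\cdot\nabla\chi_R$ and $\varepsilon\,h\,\nabla p\cdot\nabla\chi_R$. The second piece is not dominated by $\int_{\|X\|\ge R}(|h|+\|\nabla h\|)\,dP_u$, contrary to your claim. Expanding $\nabla p=p\nabla h+(p/q)\nabla q$, it produces $h\,p\,\nabla h$ and $h\,p\,\nabla\log q$. Controlling these would need $L^2(P_u)$ bounds on $h$ and control of $\nabla\log q$, and you have neither from second moments alone.

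Separately, you leave the Fisher-information bootstrap for the $p\nabla h$ piece open. With a generic cutoff it does not close: on $A_R=\{R\le\|X\|\le 2R\}$, the quantity $R^{-1}\int_{A_R}\|\nabla h\|\,dP_u$ is not controlled by $\int\chi_R\|\nabla h\|^2\,dP_u$ where $\chi_R$ is small. You would need something like $|\nabla\chi_R|\le C\sqrt{\chi_R}/R$.

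\textbf{The missing observation.} Your two diffusion pieces add up to $\varepsilon\nabla(ph)\cdot\nabla\chi_R$, because $(1+h)\nabla p-(p/q)\nabla q=\nabla(ph)$. One more integration by parts turns them into $\varepsilon\int p\,h\,\Delta\chi_R\,dX$, and this is what the paper does. The full remainder is then bounded by $C_{N,T}\int_s^t\int_{A_R}(p|h|+p)\,dX\,du$, with no gradients at all. It tends to zero by dominated convergence: the inequality $p|h|\le ph+\frac2e q$ gives $\int p|h|\,dX\le\KL(P_u^N\|Q_u^N)+\frac2e$. This is bounded on $[0,T]$ by the Girsanov estimate you already use in Step 4. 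This also makes your Gaussian lower bounds on $q_u$ unnecessary; such bounds are not routine for unbounded, only locally regular drifts.

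For the principal terms, complete the square,
\[
\langle a,\nabla h\rangle-\varepsilon\|\nabla h\|^2
=\frac{\|a\|^2}{4\varepsilon}
-\varepsilon\Bigl\|\nabla h-\frac{a}{2\varepsilon}\Bigr\|^2,
\qquad a:=b_u^N-\bar b_u^N .
\]
Apply dominated convergence to the first term (using linear growth and the finite-time moments) and monotone convergence to the nonnegative second term. Finiteness of $\int_s^t I(P_u^N\|Q_u^N)\,du$ then comes out of the identity rather than being needed as input. Neither the bootstrap nor the mollified-initial-data alternative is required. Apart from this, your Steps 1, 2 and 4 agree with the paper's argument.
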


\begin{proof}
The linear-growth bounds and the localized It\^o--Gr\"onwall argument
give, for every \(T<\infty\),
\begin{equation}\label{eq:unifsec}
    \sup_{0\leq u\leq T}
    \left\{
       \int\|X\|^2\,dP_u^N
       +
       \int\|X\|^2\,dQ_u^N
    \right\}
    <\infty.
\end{equation}
For every $0<s<T$, uniform ellipticity and the standard positivity
results for transition densities imply that, for every
$u\in[s,T]$, the measures $P_u^N$ and $Q_u^N$ admit strictly positive
densities $p_u^N$ and $q_u^N$, respectively. Moreover, the assumed
local regularity of the drifts and interior parabolic Schauder
estimates imply that, for every compact
$K\subset(\mathbb R^d)^N$,
\[
    (u,X)\longmapsto p_u^N(X),
    \qquad
    (u,X)\longmapsto q_u^N(X),
\]
belong to
\[
    C^{1+\alpha/2,\,2+\alpha}([s,T]\times K)
    \subset C^{1,2}([s,T]\times K).
\]
 Set
\[
    r_u^N:=\frac{p_u^N}{q_u^N},
    \qquad
    a_u^N:=b_u^N-\bar b_u^N.
\]
We will obtain \eqref{eq:finite-dimensional-entropy-chain-rule} through a localization argument.

Fix \(0<s<t\) and
let \(\chi_R\) be a smooth cutoff that equals one on the ball of radius
\(R\), vanishes outside the ball of radius \(2R\), and satisfies
\[
    \|\nabla\chi_R\|_\infty\leq C R^{-1},
    \qquad
    \|\nabla^2\chi_R\|_\infty\leq C R^{-2}.
\]
Define
\[
    \mathcal H_{N,R}(u)
    :=
    \int\chi_Rp_u^N\log r_u^N\,dX.
\]
Differentiating gives
\begin{equation}\label{eq:localized-entropy-first}
    \mathcal H_{N,R}'(u)
    =
    \int\chi_R(\partial_up_u^N)(1+\log r_u^N)\,dX
    -
    \int\chi_Rr_u^N\partial_uq_u^N\,dX.
\end{equation}

Substituting \eqref{eq:finite-dimensional-FP-pair}, the drift
contribution to \eqref{eq:localized-entropy-first} is
\begin{align*}
    &-\int
       \chi_R\nabla\cdot(b_u^Np_u^N)(1+\log r_u^N)\,dX
    +
    \int
       \chi_Rr_u^N\nabla\cdot(\bar b_u^Nq_u^N)\,dX
    \\
    &\quad=
    \int
       b_u^Np_u^N\cdot
       \left\{
          \chi_R\nabla\log r_u^N
          +(1+\log r_u^N)\nabla\chi_R
       \right\}\,dX
    -
    \int
       \bar b_u^Nq_u^N\cdot
       \left\{
          \chi_R\nabla r_u^N
          +r_u^N\nabla\chi_R
       \right\}\,dX
    \\
    &\quad=
    \int
       \chi_R
       \left\langle
          a_u^N,\nabla\log r_u^N
       \right\rangle
       p_u^N\,dX
    +
    \int
       \left\langle
          \bar b_u^N\log r_u^N
          +a_u^N(1+\log r_u^N),
          \nabla\chi_R
       \right\rangle
       p_u^N\,dX,
\end{align*}
where we used
\(
    \nabla r_u^N=r_u^N\nabla\log r_u^N.
\)

Similarly, the diffusion contribution is
\begin{align*}
    &\varepsilon\int
       \chi_R(\Delta p_u^N)(1+\log r_u^N)\,dX
    -
    \varepsilon\int
       \chi_Rr_u^N\Delta q_u^N\,dX
    \\
    &\quad=
    -\varepsilon\int
       \nabla p_u^N\cdot
       \left\{
          \chi_R\nabla\log r_u^N
          +(1+\log r_u^N)\nabla\chi_R
       \right\}\,dX
    +
    \varepsilon\int
       \nabla q_u^N\cdot
       \left\{
          \chi_R\nabla r_u^N
          +r_u^N\nabla\chi_R
       \right\}\,dX.
\end{align*}
Since
\[
    \nabla p_u^N
    =
    p_u^N\nabla\log r_u^N+r_u^N\nabla q_u^N,
\]
the terms not involving derivatives of \(\chi_R\) reduce to
\[
    -\varepsilon\int
       \chi_R
       \left\|\nabla\log r_u^N\right\|^2
       p_u^N\,dX.
\]
The remaining terms satisfy
\begin{align*}
    &-\varepsilon\int
       (1+\log r_u^N)\nabla p_u^N\cdot\nabla\chi_R\,dX
    +
    \varepsilon\int
       r_u^N\nabla q_u^N\cdot\nabla\chi_R\,dX
    \\
    &\quad=
    -\varepsilon\int
       \nabla(p_u^N\log r_u^N)\cdot\nabla\chi_R\,dX
=
    \varepsilon\int
       p_u^N\log r_u^N\,\Delta\chi_R\,dX.
\end{align*}
Consequently,
\begin{align}
    \mathcal H_{N,R}'(u)
    &=
    \int
       \chi_R
       \left\langle
          a_u^N,\nabla\log r_u^N
       \right\rangle
       p_u^N\,dX
    -
    \varepsilon\int
       \chi_R
       \left\|\nabla\log r_u^N\right\|^2
       p_u^N\,dX
    \notag\\
    &\quad+\mathcal R_{N,R}(u),
\label{eq:localized-entropy-exact}
\end{align}
where
\begin{align}
    \mathcal R_{N,R}(u)
    &:=
    \int
       \left\langle
          \bar b_u^N\log r_u^N
          +a_u^N(1+\log r_u^N),
          \nabla\chi_R
       \right\rangle
       p_u^N\,dX
    \notag\\
    &\quad+
    \varepsilon\int
       p_u^N\log r_u^N\,\Delta\chi_R\,dX.
\label{eq:localized-entropy-remainder}
\end{align}

We next remove the cutoff.  Let
\[
    A_R:=\{X:R\leq\|X\|\leq2R\}.
\]
The remainder is supported on \(A_R\).  The linear-growth assumption
gives, for \(u\leq T\),
\[
    \|a_u^N(X)\|+\|\bar b_u^N(X)\|
    \leq C_{N,T}(1+\|X\|).
\]
Since
\[
    (1+\|X\|)\|\nabla\chi_R(X)\|
    +|\Delta\chi_R(X)|
    \leq C
    \qquad\text{on }A_R,
\]
we obtain
\begin{equation}\label{eq:cutoff-remainder-bound}
    \int_s^t|\mathcal R_{N,R}(u)|\,du
    \leq
    C_{N,T}
    \int_s^t\int_{A_R}
       \left\{
          p_u^N|\log r_u^N|+p_u^N
       \right\}\,dX\,du.
\end{equation}
The elementary inequality
\[
    p_u^N|\log r_u^N|
    \leq
    p_u^N\log r_u^N+\frac{2}{e}q_u^N
\]
implies
\[
    \int p_u^N|\log r_u^N|\,dX
    \leq
    \KL(P_u^N\|Q_u^N)+\frac{2}{e}.
\]
Moreover, Girsanov's theorem yields
\begin{equation}\label{eq:girsanov-entropy-bound}
    \KL(P_u^N\|Q_u^N)
    \leq
    \KL(P_0^N\|Q_0^N)
    +
    \frac{1}{4\varepsilon}
    \int_0^u\int
       \|a_v^N\|^2\,dP_v^N\,dv.
\end{equation}
The drifts have linear growth, so the last integral is finite by the
finite-time second-moment estimate \eqref{eq:unifsec}.  Therefore
\[
    (u,X)\longmapsto
    p_u^N(X)|\log r_u^N(X)|+p_u^N(X)
\]
is integrable on
\([s,t]\times(\R^d)^N\).  Since
\(\mathbf 1_{A_R}(X)\to0\) pointwise, dominated convergence in
\eqref{eq:cutoff-remainder-bound} gives
\begin{equation}\label{eq:cutoff-remainder-vanishes}
    \int_s^t|\mathcal R_{N,R}(u)|\,du
    \longrightarrow0, \quad R \to \infty.
\end{equation}

It remains to pass to the limit in the two principal terms in \eqref{eq:localized-entropy-exact}.  Completing
the square gives
\begin{multline*}
    \int
       \chi_R
       \left\langle
          a_u^N,\nabla\log r_u^N
       \right\rangle
       p_u^N\,dX
    -
    \varepsilon\int
       \chi_R
       \left\|\nabla\log r_u^N\right\|^2
       p_u^N\,dX\\
    =
    \int \chi_R \frac{\|a_u^N\|^2}{4\varepsilon} p_u^N\,dX
    -
    \varepsilon \int \chi_R
    \left\|
       \nabla\log r_u^N-\frac{a_u^N}{2\varepsilon}
    \right\|^2 p_u^N\,dX.
\end{multline*}
The first term on the right is integrable by the linear drift growth and the finite-time moment
bound \eqref{eq:unifsec}.  Because \(\chi_R\uparrow1\), dominated convergence applies to
the first term and monotone convergence applies to the nonnegative
square in the second term.  Integrating
\eqref{eq:localized-entropy-exact} over \([s,t]\), using
\eqref{eq:cutoff-remainder-vanishes}, and letting \(R\to\infty\) gives
\begin{align}
    \KL(P_t^N\|Q_t^N)-\KL(P_s^N\|Q_s^N)
    &=
    \int_s^t\int
       \left\langle
          a_u^N,\nabla\log r_u^N
       \right\rangle
       p_u^N\,dX\,du
    \notag\\
    &\quad
    -
    \varepsilon\int_s^t\int
       \left\|\nabla\log r_u^N\right\|^2
       p_u^N\,dX\,du.
\label{eq:entropy-chain-rule-positive-time}
\end{align}
This proves the assertion for \(0<s\le t\).

Finally, \eqref{eq:girsanov-entropy-bound} and the finite-time moment
bound \eqref{eq:unifsec} give
\[
    \limsup_{s\downarrow0}
    \KL(P_s^N\|Q_s^N)
    \leq
    \KL(P_0^N\|Q_0^N).
\]
On the other hand, \(P_s^N\to P_0^N\) and \(Q_s^N\to Q_0^N\) weakly
as \(s\downarrow0\), so joint lower semicontinuity of relative entropy
gives the reverse inequality.  Hence
\[
    \lim_{s\downarrow0}
    \KL(P_s^N\|Q_s^N)
    =
    \KL(P_0^N\|Q_0^N).
\]
Letting \(s\downarrow0\) in
\eqref{eq:entropy-chain-rule-positive-time} proves
\eqref{eq:finite-dimensional-entropy-chain-rule} for \(s=0\) and
completes the proof.
\end{proof}

\section{Regularity results}\label{app:regularity}
The following lemma records the regularity properties of the particle and mean-field drifts needed for our purposes. 

\begin{lemma}[Regularity of the particle and mean-field drift]
\label{lem:nonlinear-drift-regularity}
Under Assumption~\ref{ass:core-target} and either kernel assumption, for
every \(T<\infty\),
\begin{equation}\label{eq:mf-time-increment}
    W_2(\mu_t,\mu_s)\le C_T|t-s|^{1/2},
    \qquad 0\le s,t\le T.
\end{equation}
Moreover,
\begin{align}
 &\|B(x,y)-B(x',y')\|
 +\|\nabla_1B(x,y)-\nabla_1B(x',y')\|
 \notag\\
 &\hspace{2cm}\le
 C_B(1+\min\{\|y\|,\|y'\|\})\left(\|x-x'\| + \|y-y'\|\right),
 \label{eq:B-weighted-y-lipschitz}
\end{align}
for all $x,x',y,y'\in\R^d$.

Consequently
\begin{equation}\label{eq:velocity-time-holder}
 \sup_{x\in\R^d}
 \left\{
   \|v_{\mu_t}(x)-v_{\mu_s}(x)\|
   +\|\nabla_xv_{\mu_t}(x)-\nabla_xv_{\mu_s}(x)\|
 \right\}
 \le C_T|t-s|^{1/2}.
\end{equation}
For every fixed \(N\), the configuration-space drifts in
\eqref{eq:particle-FP-JN} and \eqref{eq:product-FP-JN} therefore belong to
\[
 C_{\mathrm{loc}}^{\alpha/2,\,1+\alpha}
 \bigl((0,T]\times(\R^d)^N;(\R^d)^N\bigr),
 \qquad 0<\alpha<1,
\]
and have at most linear growth.
\end{lemma}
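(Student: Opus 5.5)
I would prove the three assertions of Lemma~\ref{lem:nonlinear-drift-regularity} in the order \eqref{eq:B-weighted-y-lipschitz}, then \eqref{eq:mf-time-increment}, then \eqref{eq:velocity-time-holder}, and finally assemble the H\"older regularity of the drifts. The plan treats the two kernel assumptions simultaneously, using only boundedness of \(k,\nabla_1k,\nabla_2k,\nabla^2_{12}k,\nabla^2_{22}k,\nabla^3_{112}k\) (which \(C_b^4\) also provides), together with \(\|\nabla^2V\|_{\mathrm{op}}\le L_V\) and \(V\in C^3\).

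\emph{Step 1: the weighted Lipschitz bound.} Write \(B(x,y)-B(x',y')=[B(x,y)-B(x',y)]+[B(x',y)-B(x',y')]\).
\begin{itemize}
\item For the \(x\)-increment, use \(\|k(x,y)-k(x',y)\|\le\|\nabla_1k\|_\infty\|x-x'\|\) and the boundedness of \(\nabla_{12}^2k\). This gives a bound \(C(1+\|\nabla V(y)\|)\|x-x'\|\le C(1+\|y\|)\|x-x'\|\).
\item For the \(y\)-increment, split \(k(x',y)\nabla V(y)-k(x',y')\nabla V(y')\) into \([k(x',y)-k(x',y')]\nabla V(y)+k(x',y')[\nabla V(y)-\nabla V(y')]\). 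The first piece is controlled by \(\|\nabla_2k\|_\infty(1+\|y\|)\|y-y'\|\), the second by \(\|k\|_\infty L_V\|y-y'\|\). The term \(\nabla_2k\) is Lipschitz through \(\nabla_{22}^2k\).
\item By symmetry of the roles of \(y\) and \(y'\) (telescope through either endpoint), the weight can be taken as \(1+\min\{\|y\|,\|y'\|\}\).
\item The \(\nabla_1B\) part is identical, one derivative higher. It needs bounded \(\nabla_1k,\nabla_{11}^2k\) (the latter entering through \(\nabla_1B=-\nabla_1k\otimes\nabla V+\nabla_{12}^2k\)), \(\nabla^2_{12}k\) and \(\nabla_{112}^3k\). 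I would check that each derivative used is listed in Assumption~\ref{ass:W2} or follows from \(C_b^4\); if \(\nabla_{11}^2k\) is not listed, I would invoke the symmetry \(\nabla_{11}^2k(x,y)=\nabla_{22}^2k(y,x)\).
\end{itemize}

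\emph{Step 2: time H\"older continuity in \(W_2\).} Couple \(\mu_t\) and \(\mu_s\) through the single path \(\bar X\), so that \(W_2^2(\mu_t,\mu_s)\le\E\|\bar X(t)-\bar X(s)\|^2\). Bound the drift integral by Cauchy--Schwarz, \((t-s)\int_s^t\E\|v_{\mu_u}(\bar X(u))-\varepsilon\nabla V(\bar X(u))\|^2du\). This is at most \(C_T(t-s)^2\) by the linear growth of the drift (the estimate \(\|v_\rho\|\le C(1+m_2(\rho)^{1/2})\)) and the finite-time second moments of Proposition~\ref{prop:wellposed}. The noise contributes \(2\varepsilon d(t-s)\). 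Together these give \eqref{eq:mf-time-increment}.

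\emph{Step 3: velocity H\"older continuity.} Take an optimal coupling \((Y,Y')\) of \(\mu_t,\mu_s\). Then
\[
v_{\mu_t}(x)-v_{\mu_s}(x)=\E[B(x,Y)-B(x,Y')],
\]
and similarly for \(\nabla_xv\) with \(\nabla_1B\). By \eqref{eq:B-weighted-y-lipschitz} and Cauchy--Schwarz this is at most \(C(1+m_2(\mu_s)^{1/2})W_2(\mu_t,\mu_s)\), uniformly in \(x\). Step 2 then gives \eqref{eq:velocity-time-holder}; differentiation under the integral is justified by the integrable majorant \(C(1+\|y\|)\).

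\emph{Step 4: assembly.} For fixed \(N\), the particle drift \(b^N\) is time-independent. It is \(C^1\), and its spatial derivative is locally Lipschitz, hence locally \(\alpha\)-H\"older, by Step 1 and \(V\in C^3\) (so that \(\nabla^2V\) is locally Lipschitz). The mean-field drift \(\bar b_{i,t}(X)=v_{\mu_t}(x_i)-\varepsilon\nabla V(x_i)\) is \(1/2\)-H\"older in time by Step 3, which suffices for exponent \(\alpha/2<1/2\) on bounded intervals. Spatially, \(\nabla_xv_{\mu_t}\) is Lipschitz by integrating the \(x\)-part of Step 1 against \(\mu_t\). Linear growth follows from \(\|B(x,y)\|\le K(1+\|y\|)\) together with the linear growth of \(\nabla V\).

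The \textbf{main obstacle} is Step 1, specifically the \(\nabla_1B\) increment in \(y\). It involves \(\nabla_1k\otimes\nabla V(y)\), and bounding its difference uniformly in \(x\) requires exactly the listed mixed derivatives and the careful \(\min\) weight. The remaining steps are routine.
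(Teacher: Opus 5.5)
Your proposal is correct and follows the paper's proof essentially step for step: the weighted Lipschitz bound for $B$ and $\nabla_1 B$ uses bounded kernel derivatives and $\|\nabla^2 V\|_{\mathrm{op}}\le L_V$, with the $\min$ weight obtained by exchanging roles; the $W_2$ time increment uses the single-path coupling of $\bar X$ and the finite-time moments; the velocity H\"older estimate uses an optimal coupling and Cauchy--Schwarz; and the assembly rests on $V\in C^3$ and the kernel assumptions. Your use of symmetry to get $\nabla_{11}^2 k$ (and likewise $\nabla_{122}^3 k$) bounded under Assumption~\ref{ass:W2} makes explicit what the paper calls ``higher derivative bounds''; note only that the local regularity of the particle drift also involves $\nabla_2 B$ through the derivatives in $x_j$, and this comes from $k\in C^3$ and $V\in C^3$ rather than from your Step 1.
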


\begin{proof}
Let \(\bar X\) be the nonlinear process solving \eqref{eq:nonlinear-sde}.  The finite-time second-moment
bound and the linear growth of its drift give
\[
 \E\|\bar X(t)-\bar X(s)\|^2
 \le
 2|t-s|\int_s^t
   \E\|v_{\mu_u}(\bar X(u))
          -\varepsilon\nabla V(\bar X(u))\|^2\,du
 +4\varepsilon d|t-s|
 \le C_T|t-s|.
\]
The joint law of \((\bar X(t),\bar X(s))\) is a coupling of
\((\mu_t,\mu_s)\), which proves \eqref{eq:mf-time-increment}.

Using \eqref{eq:B}, write
\begin{align*}
B(x,y)-B(x',y')
={}&-[k(x,y)-k(x',y')]\nabla V(y')
-k(x,y)\{\nabla V(y)-\nabla V(y')\}\\
&+\nabla_2k(x,y)-\nabla_2k(x',y').
\end{align*}
The bounded first derivatives of $k$ give
\[
  |k(x,y)-k(x',y')|
  \le C(\|x-x'\|+\|y-y'\|),
\]
while \eqref{eq:hessian-bound} gives
$\|\nabla V(y)-\nabla V(y')\|\le L_V\|y-y'\|$ and
$\|\nabla V(y')\|\le\|\nabla V(0)\|+L_V\|y'\|$.  Finally, the bounded mixed and
second derivatives of $k$ imply the analogous Lipschitz bound for
$\nabla_2k$. This, along with a symmetric argument exchanging the roles of $(x,y)$ and $(x',y')$, gives the upper bound on $\|B(x,y)-B(x',y')\|$.
The bound on $\|\nabla_1B(x,y)-\nabla_1B(x',y')\|$ follows similarly using higher derivative bounds for $k$. Combining these estimates proves \eqref{eq:B-weighted-y-lipschitz}.

\eqref{eq:velocity-time-holder} follows from applying \eqref{eq:B-weighted-y-lipschitz} to an optimal coupling of
\(\mu_t\) and \(\mu_s\), and using Cauchy--Schwarz inequality, the finite-time
second-moment bound, and \eqref{eq:mf-time-increment}. 

The asserted local spatial regularity
follows from \(V\in C^3\), the kernel assumptions, and differentiation of
the drift vector-fields, along with finite-time moment bounds.  Its linear growth was established in the
proof of Proposition~\ref{prop:wellposed}.
\end{proof}

The next lemma records the required RKHS regularity of the Stein witness and formally establishes $S_\pi(\pi)=0$.

\begin{lemma}[Well-definedness and regularity of the Stein witness]
\label{lem:witness-lipschitz}
Under Assumption~\ref{ass:core-target} and either kernel assumption,
\(\mathcal H^d\) is separable, the map
\(x\mapsto\xi_\pi(x)\) is strongly measurable, and
\begin{equation}\label{eq:witness-growth}
    \|\xi_\pi(x)\|_{\mathcal H^d}\le C(1+\|x\|).
\end{equation}
Consequently, \(S_\pi(\rho)\) is well defined for every
\(\rho\in\mathcal P_1(\mathbb R^d)\), and
\[
    S_\pi(\pi)=0.
\]
Additionally, under Assumption~\ref{ass:ksd-kernel}, one also has
\begin{equation}\label{eq:witness-lipschitz}
    \|\xi_\pi(x)-\xi_\pi(y)\|_{\mathcal H^d}
    \le C(1+\|y\|)\|x-y\|.
\end{equation}
\end{lemma}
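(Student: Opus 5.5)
The plan is to realize $\xi_\pi(x)$ explicitly through the feature map of $k$. For a scalar RKHS $\mathcal H$ with $k\in C^2$ (in fact $C^{(1,1)}$ suffices), the derivative reproducing property gives $\partial_{y_j}k(\cdot,y)\in\mathcal H$ with
\[
\langle \partial_{y_j}k(\cdot,y),f\rangle_{\mathcal H}=\partial_jf(y)
\]
and
\[
\|\partial_{y_j}k(\cdot,y)-\partial_{y_j}k(\cdot,y')\|_{\mathcal H}^2=\partial_{1j}\partial_{2j}k(y,y)-2\partial_{1j}\partial_{2j}k(y,y')+\partial_{1j}\partial_{2j}k(y',y').
\]
Separability of $\mathcal H$ follows because $\mathcal H$ is the closure of the span of $\{k(\cdot,y):y\in\mathbb Q^d\}$: by continuity of $k$, any $f$ orthogonal to this set satisfies $f(y)=\langle f,k(\cdot,y)\rangle=0$ for all rational $y$, hence for all $y$. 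Then $\mathcal H^d$ is separable as a finite product. Continuity of $x\mapsto k(\cdot,x)$ and of $x\mapsto\nabla_2k(\cdot,x)$ into $\mathcal H$ follows from the displayed norm identities and continuity of $k$ and $\nabla_{12}^2k$. Combined with continuity of $\nabla V$, this makes $\xi_\pi$ continuous, hence strongly measurable by Pettis's theorem.

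For the growth bound \eqref{eq:witness-growth}, I would use
\[
\|\xi_\pi(x)\|_{\mathcal H^d}\le \|k(\cdot,x)\|_{\mathcal H}\,\|\nabla V(x)\|+\|\nabla_2k(\cdot,x)\|_{\mathcal H^d},
\]
where $\|k(\cdot,x)\|^2=k(x,x)$ and $\|\nabla_2k(\cdot,x)\|^2_{\mathcal H^d}=\operatorname{tr}\nabla_{12}^2k(x,x)$. Both are bounded under either kernel assumption, and $\|\nabla V(x)\|\le\|\nabla V(0)\|+L_V\|x\|$. Bochner integrability against $\rho\in\mathcal P_1$ is then immediate, so $S_\pi(\rho)$ is well defined.

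The main step is $S_\pi(\pi)=0$, and the key point is that Bochner integrals commute with bounded functionals. Pair $S_\pi(\pi)$ with an arbitrary $f\in\mathcal H^d$ to get
\[
\langle f,S_\pi(\pi)\rangle=\int\{-f\cdot\nabla V+\nabla\cdot f\}\,d\pi=\int\nabla\cdot(f e^{-V})\,dx/Z.
\]
Every $f\in\mathcal H^d$ is $C^1$ with $f$ and $\nabla f$ bounded by $C\|f\|$, by the reproducing property and boundedness of $k$ and its mixed derivatives. I would insert a cutoff $\chi_R$, where the divergence theorem gives $\int\nabla\cdot(\chi_Rfe^{-V})=0$. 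The remainder is bounded by
\[
\int|\nabla\chi_R\cdot f|\,d\pi\le CR^{-1}\|f\|_\infty,
\]
which tends to $0$. Meanwhile the main term converges by dominated convergence, since $|\nabla V|+|\nabla\cdot f|$ is $\pi$-integrable because $\pi$ has a finite second moment. This is the ``cutoff Stein identity'' referred to earlier. Since the pairing vanishes for every $f$, we get $S_\pi(\pi)=0$.

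For \eqref{eq:witness-lipschitz} under Assumption~\ref{ass:ksd-kernel}, I would split
\[
\xi_\pi(x)-\xi_\pi(y)=-[k(\cdot,x)-k(\cdot,y)]\nabla V(y)-k(\cdot,x)[\nabla V(x)-\nabla V(y)]+[\nabla_2k(\cdot,x)-\nabla_2k(\cdot,y)].
\]
In the first term, $\|k(\cdot,x)-k(\cdot,y)\|^2=k(x,x)-2k(x,y)+k(y,y)$ is at most $C\|x-y\|^2$ by a second-order Taylor expansion using bounded $\nabla^2k$ together with symmetry (the first-order terms cancel). This gives $C(1+\|y\|)\|x-y\|$. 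The second term is at most $\sqrt{k(x,x)}\,L_V\|x-y\|$. For the third, the analogous squared norm involves $\partial_{1j}\partial_{2j}k$, and its second-order Taylor expansion requires fourth derivatives of $k$, which are bounded by $C_b^4$; this gives $C\|x-y\|$. The only delicate step is ensuring that the first-order Taylor terms cancel: the quantity is a symmetric second difference vanishing to second order on the diagonal, so it is bounded by $\sup\|\nabla^2_{(x,y)}\|\cdot\|x-y\|^2$.
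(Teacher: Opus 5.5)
Your proof is correct. For separability, strong measurability, the growth bound and $S_\pi(\pi)=0$, it is the paper's argument. That argument uses the density of the span of $k(\cdot,q)$, $q\in\mathbb Q^d$, and continuity of the feature maps via the squared-increment identities. It then uses $\|k(\cdot,x)\|_{\mathcal H}^2=k(x,x)$ and $\|\nabla_2k(\cdot,x)\|_{\mathcal H^d}^2=\operatorname{tr}\nabla_{12}^2k(x,x)$, and finally pairs $S_\pi(\pi)$ with an arbitrary element of $\mathcal H^d$ and removes a cutoff. Your three-term split for \eqref{eq:witness-lipschitz} is also exactly the paper's add-and-subtract of $-k(\cdot,x)\nabla V(y)$.

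The one genuine difference is how you bound the feature-map increments $\|k(\cdot,x)-k(\cdot,y)\|_{\mathcal H}$ and $\|\partial_{2,j}k(\cdot,x)-\partial_{2,j}k(\cdot,y)\|_{\mathcal H}$. The paper shows that $G_j(z)=\partial_{2,j}k(\cdot,z)$ is Fr\'echet differentiable as an $\mathcal H$-valued map, with $\|D_\ell G_j(z)\|_{\mathcal H}^2=\partial_{1,\ell}\partial_{1,j}\partial_{2,\ell}\partial_{2,j}k(z,z)$. It then applies the Hilbert-space fundamental theorem of calculus along the segment. You stay at the scalar level: the squared increment equals $m(x,x)-2m(x,y)+m(y,y)$, with $m=\partial_{1,j}\partial_{2,j}k$ (resp.\ $m=k$). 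You then Taylor-expand, and the first-order term cancels by symmetry of $m$.

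Your route is more elementary. It needs only the first-order derivative-reproducing property and ordinary calculus, not differentiability of a Hilbert-valued map. The paper's route is more economical in its quantitative input: it uses only the size of the balanced derivatives $\partial_{1,\ell}\partial_{1,j}\partial_{2,\ell}\partial_{2,j}k$ on the diagonal. Your expansion in $y$ also invokes derivatives such as $\partial_{2,\ell}\partial_{2,\ell'}\partial_{1,j}\partial_{2,j}k$ off the diagonal.

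Under $k\in C_b^4$ this makes no difference. If you want the leaner requirement within your own approach, write the second difference as $\int_0^1\int_0^1(x-y)^\top\nabla_{12}^2m(z_s,z_r)(x-y)\,ds\,dr$ with $z_s=y+s(x-y)$. This involves only mixed derivatives, with two in each argument of $k$.
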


\begin{proof}
Since \(\mathbb R^d\) is separable and \(k\) is continuous, the linear span
of \(k(\cdot,q)\), \(q\in\mathbb Q^d\), is dense in \(\mathcal H\).
Thus \(\mathcal H\), and hence \(\mathcal H^d\), is separable.
The derivative-reproducing identities give
\[
    \|k(\cdot,x)\|_{\mathcal H}^2=k(x,x),
    \qquad
    \|\partial_{2,j}k(\cdot,x)\|_{\mathcal H}^2
    =\partial_{1,j}\partial_{2,j}k(x,x).
\]
The corresponding difference identities and continuity of the mixed
derivatives show that \(x\mapsto k(\cdot,x)\) and
\(x\mapsto\nabla_2k(\cdot,x)\) are continuous as RKHS-valued maps.
Thus \(\xi_\pi\) is continuous, hence strongly measurable.  Boundedness of
\(k\) and \(\nabla_{12}^2k\), together with
\(\|\nabla V(x)\|\le \|\nabla V(0)\|+L_V\|x\|\), proves
\eqref{eq:witness-growth}.  This also proves Bochner integrability under
every \(\rho\in\mathcal P_1(\mathbb R^d)\).

We next prove the Stein identity.  Let
\(\chi_R\in C_c^\infty(\mathbb R^d;[0,1])\) satisfy
\[
    \chi_R=1\ \text{on }B_R,\qquad
    \chi_R=0\ \text{outside }B_{2R},\qquad
    \|\nabla\chi_R\|_\infty\le C/R.
\]
For \(\phi\in\mathcal H^d\), the reproducing identities imply
\[
    \|\phi\|_\infty+\|\nabla\cdot\phi\|_\infty
    \le C\|\phi\|_{\mathcal H^d}.
\]
Integration by parts against \(e^{-V(x)}dx\) gives
\[
    \int \chi_R
       \{\nabla\cdot\phi-\nabla V\cdot\phi\}\,d\pi
    =-\int \nabla\chi_R\cdot\phi\,d\pi.
\]
The right-hand side tends to zero as \(R\to\infty\).  The integrand on the
left is dominated by
\(C(1+\|x\|)\|\phi\|_{\mathcal H^d}\), which is integrable because
\(\pi\in\mathcal P_2(\mathbb R^d)\).  Dominated convergence therefore yields
\[
    \int\mathcal T_\pi\phi\,d\pi=0.
\]
By the reproducing property and Bochner integrability,
\[
    \langle S_\pi(\pi),\phi\rangle_{\mathcal H^d}
    =\int\mathcal T_\pi\phi\,d\pi=0
\]
for every \(\phi\in\mathcal H^d\), proving \(S_\pi(\pi)=0\).

It remains to prove \eqref{eq:witness-lipschitz}.  Under
Assumption~\ref{ass:ksd-kernel}, the derivative-reproducing identities and
\(k\in C_b^4\) give
\[
    \|k(\cdot,x)-k(\cdot,y)\|_{\mathcal H}
    \le C\|x-y\|,
\]
and
\[
    \|\nabla_2k(\cdot,x)-\nabla_2k(\cdot,y)\|_{\mathcal H^d}
    \le C\|x-y\|.
\]
For completeness, fix $1\le j\le d$ and set
\[
    G_j(z):=\partial_{2,j}k(\cdot,z)\in\mathcal H.
\]
The derivative-reproducing calculus implies that $G_j$ is continuously
Fr\'echet differentiable as an \(\mathcal H\)-valued map and
\[
    D_\ell G_j(z)
    =\partial_{2,\ell}\partial_{2,j}k(\cdot,z).
\]
Moreover, the derivative-reproducing identity gives
\[
    \|D_\ell G_j(z)\|_{\mathcal H}^2
    =\partial_{1,\ell}\partial_{1,j}
      \partial_{2,\ell}\partial_{2,j}k(z,z)
    \le C
\]
uniformly in $z$, by $k\in C_b^4$.  The Hilbert-space fundamental
theorem of calculus along the segment from $x$ to $y$ therefore yields
\[
    \|G_j(x)-G_j(y)\|_{\mathcal H}\le C\|x-y\|.
\]
Summing over $j$ proves the second display above.  The same argument with
$z\mapsto k(\cdot,z)$, using bounded second mixed derivatives, proves the
first display.  Finally, adding and subtracting
\(-k(\cdot,x)\nabla V(y)\), and using the bounded Hessian of \(V\), gives
\eqref{eq:witness-lipschitz}.
\end{proof}

\printbibliography

\end{document}